\newtheorem{remark}{Remark}%
\newtheorem{definition}{Definition}
\declaretheorem[name=Lemma]{lemma}
\declaretheorem[name=Proposition]{proposition}
\begin{document}

\title{Preference-Guided Reinforcement Learning for Efficient Exploration}

\author{Guojian~Wang,
        Jianxiang~Liu,
        Xinyuan~Li,
        Faguo~Wu,
        Xiao~Zhang,
        Tianyuan~Chen,
        and~Xuyang~Chen,
\thanks{Manuscript received July 10, 2024.}
\thanks{Xiao Zhang and Faguo Wu are the corresponding authors (e-mail: xiao.zh@buaa.edu.cn, faguo@buaa.edu.cn).}
}

\markboth{Journal of \LaTeX\ Class Files,~Vol.~14, No.~8, August~2021}%
{Shell \MakeLowercase{\textit{et al.}}: A Sample Article Using IEEEtran.cls for IEEE Journals}


\maketitle

\begin{abstract}
In this paper, we investigate preference-based reinforcement learning (PbRL), which enables reinforcement learning (RL) agents to learn from human feedback. This is particularly valuable when defining a fine-grain reward function is not feasible. However, this approach is inefficient and impractical for promoting deep exploration in hard-exploration tasks with long horizons and sparse rewards. To tackle this issue, we introduce LOPE: \textbf{L}earning \textbf{O}nline with trajectory \textbf{P}reference guidanc\textbf{E}, an end-to-end preference-guided RL framework that enhances exploration efficiency in hard-exploration tasks. Our intuition is that LOPE directly adjusts the focus of online exploration by considering human feedback as guidance, thereby avoiding the need to learn a separate reward model from preferences. Specifically, LOPE includes a two-step sequential policy optimization technique consisting of trust-region-based policy improvement and preference guidance steps. We reformulate preference guidance as a trajectory-wise state marginal matching problem that minimizes the maximum mean discrepancy distance between the preferred trajectories and the learned policy. Furthermore, we provide a theoretical analysis to characterize the performance improvement bound and evaluate the effectiveness of the LOPE. When assessed in various challenging hard-exploration environments, LOPE outperforms several state-of-the-art methods in terms of convergence rate and overall performance.
\end{abstract}

\begin{IEEEkeywords}
Deep reinforcement learning, hard exploration, trajectory preference, state marginal matching, two-step policy optimization.
\end{IEEEkeywords}

\section{Introduction}
\IEEEPARstart{D}{eep} reinforcement learning (DRL) has recently successfully solved challenging problems through trial and error~\cite{mnih2015human,mnih2016asynchronous,silver2016mastering}. However, many real-world tasks involve long horizons and poorly defined goals and specifying a suitable reward function in these tasks is difficult~\cite{Jeon2020RewardrationalC,HadfieldMenell2017InverseRD}. Such hard-exploration tasks remain exceedingly challenging for DRL~\cite{yang2021exploration,guo2020memory}. It is difficult for agents to explore efficiently and obtain highly rewarded trajectories in such environments~\cite{yang2021exploration}. This realistic problem further imposes an extreme urgency for efficient exploration, and overcoming this drawback can considerably expand the possible impact of RL.

An alternative approach to mitigate the sparsity of reward feedback is preference-based reinforcement learning (PbRL)~\cite{wirth2016model, abdelkareem2022advances}. In PbRL, instead of directly receiving the instant reward information on each encountered state-action pair, the agent only obtains 1-bit preference feedback for each state-action pair or trajectory from a human overseer~\cite{Chen2022HumanintheloopPE,kang2023beyond}. To acquire a dense reward function, many prior PbRL methods employ specialized models to learn separate reward functions from human preferences and then train policies using these learned reward functions~\cite{christiano2017deep,lee2021pebble}. Such a method suggests that the agent is instructed to act optimally indirectly. Furthermore, it increases model complexity and in-explainability, creating a potential information bottleneck. 

Most existing PbRL approaches do not consider scaling to hard-exploration tasks~\cite{lee2021pebble,kang2023beyond}. Although learning a separate reward function helps in solving the sparse reward problem, continuously enforcing such preference-based rewards throughout the entire training phase cannot guarantee deep exploration in hard-exploration tasks. Only a few studies encourage exploration based on uncertainty in learned reward functions for PbRL algorithms~\cite{liang2022reward}. However, we have no guarantee of achieving deep exploration with this method in hard-exploration tasks with long horizons and large state spaces. Therefore, scaling PbRL to hard-exploration tasks can fill this research gap in RL.

In this study, we propose an online RL algorithm called \textbf{L}earning \textbf{O}nline with trajectory \textbf{P}reference guidanc\textbf{E} (LOPE). This end-to-end framework jointly encourages exploration and directly learns the optimal control policy by regarding human preferences as guidance. LOPE is designed as an end-to-end framework that directly updates policies based on human preferences and environmental feedback, without requiring an explicit reward model or a separate reward learning phase, as shown in Fig~\ref{fig:lope_framework}. Each training iteration consists of two tightly coupled components: a trust-region-based policy improvement step and a preference-guided exploration mechanism. These components are executed jointly in every iteration and together form a unified policy optimization process. 
Specifically, the first step involves employing a trust-region-based approach to generate a candidate policy based on environmental rewards. In the second step, we introduce a trajectory-based distance between policies based on the maximum mean discrepancy (MMD) and reformulate preference guidance as a trajectory-wise state marginal matching problem. We then show that this trajectory-wise state marginal matching problem can be transformed into a policy-gradient algorithm with shaped rewards learned from human preferences. Furthermore, this approach enables us to provide theoretical performance guarantees for performance improvement while achieving outstanding performance in various benchmark tasks. Extensive experimental results demonstrate the superior performance of LOPE compared to competitive baselines.

Our contributions are summarized as follows: (1) LOPE: a simple, efficient, and end-to-end PbRL approach that addresses the exploration difficulty by regarding human preferences as guidance; (2) a preference-guided trajectory-wise state marginal matching optimization objective and a unified two-step optimization technique in each policy iteration; (3) no requirement for additional neural networks; (4) analytical performance guarantees of LOPE.

The remainder of this paper is organized as follows. Section~\ref{sec:related_work} introduces recent impressive studies of relevant areas. Section~\ref{sec:Preliminaries} briefly describes the important background knowledge. Then, Section~\ref{sec:method} introduces the LOPE approach in detail. Section~\ref{sec:thm} analyzes the performance improvement bound of LOPE. The experimental setups are presented in Section~\ref{sec:setup}. Section~\ref{sec:results} presents the experimental results of LOPE. Finally, Section~\ref{sec:conclusion} summarizes the main conclusion of this study.
\begin{figure}
    \centering
    \includegraphics[scale=0.50]{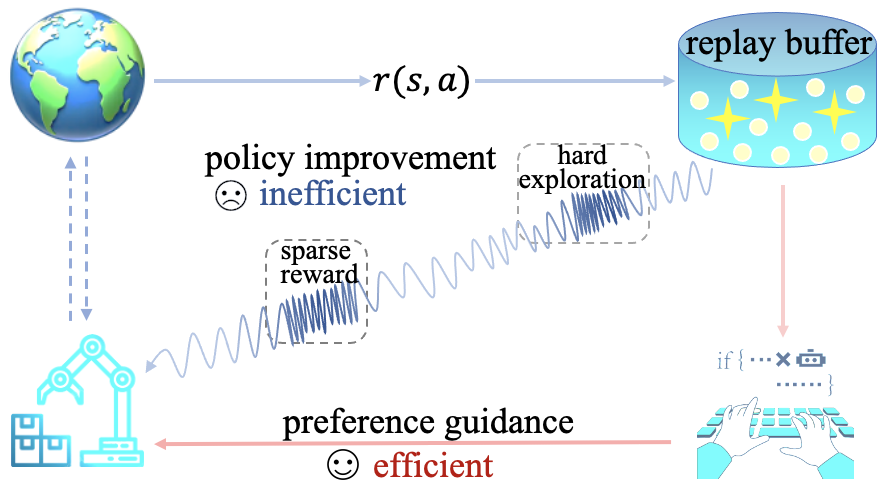}
    \caption{Illustration of our method. Due to the hard exploration and sparse reward, the traditional policy improvement is inefficient. In this work, we propose an end-to-end preference-guided RL framework to achieve efficient exploration.}
    \label{fig:lope_framework}
\end{figure}

\section{Related work}
\label{sec:related_work}
\subsection{Preference-based RL}
Preference-based RL is viewed as human-in-the-loop reinforcement learning, as it learns from human feedback~\cite{wirth2017survey}. Notably, recent studies have proposed using transformer-based models to better capture preference dynamics~\cite{kim2023preference}, learned dynamics models to guide preference-efficient policy learning~\cite{liu2023efficient}, and population-based mechanisms for improved exploration in preference space~\cite{driss2025pb}. Unlike our method, these approaches employ more complex structures to achieve preference modeling and efficient exploration. Overall, there are three main types of preference feedback in the recent PbRL work. The first category of human preference indicates the preferred states between different state pairs, where actions that are better than all available actions exist in the other state~\cite{wirth2017survey}. Second, action preferences are defined on the action space to determine the better action in a given state~\cite{Sugiyama2012PreferencelearningBI,Daniel2015ActiveRL,Asri2016ScorebasedIR,Arumugam2019DeepRL,wirth2016model}. Lastly, trajectory preferences compare the human feedback between trajectory pairs and specify the preferred trajectories over the other ones~\cite{christiano2017deep,Chen2022HumanintheloopPE}. 

Trajectory preferences are the most popular form among the three types of human feedback and are the main focus of many research~\cite{kang2023beyond}. Recently, some impressive work introduced some theoretical results about PbRL with trajectory feedback, which mainly concentrates on the tabular RL setting\cite{Novoseller2019DuelingPS,Xu2020PreferencebasedRL,Pacchiano2021DuelingRR}. ~\cite{kang2023beyond} proposes an end-to-end offline preference-guided RL algorithm to avoid the potential information bottleneck. SURF infers the pseudo-labels of unlabeled samples based on the confidence of the preference predictor for reward learning~\cite{parksurf}. However, these approaches often require learning a separate reward function corresponding to human feedback or an additional model to encode preference information, resulting in a higher computational burden and unstable training. 

\subsection{Learn from Demonstrations (LfD)}
The objective of LfD is to enhance the policy exploration ability of the agent and accelerate learning by combining RL with expert demonstration experience~\cite{schaal1996learning}. Recently, various works have focused on the research field of LfD. \cite{hester2018deep} uses expert demonstrations to design an auxiliary margin loss for a pre-trained deep Q-function. \cite{rajeswaranlearning} shows that model-free DRL algorithms can scale to high-dimensional dexterous manipulation by behavior cloning from a few human expert demonstrations. AWAC accelerates learning of online RL by leveraging a prodigious amount of offline demonstration data with associated rewards~\cite{nair2020awac}. LOGO exploits offline and imperfect demonstrations generated by a sub-optimal policy to achieve faster and more efficient online RL policy optimization~\cite{rengarajanreinforcement}. Self-imitation learning (SIL) methods train the agent to imitate its past self-generated experience only when the return of the trajectory exceeds the value function of a certain threshold~\cite{oh2018self,gangwani2019learning,libardi2021guided}. However, LfD methods place a high demand on high-quality demonstrations, and the performance of these methods is limited by the expert policy that generated the demonstration data.

\subsection{RL with Once-per-episode Feedback}
The goal of RL with once-per-episode feedback is to design a new paradigm that addresses the lack of reward functions in various realistic scenarios where a reward is only revealed at the end of the episode. \cite{efroni2021reinforcement} propose to use an online linear bandit algorithm~\cite{abbasi2011improved} to directly estimate the underlying Markov reward function and introduce a hybrid optimistic-Thompson Sampling method with a $\sqrt{K}$ regret. Meanwhile, \cite{chatterji2021theory} abandons the Markov reward assumption to provide an efficient algorithm based on the recent work~\cite{russac2021self}, obtaining its regret bounds with the UCBVI algorithm~\cite{azar2017minimax}. ~\cite{xu2022provably} proposes the PARTED algorithm that decomposes the trajectory-wise return into a reward sequence for every state-action pair in the trajectory and then performs a pessimistic value function update using the learned rewards. Unlike this, our method utilizes human feedback to guide agents in exploring environments efficiently and can be applied to more complex scenes.

\section{Preliminaries}
\label{sec:Preliminaries}
\subsection{Reinforcement Learning}
We formulate the reinforcement learning problem as a discrete-time Markov decision process (MDP)~\cite{sutton2018reinforcement} described by a tuple $(\mathcal{S}, \mathcal{A}, r, P, \rho_0, \gamma)$. Here, $\mathcal{S}$ and $\mathcal{A}$ denote the spaces of states and actions, respectively. $\gamma \in (0,1)$ is the discount factor. A stochastic policy $\pi(a_t \vert s_t)$ defines an action probability distribution conditioned by $s_t$. At each time step $t$, given an action $a_t$ from $\pi$, the agent samples the next state $s_{t+1}$ from the transition probability distribution $P$, i.e., $s_{t+1} \sim P(s_{t+1}\vert s_t, a_t)$, and it receives a reward $r(s_t, a_t)$ determined by the reward function $r: \mathcal{S}\times\mathcal{A} \rightarrow \mathbb{R}$. $\rho_0$ is the initial state distribution. The objective function is expressed as the expectation of discounted cumulative rewards:
\begin{equation}
  \label{eq:J_return}
  J(\pi) = \mathbb{E}_{\tau\sim\pi}\left[R(\tau)\right].
\end{equation}

Here, $\tau$ denotes a trajectory $\{s_0, a_0, s_1, a_1, \dots\}$, $R(\tau)$ is the sum of the discounted cumulative rewards over $\tau$, $R(\tau)=\sum_{k=0}^\infty \gamma^k r(s_{t+k}, a_{t+k})$. When $\gamma<1$, a discounted state visitation distribution $d_\pi$ can be given as: $d_\pi(s) = (1-\gamma)\sum_{t=0}^{\infty}\gamma^t \mathbb{P}(s_t=s\vert \pi)$, where $\mathbb{P}(s_t=s\vert \pi)$ denotes the probability of $s_t=s$ with respect to the randomness induced by $\pi$, $P$ and $\rho_0$.

\subsection{PbRL with Trajectory Preferences}
In this study, we consider a PbRL framework with trajectory preferences. Unlike the standard RL problem setting, where the agent can receive per-step proxy rewards, we do not assume that such instantaneous rewards are accessible. Instead, the agent can only obtain trajectory preferences revealed by an annotator during the early training process. Given these trajectory preferences, suppose that there is a reward function $r_\psi(s_t, a_t)$ that coincides with these human preferences, the goal of PbRL algorithms is to learn a policy $\pi(a_t\vert s_t)$ to maximize the expected (discounted) cumulative rewards $J_{r_\psi}$. To achieve this goal, vast amounts of previous work have learned a reward function that incorporates these human preferences~\cite{christiano2017deep,ibarz2018reward,stiennon2020learning,wu2021recursively}, which is intended to be consistent with the expert preferences. In contrast, our method directly extracts the policy from these trajectory preferences.

\subsection{State Marginal Matching}
State marginal matching (SMM) has been recently studied as an alternative problem in RL~\cite{lee2019efficient,ghasemipour2020divergence,furuta2021generalized}. Different from the RL objective to maximize the expected return, the SMM objective is to find a policy whose state marginal distribution $\rho_\pi(s)$ minimizes the divergence $D$ to a given target distribution $\rho^\ast(s)$:
\begin{equation}
\mathcal{L}_{\mathrm{SMM}} (\pi) = -D(\rho_\pi(s), p^\ast(s)),
\end{equation}
where $D$ is a divergence measure such as Kullback-Leibler (KL) divergence~\cite{lee2019efficient} and Jensen-Shannon (JS) divergence~\cite{sharma2022state}. For the target distribution $\rho^\ast(s)$, \cite{lee2019efficient} utilizes a uniform distribution to improve exploration in the entire state space; some other approaches leverage imitation learning to match a small set of expert demonstrations~\cite{sharma2022state,ghasemipour2020divergence,kostrikov2019imitation}. 

\subsection{Maximum Mean Discrepancy}
The maximum mean discrepancy (MMD) is an integral probability metric that can be used to measure the difference (or similarity) between two different probability distributions~\cite{gretton2012optimal,masood2019diversity,wang2024trajectory}. Suppose that $p$ and $q$ are the probability distributions defined on a nonempty compact metric space $\mathbb{X}$. Let $x$ and $y$ be observations sampled independently from $p$ and $q$, respectively. Given a Reproducing Kernel Hilbert Space (RKHS) $\mathcal{H}$, the MMD metric is given as:
%
\begin{equation}
  \label{eq:MMD_rkhs}
  \begin{aligned}
    \mathrm{MMD}^2(p, q, \mathcal{H}) = \mathbb{E}[k(x, x^\prime)] - 2\mathbb{E}[k(x, y)] + \mathbb{E}[k(y, y^\prime)],
  \end{aligned}
\end{equation}
where $x, x^\prime\ \mathrm{i.i.d.}\sim p$ and $y, y^\prime\ \mathrm{i.i.d.} \sim q$. Eq.~\eqref{eq:MMD_rkhs} is tractable to estimate the MMD distance between $p$ and $q$ using finite samples because $\mathcal{H}$ only has a kernel function.



\section{Learning Online with Trajectory Preference Guidance}
\label{sec:method}
This section presents a novel online PbRL algorithm called \textbf{L}earning \textbf{O}nline with trajectory \textbf{P}reference guidanc\textbf{E} (LOPE). Our primary objective is to utilize trajectory preferences as guidance for direct policy optimization in hard-exploration tasks, thereby avoiding the potential information bottleneck that arises from learning separate reward functions from these preferences. We first introduce a novel preference-guided trajectory-wise state marginal matching optimization objective by reformulating a constrained optimization problem. Then, we develop a two-step policy optimization framework by integrating a policy improvement step with an additional preference guidance step.

At every point in time, our method maintains a control policy $\pi: \mathcal{S}\rightarrow\mathcal{A}$ parameterized by deep neural networks, and this policy is updated by the following three processes:
\begin{enumerate}
  \item The agent interacts with the environment by the policy $\pi$ and collects a set $\mathcal{B}$ of trajectories $\{\tau^1, \dots, \tau^K\}$. 
  \item Policy improvement (PI): We calculate the policy gradient based on environmental rewards and update policy parameters to increase the expected return.
  \item Preference guidance (PG): We compute the preference-guided gradient and update the current policy toward human preferences.
  \item The agent sends the set of trajectories $\mathcal{B}$ to the annotator for a comparison with trajectories in the preferred-trajectory set $\mathcal{P}$ and updates $\mathcal{P}$.
\end{enumerate}

\subsection{Human-in-loop: Trajectory-Wise State Marginal Matching}
\label{subsec:traj-preference-guided}
Many previous works on PbRL have explicitly learned reward functions that coincide with human preferences. Alternatively, we aim to perform direct policy optimization using a preference-labeled dataset. Inspired by previous research~\cite{lee2019efficient,ghasemipour2020divergence}, we propose reforming preference guidance as a trajectory-wise state marginal matching problem. Specifically, we define the divergence $d(\tau, \upsilon)$ between state visitation distributions of trajectories $\tau$ and $\upsilon$. Then, based on $d(\cdot, \cdot)$, we obtain the following trajectory-wise state marginal matching objective between $\pi$ and $\pi_b$:
%
\begin{equation}
  \label{eq:mmd_objective}
  \begin{aligned}
  \mathcal{L}_{\mathrm{TW}}(\theta)= \underset{\tau \sim \rho_\theta}{\mathbb{E}}\left[\underset{\upsilon \sim \mathcal{P}}{\mathbb{E}} \left[d(\tau, \upsilon)\right]\right],
  \end{aligned}
\end{equation}
where $\rho_\theta$ is a distribution induced by $\pi_\theta$ over the trajectory space. 

Compared with previous studies of state marginal matching~\cite{lee2019efficient,ghasemipour2020divergence}, our approach emphasizes trajectory-level alignment. During training, many state-action pairs are concentrated near the starting point, and the agent only visits novel regions on rare occasions. Hence, due to the low data proportion, these novel state-action pairs have little impact on policy optimization when the SMM objective only considers the divergence between different state visitation distributions. Our approach addresses this problem by introducing a trajectory-wise SMM objective, which encourages the agent to engage in deep exploration.

Now the focus of the problem comes to the definition of $d(\cdot,\cdot)$. Inspired by previous research~\cite{masood2019diversity,Wang2023LearningDP}, a trajectory can be considered a deterministic policy, and the distance between them can be defined as the MMD between their state visitation distributions. Therefore, $d(\cdot,\cdot)$ can be expressed as:
\begin{equation}
  \label{equ:MMD_traj}
  \begin{aligned}
  d(\tau, \upsilon) = &\underset{s_\tau, s_\tau^\prime \sim \rho_\tau}{\mathbb{E}} \left[k\left(s_\tau, s_\tau^\prime\right)\right] 
  - 2\underset{\begin{subarray}{c}s_\tau \sim \rho_\tau \\ s_\upsilon \sim \rho_\upsilon\end{subarray}}{\mathbb{E}} \left[k(s_\tau, s_\upsilon)\right] \\
  &+ \underset{s_\upsilon, s_\upsilon^\prime \sim \rho_\upsilon}{\mathbb{E}} \left[k(s_\upsilon, s_\upsilon^\prime)\right] 
  \end{aligned}
\end{equation}
where $\rho_\tau$ and $\rho_\upsilon$ are the state visitation distributions of $\tau$ and $\upsilon$. 
To guarantee the stability of policy optimization, we further adopt the trust-region-based optimization method similar to TRPO~\cite{schulman2015trust}: 

%
\begin{equation}
  \begin{aligned}
    &\pi_{new} = \underset{\pi_\theta}{\operatorname{arg\,min}} \ \mathcal{L}_{\mathrm{TW}}(\theta) \\
    &\text{s.t.}\ D_{\rm KL}(\pi_\theta, \pi_{old}) \le \delta.
  \end{aligned}
  \label{eq:constraint}
\end{equation}

Intuitively, this preference-guided objective function aids learning by orienting the agent's policy toward human feedback. Specifically, our approach finds a new policy $\pi_{new}$ that minimizes the expected trajectory-wise MMD distance to $\pi_b$, while ensuring $\pi_{new}$ lies inside the trust region around $\pi_{old}$. This preference-guided policy optimization is the key distinguishing LOPE from other PbRL approaches. 
This method equips LOPE with two unique advantages over other state-of-the-art algorithms.

First, many previous PbRL studies explicitly learn separate reward functions and then employ off-the-shelf RL algorithms to train a control policy. In contrast, LOPE considers preference-labeled trajectories as guidance and reformulates a novel constrained optimization problem, as shown in Eq.~\eqref{eq:constraint}. In this manner, LOPE effectively avoids the potential information bottleneck that occurs when conveying information from preferences to the policy via scalar rewards. 

Second, LOPE learns a policy matching the trajectory-wise state visitation distribution of preference-labeled experiences. Consequently, LOPE enables the agent to quickly expand the scope of exploration along the preference-labeled trajectories in $\mathcal{P}$ and overcome the challenging exploration difficulty in online RL settings. Fundamentally, the proposed approach enables us to explore the available preference data more comprehensively, including position information, thereby providing more significant assistance to RL algorithms.

\subsection{Two-step Policy Optimization Framework}
This section presents a two-step policy optimization framework that provides exploration aid and accelerates learning for the agent. Formally, the policy $\pi_\theta$ is updated by the following process:

\textbf{Step 1: Policy Improvement} 

LOPE performs one-step policy improvement by optimizing the standard trust-region-based objective function~\cite{schulman2015trust}. This can be expressed as follows: 
\begin{equation}
  \label{eq:policy_improvement}
  \begin{aligned}
  &\pi_{k+\frac{1}{2}} = \underset{\pi_\theta}{\operatorname{arg\,max}} \ \underset{\begin{subarray}{l} s \sim d_k \\ a \sim \pi_\theta \end{subarray}}{\mathbb{E}} \left[A_{k}(s,a)\right], \\
    &\text{s.t.}\ D_{\rm KL}(\pi_\theta, \pi_{k}) \le \delta.
  \end{aligned}
\end{equation}

Here, $d_k$ is the discounted state visitation distribution corresponding to $\pi_k$, and $A_{k}(s, a)$ is the advantage of $\pi_k$ in the state-action pair $(s, a)$. The trust-region-based update in Eq.~\eqref{eq:policy_improvement} is to find a new policy $\pi_{k+\frac{1}{2}}$ that maximizes the RL objective while ensuring this new policy stays within the trust region around $\pi_k$. From an implementation perspective, off-the-shelf algorithms can solve this optimization problem, such as TRPO~\cite{schulman2015trust} or PPO~\cite{schulman2017proximal}. 

\textbf{Step 2: Preference Guidance}

Many RL approaches achieve efficient online policy optimization in dense reward settings. Nonetheless, they may often fail due to the sparsity of reward functions. It is a forlorn hope to achieve significant performance improvement with sparse-reward feedback in the initial training stage. We propose overcoming this difficulty in exploration by using preference as guidance and adjusting policy optimization directly. The preference guidance step is given as follows:
\begin{equation}
  \label{eq:perference_guidance}
  \begin{aligned}
    &\pi_{k+1} = \underset{\pi_\theta}{\operatorname{arg\,min}} \ \underset{\tau \sim \rho_\theta}{\mathbb{E}}\left[\underset{\upsilon \sim \mathcal{P}}{\mathbb{E}} \left[d(\tau, \upsilon)\right]\right], \\
    &\text{s.t.}\ D_{\rm KL}(\pi_\theta, \pi_{k+\frac{1}{2}}) \le \delta. 
  \end{aligned}
\end{equation}

This preference guidance step is derived from Eq.~\eqref{eq:constraint}. We solve Eq.~\eqref{eq:perference_guidance} by introducing a novel policy gradient. Instead of directly computing the gradient update direction to maximize Eq.~\eqref{eq:perference_guidance} as~\cite{masood2019diversity}, we seek to solve an equivalent optimization problem, which is obtained by expanding the objective function in Eq.~\eqref{eq:perference_guidance}. This divergence minimization problem can be reduced to a policy-gradient algorithm with shaped rewards computed from $\mathcal{P}$.

Noting that the equation of MMD in Eq.~\eqref{equ:MMD_traj} can be rewritten as:
\begin{equation}
\label{eq:MMD_traj_two}
  \begin{aligned}
  d(\tau, \upsilon) = &\underset{s_\tau \sim \rho_\tau}{\mathbb{E}}\Big[\underset{s_\tau^\prime \sim \rho_\tau}{\mathbb{E}} \left[k\left(s_\tau, s_\tau^\prime\right)\right] - 2\underset{s_\upsilon \sim \rho_\upsilon}{\mathbb{E}} \left[k(s_\tau, s_\upsilon)\right] \\
  &+ \underset{s_\upsilon, s_\upsilon^\prime \sim \rho_\upsilon}{\mathbb{E}} \left[k(s_\upsilon, s_\upsilon^\prime)\right]\Big].
  \end{aligned}
\end{equation}
Hence, the definition of $d(\cdot, \cdot)$ can be regarded as taking expectations of the distance from the state $s_\tau$ to the trajectory $\upsilon$:
\begin{equation}
\label{eq:MMD_traj_point}
  \begin{aligned}
  d(\tau, \upsilon) = &\underset{s_\tau \sim \rho_\tau}{\mathbb{E}}\left[\operatorname{dist}(s_\tau, \upsilon)\right],
  \end{aligned}
\end{equation}
where $\operatorname{dist}(s_\tau, \upsilon)$ is computed as:
\begin{equation}
\label{eq:dist}
\begin{aligned}
\operatorname{dist}(s_\tau, \upsilon) = &\underset{s_\tau^\prime \sim \rho_\tau}{\mathbb{E}} \left[k\left(s_\tau, s_\tau^\prime\right)\right] - 2\underset{s_\upsilon \sim \rho_\upsilon}{\mathbb{E}} \left[k(s_\tau, s_\upsilon)\right]\\
&+ \underset{s_\upsilon, s_\upsilon^\prime \sim \rho_\upsilon}{\mathbb{E}} \left[k(s_\upsilon, s_\upsilon^\prime)\right]
\end{aligned}
\end{equation}
\begin{restatable}[Preference guidance]{lemma}{mainlem}
\label{lem:pref_guide}
  Let $r_g(s, a)$ denote the preference guidance-based rewards, and it is expressed as:
  \begin{equation}
    r_g(s, a)=\underset{\begin{subarray}{c} \upsilon\in\mathcal{P}\end{subarray}}{\mathbb{E}} \left[\mathrm{dist}(s, \upsilon)\right].
  \end{equation}
  $\operatorname{dist}(\cdot, \cdot)$ is defined in Eq.~\eqref{eq:dist}. Then, Eq.~\eqref{eq:perference_guidance} can be expanded as follows:
  \begin{equation}
    \label{eq:expanded_perference_guidance}
    \begin{aligned}
      &\pi_{k+1} = \underset{\pi_\theta}{\operatorname{arg\,min}} \ \underset{\begin{subarray}{l} s \sim d_\theta \\ a \sim \pi_\theta \end{subarray}}{\mathbb{E}} \left[r_g(s,a)\right], \\
      &\text{s.t.}\ D_{\rm KL}(\pi_\theta, \pi_{k+\frac{1}{2}}) \le \delta.
    \end{aligned}
  \end{equation}
  Here, $d_\theta$ is the discounted state visitation distribution defined in Section~\ref{sec:Preliminaries}
\end{restatable}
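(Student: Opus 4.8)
The plan is to start from the objective in Eq.~\eqref{eq:perference_guidance}, namely $\underset{\tau \sim \rho_\theta}{\mathbb{E}}\big[\underset{\upsilon \sim \mathcal{P}}{\mathbb{E}}[d(\tau,\upsilon)]\big] = \underset{\tau \sim \rho_\theta}{\mathbb{E}}[\mathrm{dist}(\tau,\mathcal{P})]$, and rewrite it as an expectation over state-action pairs so that it takes the standard policy-gradient form. First I would observe that $\mathrm{dist}(\tau,\mathcal{P})$ is a functional of the whole trajectory $\tau$, so I need to ``credit'' this trajectory-level quantity to the individual $(s,a)$ pairs that $\tau$ traverses. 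The natural device is to write $\mathrm{dist}(\tau,\mathcal{P})$ as an average (or sum) of per-step contributions and then push the trajectory expectation through. Concretely, since a trajectory visits a collection of state-action pairs, one can express $\underset{\tau\sim\rho_\theta}{\mathbb{E}}[\mathrm{dist}(\tau,\mathcal{P})]$ as $\underset{\tau\sim\rho_\theta}{\mathbb{E}}\big[\sum_t \gamma^t g(s_t,a_t;\tau)\big]$ for an appropriate per-step term, and then interchange the order of summation/expectation to land on $\underset{s\sim d_\theta,\, a\sim\pi_\theta}{\mathbb{E}}[r_g(s,a)]$, where $r_g(s,a)$ collects, for each $(s,a)$, the expected distance-to-$\mathcal{P}$ over exactly those preferred trajectories containing $(s,a)$ — which is precisely the definition $r_g(s,a)=\underset{\tau\in\mathcal{P}_{s,a}}{\mathbb{E}}[\mathrm{dist}(\tau,\mathcal{P})]$.

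The key steps, in order, are: (1) expand $d(\tau,\upsilon)$ using Eq.~\eqref{equ:MMD_traj} and note that the first term $\underset{s_\tau,s_\tau'\sim\rho_\tau}{\mathbb{E}}[k(s_\tau,s_\tau')]$ and the cross term are the only ones depending on $\tau$; (2) recognize each $\rho_\tau$-expectation of a kernel quantity as a sum over the states of $\tau$ weighted by the (discounted) occupancy, so that $\mathrm{dist}(\tau,\mathcal{P})$ becomes a state-additive functional of $\tau$; (3) take the expectation over $\tau\sim\rho_\theta$ and invoke the standard identity that $\underset{\tau\sim\rho_\theta}{\mathbb{E}}\big[\sum_t\gamma^t f(s_t,a_t)\big] = \tfrac{1}{1-\gamma}\,\underset{s\sim d_\theta,\,a\sim\pi_\theta}{\mathbb{E}}[f(s,a)]$; (4) identify the resulting integrand with $r_g(s,a)$, absorbing the constant factor and any $\upsilon$-only terms (which are independent of $\theta$ and hence irrelevant to the $\arg\min$) into the constant; (5) note the KL trust-region constraint is untouched by this rewriting, so the two optimization problems have the same feasible set and the same minimizer.

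The main obstacle I expect is step (2)–(4): making the ``credit assignment'' rigorous. The statement implicitly identifies the trajectory distribution $\rho_\theta$ with the discounted occupancy $d_\theta$, and writes $r_g$ as an average over $\mathcal{P}_{s,a}$ rather than over $\rho_\theta$-induced visits — so one must be careful about which distribution the $(s,a)$ pair is sampled from on each side, and about whether the correspondence is an exact equality or holds only up to a $\theta$-independent additive constant (which suffices for the $\arg\min$). I would also need to check that the set $\mathcal{P}_{s,a}$ being empty for most $(s,a)$ is handled consistently (those pairs contribute zero), and that discounting is treated uniformly when turning trajectory-level kernel expectations into occupancy-weighted sums. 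Everything else — the expansion of the MMD, dropping $\theta$-independent terms, and reusing the trust region — is routine, so I would state the reduction and relegate the occupancy-measure bookkeeping to a short lemma or a remark.
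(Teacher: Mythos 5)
Your proposal follows essentially the same route as the paper: rewrite the trajectory-level objective $\mathbb{E}_{\tau\sim\rho_\theta}\left[\mathrm{dist}(\tau,\mathcal{P})\right]$ as an occupancy-measure expectation $\mathbb{E}_{s\sim d_\theta,\,a\sim\pi_\theta}\left[r_g(s,a)\right]$ and substitute it into the constrained problem while leaving the KL trust region untouched. Be aware, however, that the paper's own proof simply asserts this identity in a single line without any of the credit-assignment bookkeeping you flag --- the non-additivity of the MMD self-term $k(s_\tau,s_\tau^\prime)$, the $1/(1-\gamma)$ normalization, and the mismatch between averaging over $\mathcal{P}_{s,a}$ versus over current-policy trajectories passing through $(s,a)$ --- so those concerns, which you correctly identify as the crux, are not resolved by the paper either.
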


We provide the proof in Appendix~\ref{sec:proof_lem1}. Compared to offline demonstrations as guidance~\cite{rengarajanreinforcement}, our approach considers the human-in-the-loop setting where the agent obtains preference-labeled trajectories by interacting with the expert annotator. Therefore, although sparse rewards are unavailable to the agent during the initial training phase, LOPE can guide policy optimization by capturing more task-related information from human feedback. On the other hand, if the support sets of $\pi$ and $\pi_b$ are far apart or do not overlap at all, then the value of the KL divergence is meaningless. In this case, the method of~\cite{rengarajanreinforcement} does not work, but we provide a feasible approach to overcome the gradient vanishing problem. The entire procedure of LOPE is summarized in Algorithm~\ref{algo:lope}. Hence, compared with the result of Lemma 1 in the literature~\cite{rengarajanreinforcement}, our result of Lemma~\ref{lem:assumption-improvement} has better generalizability because of the continuity of the MMD distance.

To solve the optimization problem in Eq.~\eqref{eq:expanded_perference_guidance}, all that remains is to replace the expectation with sample averages, and we use the preference guidance rewards to estimate the expected returns. In this paper, we implement LOPE based on PPO. PPO utilizes a clipped surrogate objective to implicitly constrain policy updates within a trust region, thereby eliminating the need to explicitly set or tune $\delta$. The entire procedure of LOPE is summarized in Algorithm~\ref{algo:lope}.
\begin{remark}
  \label{re:n-wise-comparisons}
  \textbf{Update Preferred Trajectory Sets with $n$-wise Comparisons.} Various previous work adopts the setting where the agent can only access expert preferences $y$ for segments $\sigma^0$ and $\sigma^1$. Specifically, $y\in\{0, 0.5, 1\}$ is sampled from a discrete distribution to indicate which segment the human annotator prefers. Then, the agent records the judgment as a triple $(\sigma^0, \sigma^1, y)$ in a data set $\mathcal{D}$. However, this setting cannot cover our PbRL situation with trajectory preferences. 
  
  LOPE maintains a buffer of preferred trajectories and updates this buffer with current trajectories in each iteration. Therefore, the human annotator must compare all current trajectories with the preferred trajectories in $\mathcal{P}$ in pairs. The trajectories are ranked by human feedback, and the best $h$ trajectories (segments) can be stored in $\mathcal{P}$. In practice, we set $h=8$ in all experiments. To achieve the $n$-wise comparisons, each current trajectory should be compared pair-wise with all preference trajectories in $\mathcal{P}$. When the agent samples $N$ new trajectories in each epoch, the computational complexity of $n$-wise comparisons is $Nh$. As the training progresses, the number of trajectories used to update $\mathcal{P}$ decreases, thereby reducing the experts' workload. The decreased number is environmentally dependent and relies on the agent's training progress. Furthermore, our method's experts only need to assess the overall state visitation situation of the agent's trajectory, without requiring a detailed understanding of the agent's dynamics. This reduces the requirements for experts.
\end{remark}
\begin{algorithm}[ht]
  \caption{LOPE}\label{algo:lope}
  \begin{algorithmic}[1]
    \STATE Initialize parameters of $\pi_\theta$
    \STATE Initialize a data set of trajectory preferences $\mathcal{P}$
    \FOR{each iteration}
    \STATE{{\textsc{// Collect trajectories}}}
    \FOR{$k=1,\dots, K$}
    \STATE Generate batch of $N$ trajectories $\{\tau_i\}_{i}^{n}$ and store them in an on-policy buffer $\mathcal{B}$
    \ENDFOR
    \STATE{{\textsc{// Update policies}}}
    \FOR{each gradient step}
    \STATE Optimize the trust-region-based objective in Eq.~\eqref{eq:policy_improvement} with respect to $\theta$ using $\mathcal{B}$
    \STATE Optimize the preference-guided objective in Eq.~\eqref{eq:perference_guidance} using $\mathcal{B}$ and $\mathcal{P}$
    \ENDFOR
    \STATE{{\textsc{// Update preferred trajectory sets}}}
    \FOR{each trajectory in $\mathcal{B}$}
    \STATE Update preferred trajectory set $\mathcal{P}$ with $n$-wise comparisons
    \ENDFOR
    \ENDFOR
  \end{algorithmic}
\end{algorithm}

\section{Theoretical Analysis}
\label{sec:thm}
This section analyzes the policy improvement step and preference guidance step separately. We make the following assumption about the behavior policy $\pi_b$ implied by the preferred trajectories in set $\mathcal{P}$. 
\begin{figure}[ht]
    \centering
    \includegraphics[width=0.8\linewidth]{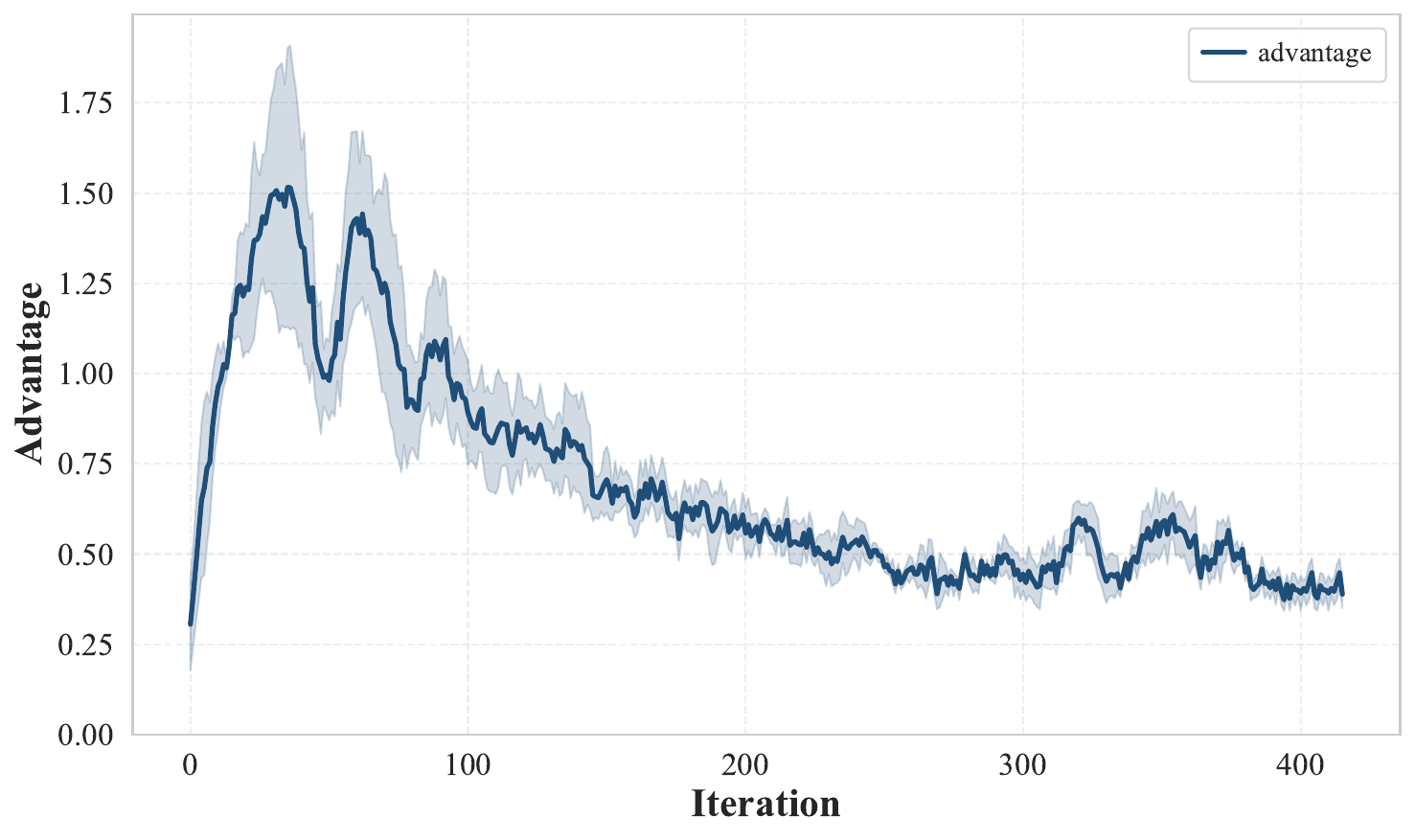}
    \caption{The advantage of the guiding policy $\pi_b$ over the current policy.}
    \label{fig:adv}
\end{figure}

%
%
\begin{restatable}{assumption}{firstass}
  \label{ass:1}
  $\pi_b$ is a \textbf{guiding} policy, and for any iteration $k$ during training, it satisfies the following conditions:
  %
  \begin{equation}
    \label{eq:ass1_A}
    \mathbb{E}_{a\sim\pi_b}\left[A_{k}(s,a)\right]\ge\Delta>0,
  \end{equation}
  where $\rho_k$ is the discounted state visitation distribution of $\pi_k$. 
\end{restatable}

To empirically validate Assumption~\ref{ass:1}, we evaluate the guided advantage $\Delta=\mathbb{E}_{a\sim\pi_b}\left[A_{k}(s,a)\right]$ in the grid-world maze task with randomized goals. As shown in Fig.~\ref{fig:adv}, the advantage of $\pi_b$ over $\pi_k$ remains consistently positive and above a threshold ($\approx 0.15$), indicating that the guiding policy reliably satisfies the sufficient condition assumed in our theoretical analysis.

\begin{figure*}[htb]
  \centering
  \subfloat[]{
  \includegraphics[width=4.3cm]{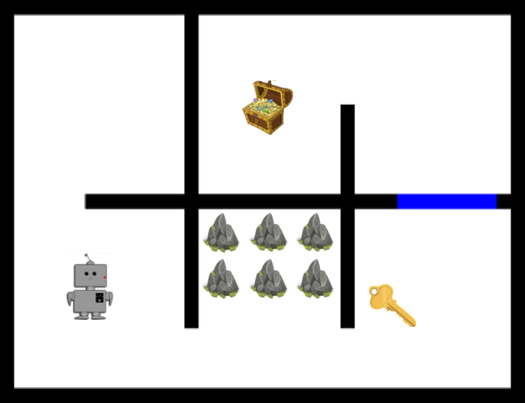}
  \label{fig:grid_maze}
  }
  \subfloat[]{
  \includegraphics[width=3.32cm]{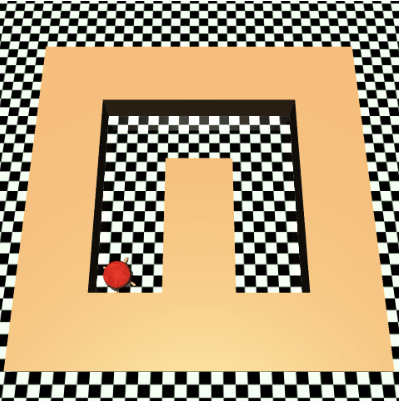}
  \label{fig:ant_maze}
  }
  \subfloat[]{
  \includegraphics[width=3.45cm]{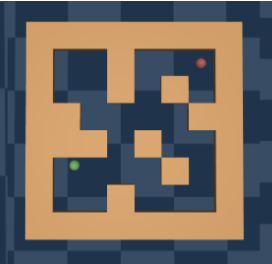}
  \label{fig:maze_medium}
  }
  \subfloat[]{
  \includegraphics[width=3.35cm]{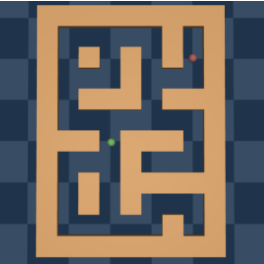}
  \label{fig:maze_large}
  }
  \caption{The four environments for evaluating the LOPE's performance: (a) Grid World; (b) AntMaze-Umaze; (c) PointMaze-Medium; (d) PointMaze-Large.}
\end{figure*}

Eq.~\eqref{eq:ass1_A} implies that the agent will produce a higher advantage by performing actions according to the demonstrations in $\mathcal{P}$. This is a relatively reasonable assumption since the demonstration preferred by the annotator is likely to achieve better performance than a policy that is not fully trained. It is worth mentioning that a similar assumption is adopted by~\cite{rengarajanreinforcement} and~\cite{kang2018policy}. However, our method does not require an exact estimation of $\rho_\pi$ during implementation. Instead, we rely on sampling-based approximations using on-policy trajectories. $\pi_b$ serves as a heuristic guide, providing direction for exploration based on human preferences. The expected return of $\pi_b$ may gradually increase as the agent stores more preferred trajectories with high returns.

We begin with the analysis of the policy improvement step, and the following conclusion and its analysis are standard in the related literature~\cite{achiam2017constrained}.
\begin{proposition}[Performance improvement bound of the PI step~\cite{achiam2017constrained}]
  \label{prop:policy_improve}
  Let any policies $\pi_k$ and $\pi_{k+\frac{1}{2}}$ are related by Eq.~\eqref{eq:policy_improvement}, Then, the following bound holds:
  \begin{align}
    \label{eq:trpo-guarantee}
    J(\pi_{k+\frac{1}{2}}) - J(\pi_{k}) \ge \frac{- \sqrt{2 \delta} \gamma \epsilon_{k} }{(1 - \gamma)^{2}},
  \end{align}
  where $\epsilon_{k} = \max_{s, a} |A_{k}(s, a)|$.
\end{proposition}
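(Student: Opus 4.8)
The plan is to obtain the bound in Eq.~\eqref{eq:trpo-guarantee} by invoking the standard surrogate-objective machinery of trust-region methods, exactly in the form established in~\cite{achiam2017constrained}. The starting point is the well-known performance difference identity $J(\pi_{k+\frac{1}{2}}) - J(\pi_k) = \frac{1}{1-\gamma}\,\mathbb{E}_{s\sim\rho_{\pi_{k+\frac{1}{2}}},\,a\sim\pi_{k+\frac{1}{2}}}[A_k(s,a)]$, together with the observation that $L_{\pi_k}(\pi_{k+\frac{1}{2}}) - J(\pi_k) = \frac{1}{1-\gamma}\,\mathbb{E}_{s\sim\rho_{\pi_k},\,a\sim\pi_{k+\frac{1}{2}}}[A_k(s,a)]$, where $L_{\pi_k}$ is the local approximation defined in Eq.~\eqref{eq:L_approx}. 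Subtracting these two expressions shows that the gap between the true performance change and its surrogate estimate is controlled by the difference between the discounted state visitation distributions $\rho_{\pi_{k+\frac{1}{2}}}$ and $\rho_{\pi_k}$.

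Next I would bound that distribution mismatch. The key lemma (see~\cite{achiam2017constrained}) states that $\left|J(\pi_{k+\frac{1}{2}}) - J(\pi_k) - \bigl(L_{\pi_k}(\pi_{k+\frac{1}{2}}) - J(\pi_k)\bigr)\right| \le \frac{2\gamma\,\epsilon_k}{(1-\gamma)^2}\,\mathbb{E}_{s\sim\rho_{\pi_k}}\!\bigl[D_{\mathrm{TV}}(\pi_{k+\frac{1}{2}}\|\pi_k)[s]\bigr]$, with $\epsilon_k = \max_{s,a}|A_k(s,a)|$. Here one uses that the $L_1$ deviation of the discounted visitation distributions is itself dominated by the expected total-variation distance between the two policies, a consequence of unrolling the transition kernels. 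Then Pinsker's inequality gives $\mathbb{E}_{s\sim\rho_{\pi_k}}[D_{\mathrm{TV}}(\pi_{k+\frac{1}{2}}\|\pi_k)[s]] \le \sqrt{\tfrac{1}{2}\,\mathbb{E}_{s\sim\rho_{\pi_k}}[D_{\mathrm{KL}}(\pi_{k+\frac{1}{2}}\|\pi_k)[s]]} \le \sqrt{\delta/2}$, using Jensen's inequality and the trust-region constraint $D_{\mathrm{KL}}(\pi_\theta,\pi_k)\le\delta$ from Eq.~\eqref{eq:policy_improvement}.

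Finally, I would combine these pieces: since $\pi_{k+\frac{1}{2}}$ is the maximizer in Eq.~\eqref{eq:policy_improvement}, the surrogate advantage term $\mathbb{E}_{s\sim\rho_{\pi_k},\,a\sim\pi_{k+\frac{1}{2}}}[A_k(s,a)]$ is at least that achieved by $\pi_k$ itself, which is $0$; hence $L_{\pi_k}(\pi_{k+\frac{1}{2}}) - J(\pi_k) \ge 0$. Subtracting the error bound from this nonnegative surrogate gain yields $J(\pi_{k+\frac{1}{2}}) - J(\pi_k) \ge 0 - \frac{2\gamma\epsilon_k}{(1-\gamma)^2}\sqrt{\delta/2} = \frac{-\sqrt{2\delta}\,\gamma\,\epsilon_k}{(1-\gamma)^2}$, which is exactly Eq.~\eqref{eq:trpo-guarantee}. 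I expect the main obstacle to be stating the distribution-mismatch lemma carefully enough — in particular tracking the factor of $\gamma/(1-\gamma)^2$ that comes from bounding $\|\rho_{\pi_{k+\frac{1}{2}}} - \rho_{\pi_k}\|_1$ by the policy divergence and then integrating the advantage against it — but since this proposition is quoted verbatim from~\cite{achiam2017constrained}, the cleanest route is simply to cite that reference for the mismatch bound and present the short chain of inequalities above.
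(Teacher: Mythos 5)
Your proposal is correct and is precisely the argument the paper relies on: the paper gives no proof of its own for this proposition, deferring entirely to~\cite{achiam2017constrained}, and your chain (performance difference identity, surrogate gap bounded by the expected total-variation distance with the $2\gamma\epsilon_k/(1-\gamma)^2$ factor, Pinsker plus Jensen to convert the KL trust region into $\sqrt{\delta/2}$, and nonnegativity of the surrogate gain at the maximizer since $\pi_k$ itself is feasible with value $0$) is exactly the standard derivation from that reference, with the arithmetic $2\sqrt{\delta/2}=\sqrt{2\delta}$ matching Eq.~\eqref{eq:trpo-guarantee}. No gaps.
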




We now give the following lemma to connect the difference in returns between $\tilde{\pi}$ and $\pi$ to the KL divergence between $\tilde{\pi}$ and $\pi_b$.
\begin{restatable}[Performance improvement bound of the PG step]{lemma}{preferimprove}
  \label{lem:assumption-improvement}
  For arbitrary policies $\pi$ and $\tilde{\pi}$, let $\beta=D_{\rm KL}^{\max}(\tilde{\pi}, \pi_b)=\max_s D_{\rm KL}(\tilde{\pi}(\cdot\vert s), \pi_b(\cdot\vert s))$, and $\pi_b$ is a policy satisfying Assumption~\ref{ass:1}. Then we have
  \begin{equation}
    \begin{aligned}
      \label{eq:assumption-improvement}
     J(\tilde{\pi}) - J(\pi) \ge -\frac{2\beta\gamma\epsilon_b}{(1-\gamma)^2} +\frac{\Delta}{1-\gamma},
    \end{aligned}
  \end{equation}
  where $\epsilon_{b} = \max_{s,a}|A_{b}(s,a)|$ is the maximum absolute value of $\pi_b$'s advantage function $A_b(s,a)$. 
\end{restatable}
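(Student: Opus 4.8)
The plan is to decompose the return difference $J(\tilde\pi)-J(\pi)$ by passing through the guiding policy $\pi_b$ and the local approximation $L$. First I would write the telescoping identity $J(\tilde\pi)-J(\pi) = \big(J(\tilde\pi)-J(\pi_b)\big) + \big(J(\pi_b)-J(\pi)\big)$, or more precisely work with the standard performance-difference lemma to express each term via advantage functions. For the first bracket I would invoke the TRPO-style bound (as in Proposition~\ref{prop:policy_improve} and the underlying result from~\cite{schulman2015trust,achiam2017constrained}): since $\beta = D_{\rm KL}^{\max}(\tilde\pi,\pi_b)$, we get $J(\tilde\pi) - J(\pi_b) \ge L_{\pi_b}(\tilde\pi) - J(\pi_b) - \frac{2\gamma\epsilon_b}{(1-\gamma)^2}\beta$, where the penalty constant uses $\epsilon_b = \max_{s,a}|A_b(s,a)|$. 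Then I would argue $L_{\pi_b}(\tilde\pi) - J(\pi_b) = \sum_s \rho_{\pi_b}(s)\sum_a \tilde\pi(a\vert s) A_{\pi_b}(s,a) \ge 0$ is not directly what we want — instead I expect the intended route is to use Assumption~\ref{ass:1} directly on $\pi_b$ relative to $\pi_k=\pi$.

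The cleaner path: apply the performance-difference identity $J(\pi_b) - J(\pi) = \frac{1}{1-\gamma}\mathbb{E}_{s\sim\rho_{\pi_b},\,a\sim\pi_b}[A_\pi(s,a)]$. By Eq.~\eqref{eq:ass1_A} in Assumption~\ref{ass:1}, $\mathbb{E}_{a\sim\pi_b}[A_\pi(s,a)] \ge \Delta$ pointwise in $s$ (identifying $\pi$ with $\pi_k$), so this term is bounded below by $\Delta/(1-\gamma)$, which supplies the $+\frac{\Delta}{1-\gamma}$ in the claim. For the other term, $J(\tilde\pi) - J(\pi_b)$, I would apply the trust-region lower bound with $\tilde\pi$ playing the role of the updated policy and $\pi_b$ the reference: $J(\tilde\pi)-J(\pi_b) \ge L_{\pi_b}(\tilde\pi)-J(\pi_b) - \frac{2\gamma\epsilon_b}{(1-\gamma)^2}\beta$. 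Then, because $\pi_b$ maximizes $L_k$ (Eq.~\eqref{eq:ass1_pi_b}) and $L_{\pi_b}(\cdot)$ relates to it, I would argue $L_{\pi_b}(\tilde\pi) - J(\pi_b) \ge -$ (something controlled), or more simply bound $L_{\pi_b}(\tilde\pi)-J(\pi_b)\ge 0$ is too strong; instead use that the advantage sum is $\ge -\,\text{(small KL-dependent quantity already absorbed)}$. Combining the two brackets yields $J(\tilde\pi)-J(\pi) \ge -\frac{2\beta\gamma\epsilon_b}{(1-\gamma)^2} + \frac{\Delta}{1-\gamma}$.

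The main obstacle I anticipate is handling the cross term $L_{\pi_b}(\tilde\pi) - J(\pi_b)$ correctly: it is $\sum_s \rho_{\pi_b}(s)\sum_a \tilde\pi(a|s)A_{\pi_b}(s,a)$, and this need not be nonnegative for arbitrary $\tilde\pi$. The resolution is likely that the paper intends $L_k$ in Assumption~\ref{ass:1} to be the linearization at $\pi_k=\pi$, and that one should instead split as $J(\tilde\pi)-J(\pi) = [J(\tilde\pi)-L_\pi(\tilde\pi)] + [L_\pi(\tilde\pi) - L_\pi(\pi_b)] + [L_\pi(\pi_b)-J(\pi)]$; the first term is $\ge -\frac{2\gamma\epsilon_b}{(1-\gamma)^2}\beta$ by a TRPO-type bound stated in terms of $D_{\rm KL}^{\max}(\tilde\pi,\pi_b)$ together with $L_\pi(\pi_b)=L_\pi(\tilde\pi)$-adjacency, the second is $\ge 0$ because $\pi_b$ maximizes $L_\pi$, and the third equals $\frac{1}{1-\gamma}\mathbb{E}_{\rho_\pi,\pi_b}[A_\pi]\ge\frac{\Delta}{1-\gamma}$ — wait, $L_\pi(\pi_b)-J(\pi) = \sum_s\rho_\pi(s)\sum_a\pi_b(a|s)A_\pi(s,a)$, which by Eq.~\eqref{eq:ass1_A} is $\ge \Delta\sum_s\rho_\pi(s)$; since $\sum_s\rho_\pi(s)=1/(1-\gamma)$ under the unnormalized convention (or one adjusts constants), this gives $\frac{\Delta}{1-\gamma}$. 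So the delicate points are (i) getting the TRPO surrogate bound to come out with $\beta=D_{\rm KL}^{\max}(\tilde\pi,\pi_b)$ rather than $D_{\rm KL}^{\max}(\tilde\pi,\pi)$ — which requires that $L$ in the bound be anchored at $\pi_b$, forcing a careful choice of which policy is ``old'' — and (ii) tracking the $(1-\gamma)$ normalization convention consistently. I would flag (i) as the real crux; once the anchoring is chosen so that the penalty term references $\pi_b$ and $\epsilon_b$, the rest is assembling the three pieces and simplifying.
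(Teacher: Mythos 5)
Your decomposition $J(\tilde\pi)-J(\pi) = \bigl(J(\tilde\pi)-J(\pi_b)\bigr) + \bigl(J(\pi_b)-J(\pi)\bigr)$ is exactly the one the paper uses, and your treatment of the second bracket is correct: by the performance-difference lemma and Eq.~\eqref{eq:ass1_A} (with $\pi$ identified as $\pi_k$), $J(\pi_b)-J(\pi) = \mathbb{E}_{\tau\sim\rho_{b}}\bigl[\sum_t\gamma^t\,\mathbb{E}_{a\sim\pi_b}A_{\pi}(s_t,a)\bigr] \ge \Delta/(1-\gamma)$. The gap is in the first bracket: you never actually establish $J(\tilde\pi)-J(\pi_b) \ge -\tfrac{2\beta\gamma\epsilon_b}{(1-\gamma)^2}$, and both of your attempts route through the surrogate $L$ and stall. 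In the first attempt the cross term $L_{\pi_b}(\tilde\pi)-J(\pi_b)=\sum_s\rho_{\pi_b}(s)\sum_a\tilde\pi(a\vert s)A_{\pi_b}(s,a)$ is, as you observe, not sign-controlled; worse, its natural bound is of order $\alpha=D_{\rm TV}^{\max}(\tilde\pi,\pi_b)\le\sqrt{\beta}$ (since $\sum_a\pi_b(a\vert s)A_{\pi_b}(s,a)=0$ gives $\vert\sum_a\tilde\pi(a\vert s)A_{\pi_b}(s,a)\vert\le 2\alpha\epsilon_b$), which is the wrong order in $\beta$ and cannot be absorbed into an $O(\beta)$ penalty. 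In the second attempt the claim that $L_\pi(\tilde\pi)-L_\pi(\pi_b)\ge 0$ ``because $\pi_b$ maximizes $L_\pi$'' has the sign backwards: if $\pi_b$ is the maximizer of $L_\pi$ then $L_\pi(\tilde\pi)\le L_\pi(\pi_b)$. And, as you yourself flag, the TRPO remainder bound anchored at $\pi$ produces a penalty in $D_{\rm KL}^{\max}(\tilde\pi,\pi)$ and $\max_{s,a}\vert A_{\pi}(s,a)\vert$, not in $\beta$ and $\epsilon_b$.

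The paper closes the first bracket without passing through the surrogate at all. It sets $\alpha = D_{\rm TV}^{\max}(\tilde\pi,\pi_b)$, notes $\alpha^2\le\beta$ by the Pinsker-type inequality $D_{\rm TV}^2\le D_{\rm KL}$, couples $\tilde\pi$ and $\pi_b$ so the sampled actions disagree with probability at most $\alpha$ (Lemma~\ref{lem:TV_alpha}), and then applies the coupling bound of Lemma~\ref{lem:adv_bound} (from~\cite{kang2018policy}) to the expected advantage directly: $\bigl\vert\mathbb{E}_{\tau\sim\rho_{\tilde\pi}}\bigl[A^{\tilde\pi\vert\pi_b}(s_t)\bigr]\bigr\vert\le 2\alpha\bigl(1-(1-\alpha)^t\bigr)\epsilon_b$, whence summing the geometric series yields $J(\tilde\pi)-J(\pi_b)\ge -\tfrac{2\alpha^2\gamma\epsilon_b}{(1-\gamma)(1-\gamma(1-\alpha))}\ge-\tfrac{2\beta\gamma\epsilon_b}{(1-\gamma)^2}$. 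That coupling lemma --- whose content is precisely that the expected advantage of $\tilde\pi$ over $\pi_b$ along $\tilde\pi$'s own trajectories is quadratic rather than linear in $\alpha$, once the vanishing of $\mathbb{E}_{a\sim\pi_b}[A_{\pi_b}(s,a)]$ and the state-distribution drift are accounted for jointly --- is the ingredient missing from your proposal; without it (or an equivalent), the bound you can prove degrades to $O(\sqrt{\beta})$ in the penalty term.
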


The proof can be found in Appendix~\ref{sec:proof_bound}. 

\begin{remark}
  Lemma~\ref{lem:assumption-improvement} provides a quantitative description of performance improvement in the preference-guided step. The human annotator selects preferred trajectories and stores them in $\mathcal{P}$ during the training process. Hence, it is reasonable to assume that $\pi_{k+\frac{1}{2}}$ in the PG step satisfies Assumption~\ref{ass:1} in the preference-guided step. Based on these preferred trajectories, the agent obtains a higher worst-case lower bound of performance improvement. Specifically, according to Lemma~\ref{lem:assumption-improvement}, for $\pi=\pi_{k+\frac{1}{2}}$, Eq.~\eqref{eq:assumption-improvement} indicates that the new lower bound increases by $\Delta/(1-\gamma)$ than in cases where preferred trajectories are not provided. 
\end{remark}

We derive a new worst-case lower bound of LOPE's performance improvement by combining Proposition~\ref{prop:policy_improve} and Lemma~\ref{lem:assumption-improvement}. The following theorem presents an explicit per-iteration lower bound on the policy improvement achieved by LOPE. This provides a concrete and interpretable expression characterizing the extent of policy improvement as a function of preference advantage, KL divergence, and reward estimation error.

\begin{restatable}[Performance improvement bound of LOPE]{theorem}{mainthm}
  \label{thm:bound}
  Let $\pi_k$ and $\pi_{k+\frac{1}{2}}$ be related by Eq.~\eqref{eq:policy_improvement} and $\pi_{k+\frac{1}{2}}$ and $\pi_{k+1}$ be related by Eq.~\eqref{eq:perference_guidance}, respectively. Let $\beta_{k+1}=D_{\rm KL}^{\max}(\pi_{k+1}, \pi_b)=\max_s D_{\rm KL}(\pi_{k+1}(\cdot\vert s), \pi_b(\cdot\vert s))$. If $\pi_{k+\frac{1}{2}}$ and $\pi_{k+1}$ satisfies Assumption~\ref{ass:1}, then the following bound holds:
  \begin{equation}
    \begin{aligned}
      \label{eq:thm-g1}
      J(\pi_{k+1}) - J(\pi_{k}) \geq \frac{\Delta}{1-\gamma} -\frac{2\beta_{k+1}\gamma\epsilon_b}{(1-\gamma)^2}  - \frac{\sqrt{2 \delta} \gamma \epsilon_{k}}{(1 - \gamma)^{2}}.
    \end{aligned}
  \end{equation}
  Here, $\epsilon_{k}$ and $\epsilon_{b}$ are as defined in Proposition~\ref{prop:policy_improve} and Lemma~\ref{lem:assumption-improvement}, respectively.
\end{restatable}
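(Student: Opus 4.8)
The plan is to chain the two already-established bounds along the two-step update $\pi_k \to \pi_{k+\frac{1}{2}} \to \pi_{k+1}$ by writing the total performance change as a telescoping sum,
\begin{equation}
  \label{eq:telescope}
  J(\pi_{k+1}) - J(\pi_{k}) = \bigl(J(\pi_{k+1}) - J(\pi_{k+\frac{1}{2}})\bigr) + \bigl(J(\pi_{k+\frac{1}{2}}) - J(\pi_{k})\bigr),
\end{equation}
and then lower-bounding each of the two summands separately. For the second summand, which measures the effect of the policy improvement step in Eq.~\eqref{eq:policy_improvement}, I would invoke Proposition~\ref{prop:policy_improve} directly, since $\pi_k$ and $\pi_{k+\frac{1}{2}}$ are related by exactly that update; this gives $J(\pi_{k+\frac{1}{2}}) - J(\pi_{k}) \ge -\sqrt{2\delta}\,\gamma\epsilon_k / (1-\gamma)^2$. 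For the first summand, which measures the effect of the preference guidance step in Eq.~\eqref{eq:perference_guidance}, I would apply Lemma~\ref{lem:assumption-improvement} with the substitution $\tilde{\pi} = \pi_{k+1}$ and $\pi = \pi_{k+\frac{1}{2}}$: the hypothesis that $\pi_{k+\frac{1}{2}}$ satisfies Assumption~\ref{ass:1} supplies the advantage lower bound $\Delta$ needed by the lemma, and $\beta_{k+1} = D_{\rm KL}^{\max}(\pi_{k+1}, \pi_b)$ plays the role of $\beta$, yielding $J(\pi_{k+1}) - J(\pi_{k+\frac{1}{2}}) \ge \Delta/(1-\gamma) - 2\beta_{k+1}\gamma\epsilon_b / (1-\gamma)^2$.

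Adding the two lower bounds term by term into Eq.~\eqref{eq:telescope} immediately produces the claimed inequality~\eqref{eq:thm-g1}. The only subtlety worth spelling out is the role of the hypothesis ``$\pi_{k+\frac{1}{2}}$ and $\pi_{k+1}$ satisfies Assumption~\ref{ass:1}'': Assumption~\ref{ass:1} is a statement about $\pi_b$ relative to a reference policy via its local approximation $L_k$ and advantage $A_k$, so to apply Lemma~\ref{lem:assumption-improvement} across the PG step I need the version of that assumption anchored at $\pi_{k+\frac{1}{2}}$ rather than at $\pi_k$ — i.e. $\mathbb{E}_{a\sim\pi_b}[A_{k+\frac{1}{2}}(s,a)] \ge \Delta$ and $\pi_b = \arg\max_\pi L_{k+\frac{1}{2}}(\pi)$ — which is precisely what the theorem's hypothesis grants. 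I would make this dependence explicit so that the $\epsilon_b$ and $\Delta$ appearing in the final bound are unambiguously those associated with the guiding policy $\pi_b$ evaluated against $\pi_{k+\frac{1}{2}}$.

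I do not expect a genuine obstacle here, since the theorem is a composition of two results already proved in the excerpt; the main thing to be careful about is bookkeeping. Specifically, I would double-check that the $\epsilon_k$ in Proposition~\ref{prop:policy_improve} (the max absolute advantage of $\pi_k$) and the $\epsilon_b$ in Lemma~\ref{lem:assumption-improvement} (the max absolute advantage of $\pi_b$) are kept distinct throughout, that no hidden assumption couples the trust-region radius $\delta$ used in the two steps, and that the direction of the inequality in Lemma~\ref{lem:assumption-improvement} (a \emph{lower} bound on a possibly negative quantity plus a positive $\Delta/(1-\gamma)$ term) is correctly oriented when summed. If desired, a one-line remark can note that the bound is informative — i.e. guarantees net improvement — whenever $\Delta/(1-\gamma)$ dominates the two penalty terms, which happens for $\delta$ small and $\beta_{k+1}$ small, the latter being enforced implicitly when $\pi_{k+\frac{1}{2}}$ already lies close to $\pi_b$.
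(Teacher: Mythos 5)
Your proposal is correct and matches the paper's proof, which consists of exactly the same step: decompose $J(\pi_{k+1})-J(\pi_k)$ across the two-step update, apply Proposition~\ref{prop:policy_improve} to the policy improvement step and Lemma~\ref{lem:assumption-improvement} (with $\tilde{\pi}=\pi_{k+1}$, $\pi=\pi_{k+\frac{1}{2}}$) to the preference guidance step, and add the bounds. Your additional bookkeeping about anchoring Assumption~\ref{ass:1} at $\pi_{k+\frac{1}{2}}$ is a sound clarification of what the paper leaves implicit.
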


We provide the proof in Appendix~\ref{sec:proof_bound}. This bound implies that, when the guided advantage $\Delta$ exceeds the regularization terms, LOPE guarantees monotonic policy improvement. Although the theoretical bound includes a strictly positive term, this quantity serves as a conservative regularization to prevent overly aggressive updates. Our empirical findings indicate that LOPE can still converge to near-optimal solutions in practice.

Specifically, when the behavior policy $\pi_b$ performs much better than the current policy as described in Assumption~\ref{ass:1}, compared to the original worst-case lower bound obtained by the TRPO algorithm, the preference-guided step results in an additional performance improvement of $\Delta/(1-\gamma)$ size. Preference-labeled trajectories offer a more effective and accurate policy optimization direction for the agent, while reducing the sampling complexity required for learning. Thus, LOPE can achieve faster learning.

\begin{remark}[Comparison with Prior Preference-Based Bounds]
While most existing preference-based RL methods focus on regret-based frameworks~\cite{Pacchiano2021DuelingRR,Chen2022HumanintheloopPE} or reward estimation convergence~\cite{biyik2022learning}, they typically do not provide formal guarantees on policy improvement.
In contrast, LOPE introduces a two-step optimization scheme consisting of a preference-guided advantage improvement objective and a KL-regularized policy update, which together yield the following worst-case lower bound:
\[
J(\pi_{k+1}) - J(\pi_k) \ge \frac{\Delta}{1 - \gamma} - \mathcal{O}\left(\frac{\beta A_{\max} + \sqrt{\delta} A_{\max}}{(1 - \gamma)^2} \right),
\]
This bound characterizes the explicit impact of preference quality, policy divergence, and reward variance on performance improvement.

To the best of our knowledge, LOPE is among the first PbRL methods to provide such a per-iteration improvement guarantee without requiring a learned reward function. While similar theoretical guarantees have been studied in the learning-from-demonstration (LfD) literature~\cite{rengarajanreinforcement,kang2018policy}, they are derived under fundamentally different assumptions, typically requiring access to expert trajectories, cost shaping, or supervised behavior cloning. In contrast, LOPE operates purely at the trajectory level, making our theoretical result novel and better aligned with real-world feedback modalities. These contributions help bridge a theoretical gap in preference-based reinforcement learning.
\end{remark}




\section{Experimental Setups}
\label{sec:setup}
We evaluate LOPE on an enormous grid-world task and several continuous control tasks based on the MuJoCo physical engine~\cite{todorov2012mujoco}. The hyperparameters used by the PyTorch code are introduced in Appendix~\ref{sec:nn_parameters}. In our experiments, sparse rewards are generated by the environment, and the agent has access to both sparse rewards and human feedback.

\subsection{Environments}
\textbf{Key-Door-Treasure domain.} As shown in Figure~\ref{fig:grid_maze}, the Key-Door-Treasure domain is a grid-world environment similar to that defined by~\cite{guo2020memory} with minor modifications. The agent should pick up the key (K) to open the door (D) and finally collect the treasure (T). The agent can only obtain a reward of 400 when reaching the treasure, and it cannot receive any rewards in other cases. The size of this grid-world environment is $26\times 36$. The paper uses mazes with fixed and stochastic goals to test the performance of LOPE. The maze environment with stochastic goals selects a random goal in the upper-right or medium rooms at the beginning of each episode.

\textbf{Continuous maze tasks.} To further demonstrate the feasibility of LOPE, we evaluated LOPE on the continuous maze tasks: PointMaze-Medium, PointMaze-Large, and AntMaze-Umaze task, which are shown in Figs~\ref{fig:ant_maze}-\ref{fig:maze_large}. These tasks are designed as a benchmark for RL by~\cite{duan2016benchmarking}. Hard exploration manifests two aspects: Locomotion and Maze. More specifically, the agent must first learn to walk and then reach the target point. The agent is only rewarded for reaching the specified position in each maze. The observation space of this task is naturally decomposed into two parts: the agent's joint angle or position information and task-specific attributes. The task-specific attributes include sensor readings and the positions of walls and goals.
\begin{figure}[htb]
  \centering
  \subfloat[]{
    \label{fig:cheetah}
    \includegraphics[width=3.7cm]{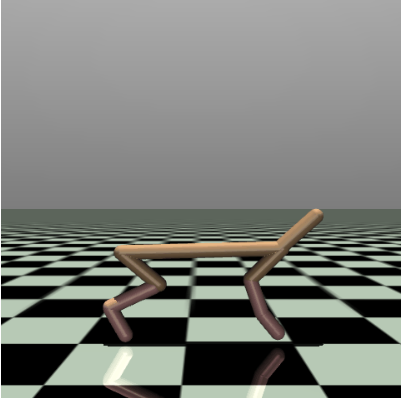}
    }
  \subfloat[]{
    \label{fig:hopper}
    \includegraphics[width=3.65cm]{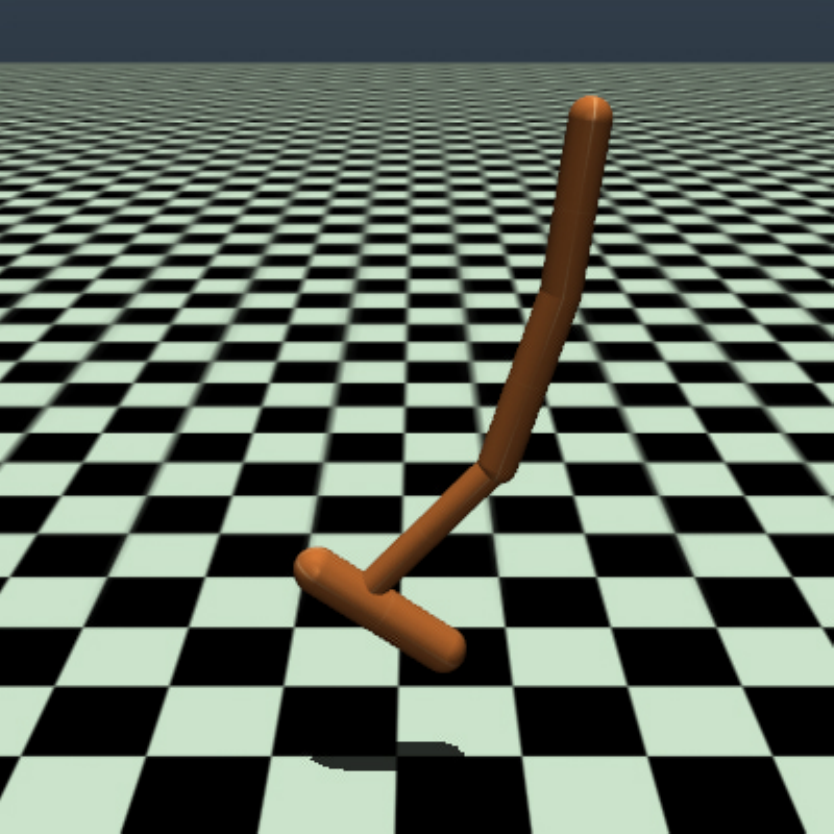}
    }
  \caption{The two locomotion control tasks: (a) SparseHalfCheetah; (b) SparseHopper.}
\end{figure}

\textbf{Locomotion control tasks.} To explore the limitation of LOPE, we modified 2 MuJoCo tasks, HalfCheetah and Hopper, and obtained two new robots: SparseHalfCheetah and SparseHopper presented in Figs.~\ref{fig:cheetah} and~\ref{fig:hopper}. Each of the new robots yielded a sparse reward only when the center of mass of the corresponding robot was beyond a certain threshold distance from its initial position, and the agent could not be rewarded in any other case. 

%

\subsection{Baseline Methods}
For evaluation, we compare LOPE with PEBBLE~\cite{lee2021pebble} and RUNE~\cite{liang2022reward}, the current state-of-the-art PbRL methods. Additionally, our method aims to overcome exploration difficulties; therefore, NoisyNet~\cite{fortunato2017noisy} and PPO+EXP~\cite{tang2017exploration} are adopted as baseline methods. We adopt SIL~\cite{oh2018self} and GASIL~\cite{guo2018generative} as two baselines to demonstrate the LOPE's effectiveness in exploiting previous good experiences. Meanwhile, SMM~\cite{lee2019efficient} solves the exploration problem by proposing a policy-level state marginal matching objective. We use this method as a baseline to demonstrate the effectiveness of the preference-based trajectory-level state marginal matching objective. Lastly, we use the standard PPO~\cite{schulman2017proximal} as a baseline method. 

PEBBLE uses an entropy-based sampling method to improve sample and exploration efficiency. RUNE is a PbRL-based exploration method that defines an intrinsic reward by computing the difference between learned reward models. For fairness, PEBBLE and RUNE adopt the setting where the agent can only access expert preferences $y$ for complete trajectories $\tau^0$ and $\tau^1$. Specifically, $y\in\{0, 0.5, 1\}$ is sampled from a discrete distribution to indicate which segment the human annotator prefers. Then, the agent records the judgment as a triple $(\tau^0, \tau^1, y)$ in a dataset $\mathcal{D}$. Each time the dataset is updated, the reward model is trained to remain consistent with the preference data.

Compared with PEBBLE and RUNE, the primary differences in our approach are (1) an end-to-end PbRL formulation, (2) the subtle combination of online optimization and preference guidance, and (3) the preferred set update method with $n$-wise comparison.


\section{Experimental Results}
\label{sec:results}
This section compares LOPE's performance with several state-of-the-art baseline methods on extensive benchmark control tasks. These average returns were calculated over ten separate runs with different random seeds, and the shaded error bars represented the standard errors. We evaluate the agent's performance quantitatively by measuring the average return, computed using ground truth rewards from environments, for fairness.

\subsection{Key-Door-Treasure Domain}
In the Key-Door-Treasure domain, we designate the positions of the key, door, and treasure room entrance as nodes. Trajectories containing events such as picking the key, opening the door, or entering the treasure room are considered preference trajectories. Moreover, trajectories closer to the goal with fewer steps are preferred. LOPE only maintains the top-$h$ trajectories and leverages them to guide policy optimization; we set $h=5$ in all experiments.
\begin{figure}[htb]
  \centering
  \subfloat[]{
    \label{fig:grid_rate}
    \includegraphics[width=4.2cm]{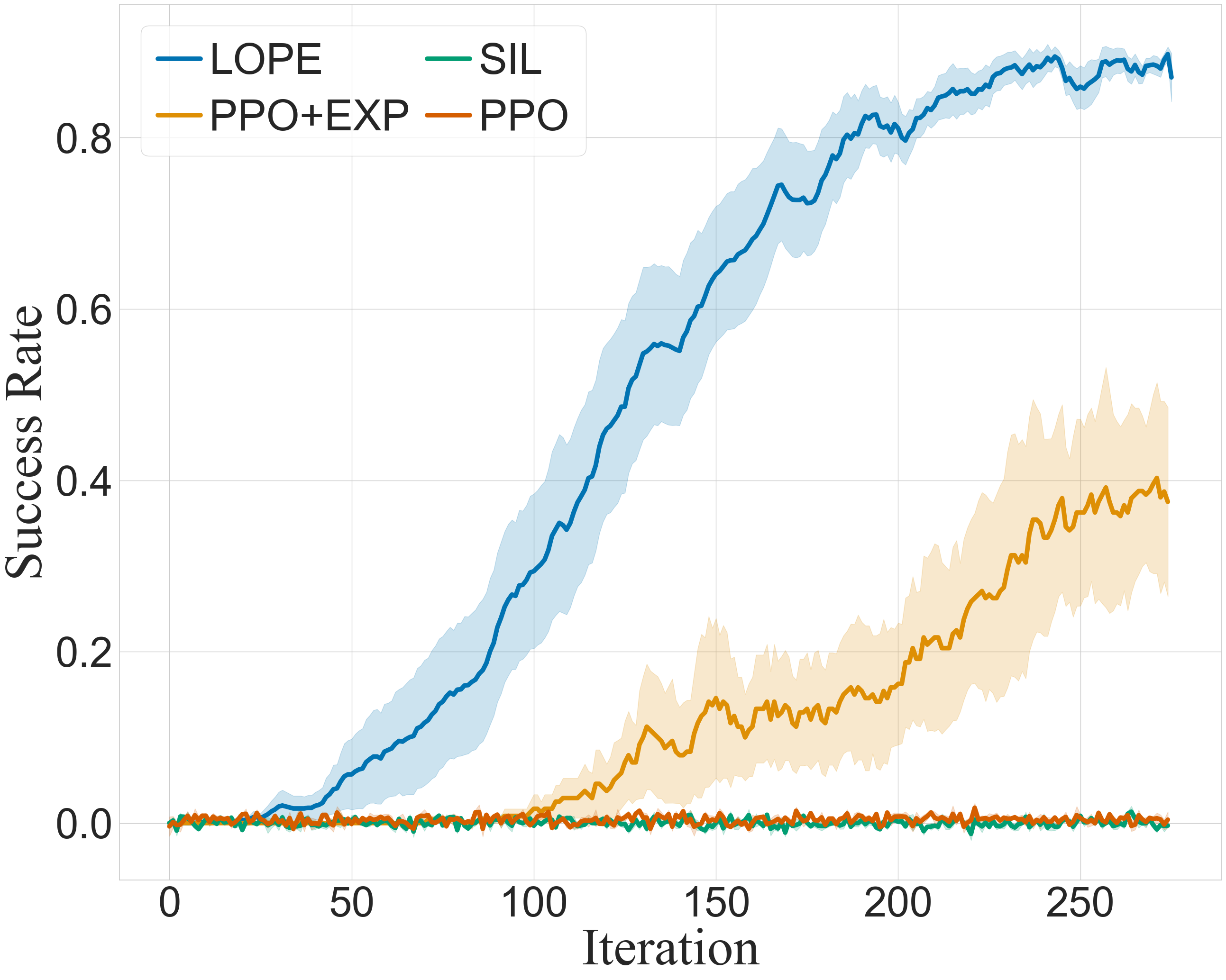}
    }
  \subfloat[]{
    \label{fig:grid_rate_random}
    \includegraphics[width=4.2cm]{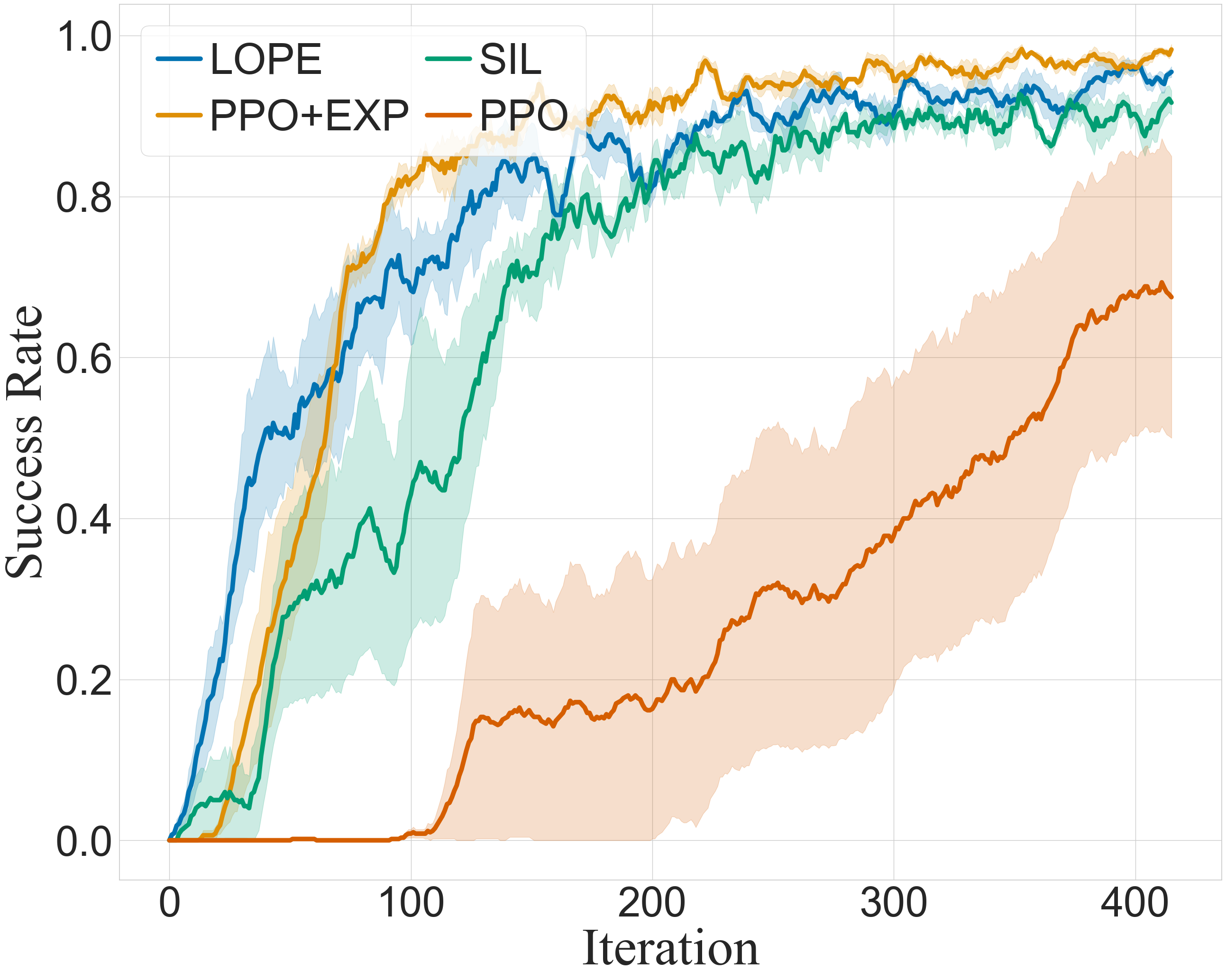}
    }
  \caption{(a) Learning curves of success rate in the grid-world maze with a fixed goal; (b) Learning curves of success rate in the grid-world maze with random goals.}
\end{figure}

As shown in Figs.~\ref{fig:grid_rate} and~\ref{fig:grid_rate_random}, in both grid-world mazes with fixed and random goals, LOPE's performance is superior or equal to other baseline methods. Specifically, LOPE can learn faster and achieve a success rate of nearly 95\%. We noted that in the grid-world maze with the fixed goal, the baseline methods, such as PPO and SIL, could not even learn a sub-optimal policy that picks up the key and then opens the door for a long time. PPO+EXP learned faster than PPO and SIL because this algorithm could explore the environment more efficiently and find the treasure. Interestingly, all baseline methods obtain better performance in the random grid-world environment than in the fixed grid-world maze. The reason for this phenomenon is that in the random grid-world maze, the goal generated in the upper-right room can be reached easily. In both grid-world mazes, LOPE exploited preference-labeled experiences and quickly learned to open the door and further explore the broader state space. This increased the chance of obtaining the treasure reward and helped the agent learn the optimal policy during training. 
%
\begin{figure}[htb]
    \centering
    \subfloat[]{
        \label{fig:act_trajs}
        \includegraphics[scale=0.28]{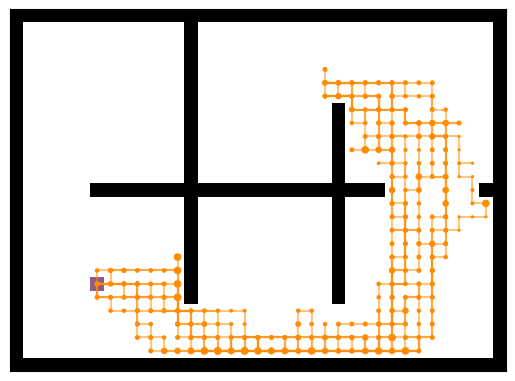}
    }
    \subfloat[]{
        \label{fig:pref_trajs}
        \includegraphics[scale=0.28]{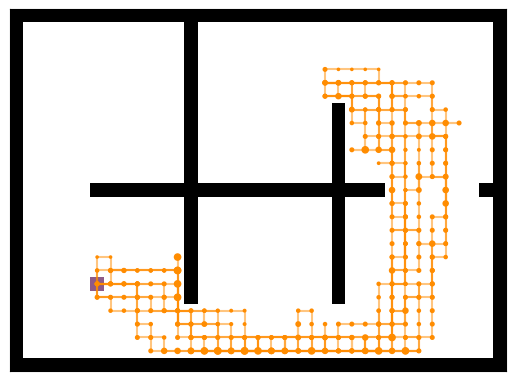}
    }
    \caption{(a) Actual trajectories of the learned optimal policy, (b) Preference-labeled trajectories selected by the annotator.}
    \label{fig:grid}
\end{figure}

\begin{figure*}[htb]
    \centering
    \subfloat[]{
      \label{fig:ant_rate}
      \includegraphics[width=4.5cm]{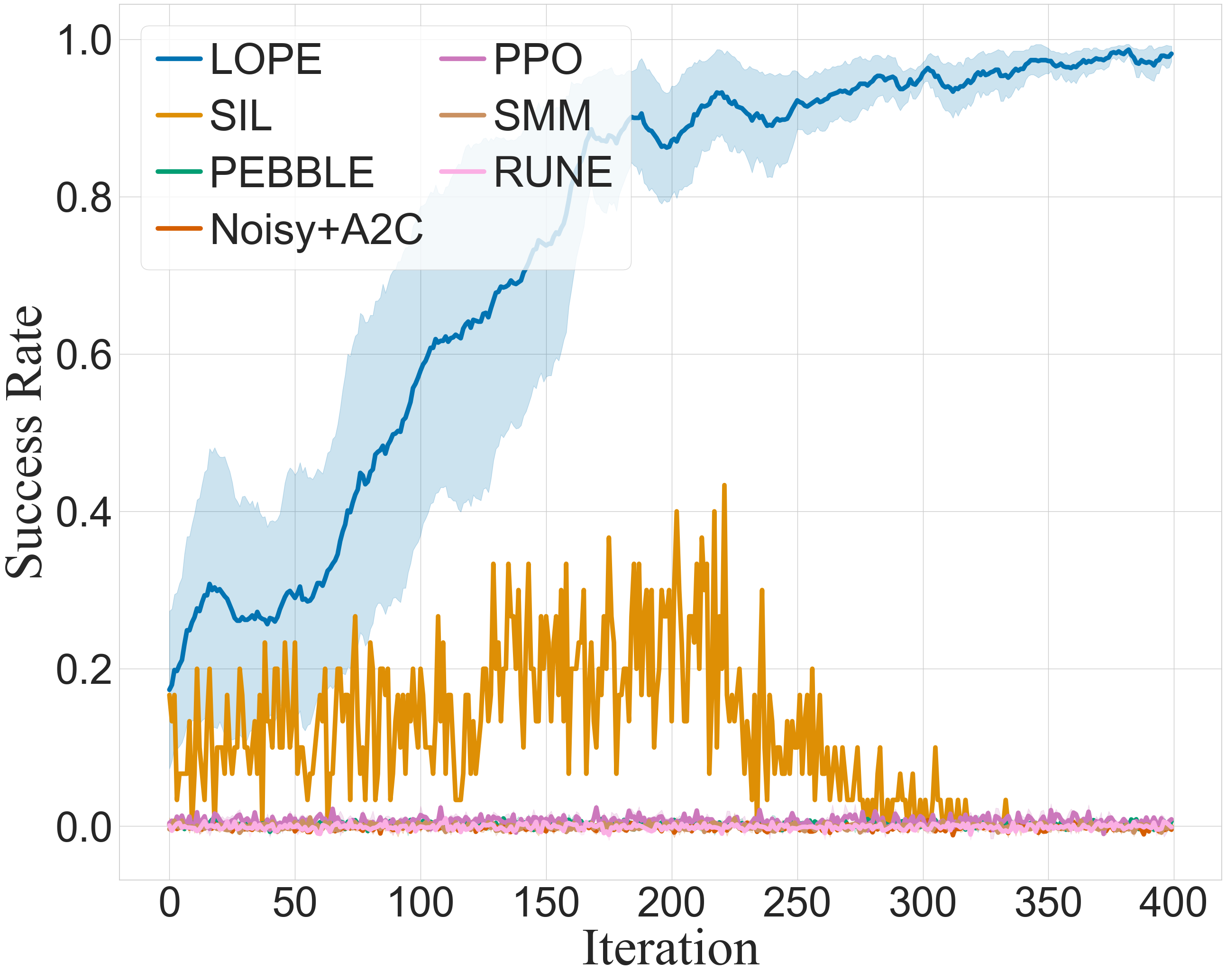}
    }
    \quad
    \subfloat[]{
      \label{fig:medium}
      \includegraphics[width=4.5cm]{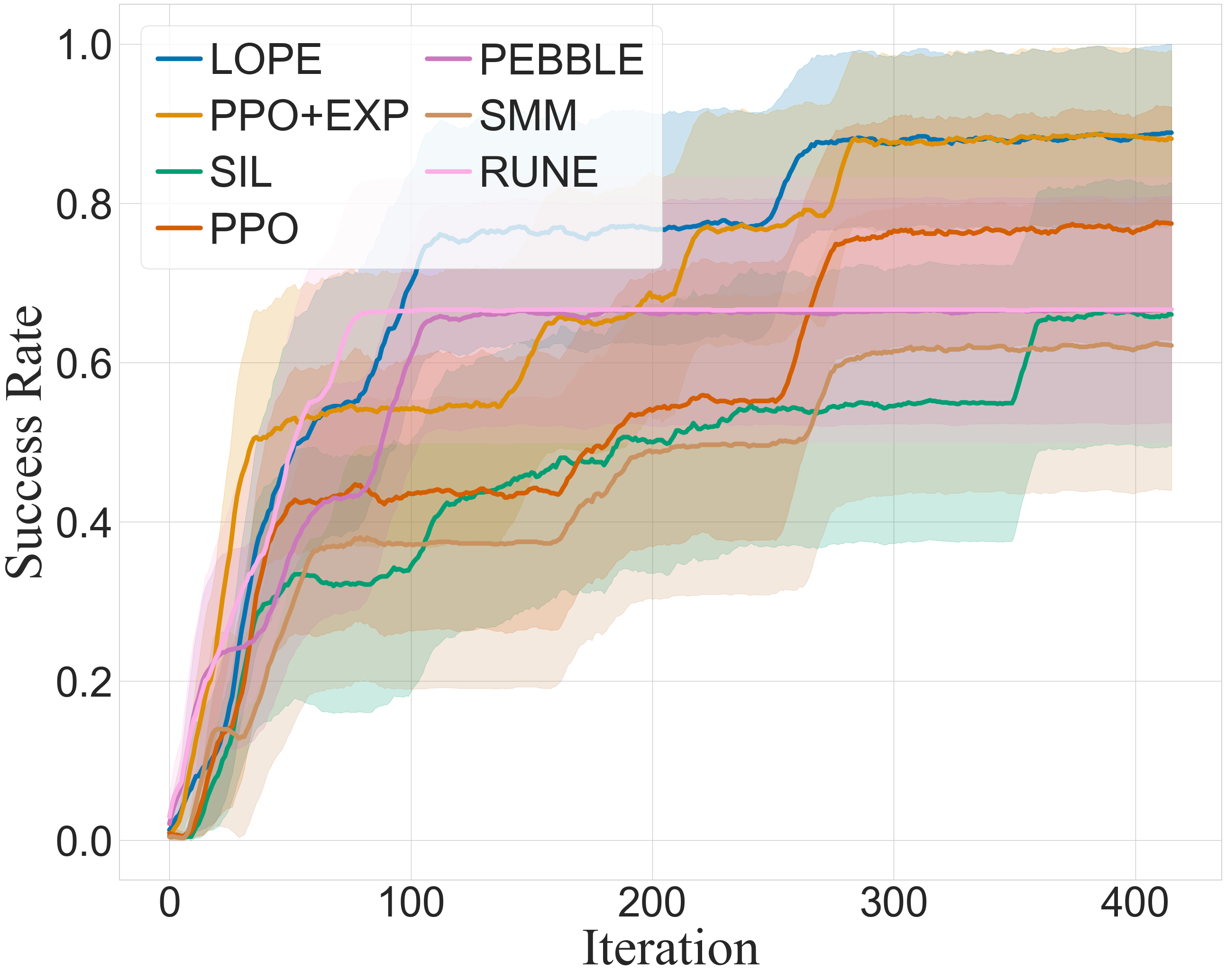}
    }
    \quad
    \subfloat[]{
      \label{fig:large}
      \includegraphics[width=4.5cm]{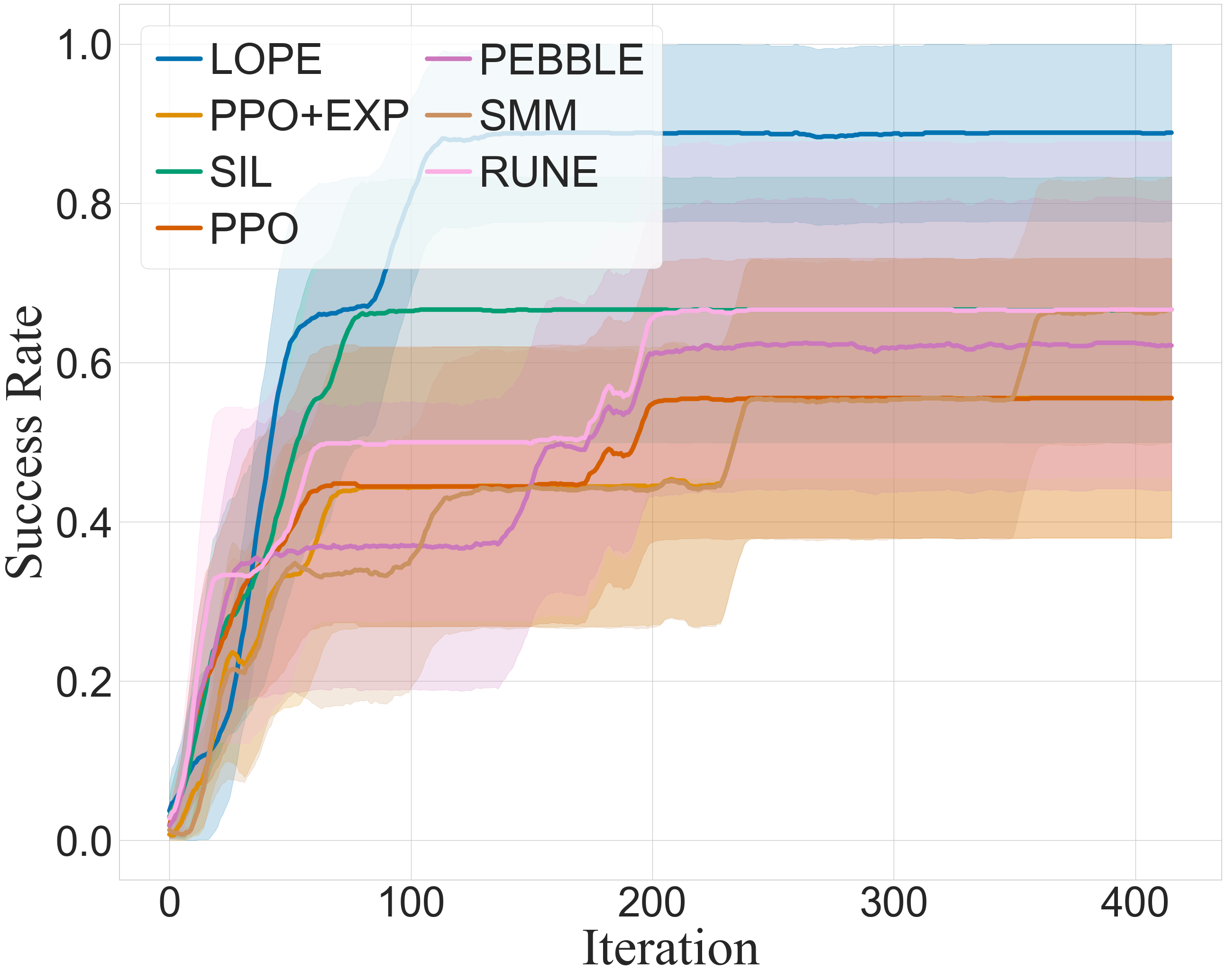}
    }
    \caption{(a) Learning curves of success rate on the continuous maze tasks: (1) AntMaze-Umaze; (2) PointMaze-Medium; (3) PointMaze-Large.}
    \label{fig:ant_exps}
\end{figure*}

We further depicted the actual and human-annotated trajectories in two state visitation graphs. The results are presented in Fig.~\ref{fig:grid}. It implies that the agent can imitate the preference-labeled trajectories perfectly. Therefore, our method can exploit human preferences efficiently and learn the desired policy.

\subsection{Hard-Exploration Continuous Mazes} 
We set three different distance nodes and consider trajectories passing farther distance nodes as preference-labeled trajectories with higher priority in the continuous maze tasks. Specifically, in the ant maze, these nodes are the two corners and the target point. In the PointMaze-Medium task, these nodes are set up in sequence as $\{(3,3), (6,4), (6,6)\}$. Finally, we select $\{(3,6),(5,6),(7,9)\}$ as check nodes in the PointMaze-Large task. Trajectories that passed through these nodes more ``gracefully'' and ``faster'' were preferred. Fig.~\ref{fig:ant_rate} presents the result.
%
%

Compared with the other baseline methods, LOPE can learn faster and achieve higher convergence values of success rate. LOPE has a solid theoretical foundation that provides a guarantee for performance improvement. Specifically, in the AntMaze environment, as shown in Fig.~\ref{fig:ant_rate}, only the LOPE agent can reach the target point and obtain the reward. At the same time, all other baseline methods failed to learn the policy leading to the reward. In the PointMaze tasks, as shown in Figs.~\ref {fig:medium} and~\ref{fig:large}, the LOPE achieves a higher success rate than other baseline approaches. Notably, the SMM method used a policy-level state marginal matching objective to encourage exploration, and LOPE significantly outperformed SMM, demonstrating the effectiveness of the trajectory-wise state marginal matching objective. The results compared with PEBBLE and RUNE indicate that our approach can avoid the potential information bottleneck and fully utilize the data information in preferences.


\subsection{Locomotion Tasks from MuJoCo} 
For easier access to preference-labeled trajectories, we note that SparseHalfCheetah and SparseHopper can only move along the $x$-axis. To determine preference-labeled trajectories, we set up three different nodes roughly evenly within the range of motion of the agent. Trajectories that passed through these nodes more ``gracefully'' and ``faster'' were preferred. Specifically, the term "graceful" is used to describe an agent that can travel a greater distance at each step with less cost, thereby minimizing control costs. The term``fast" indicates that a trajectory can approach the target in fewer steps, thereby minimizing the number of steps.
\begin{figure}[ht]
  \centering
  \subfloat[]{
    \label{fig:cheetah_rate}
    \includegraphics[width=4.2cm]{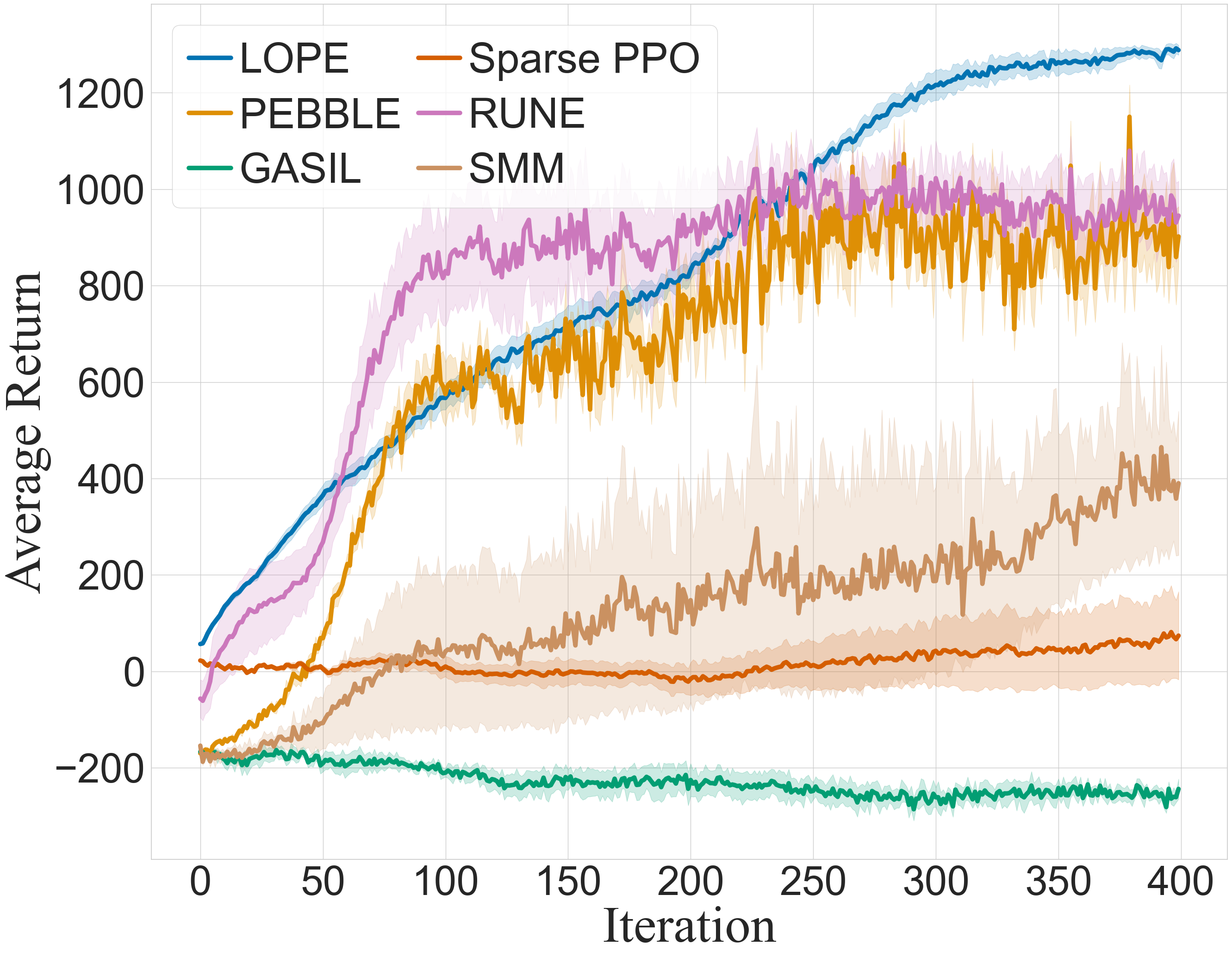}
    }
  \subfloat[]{
    \label{fig:hopper_rate}
    \includegraphics[width=4.2cm]{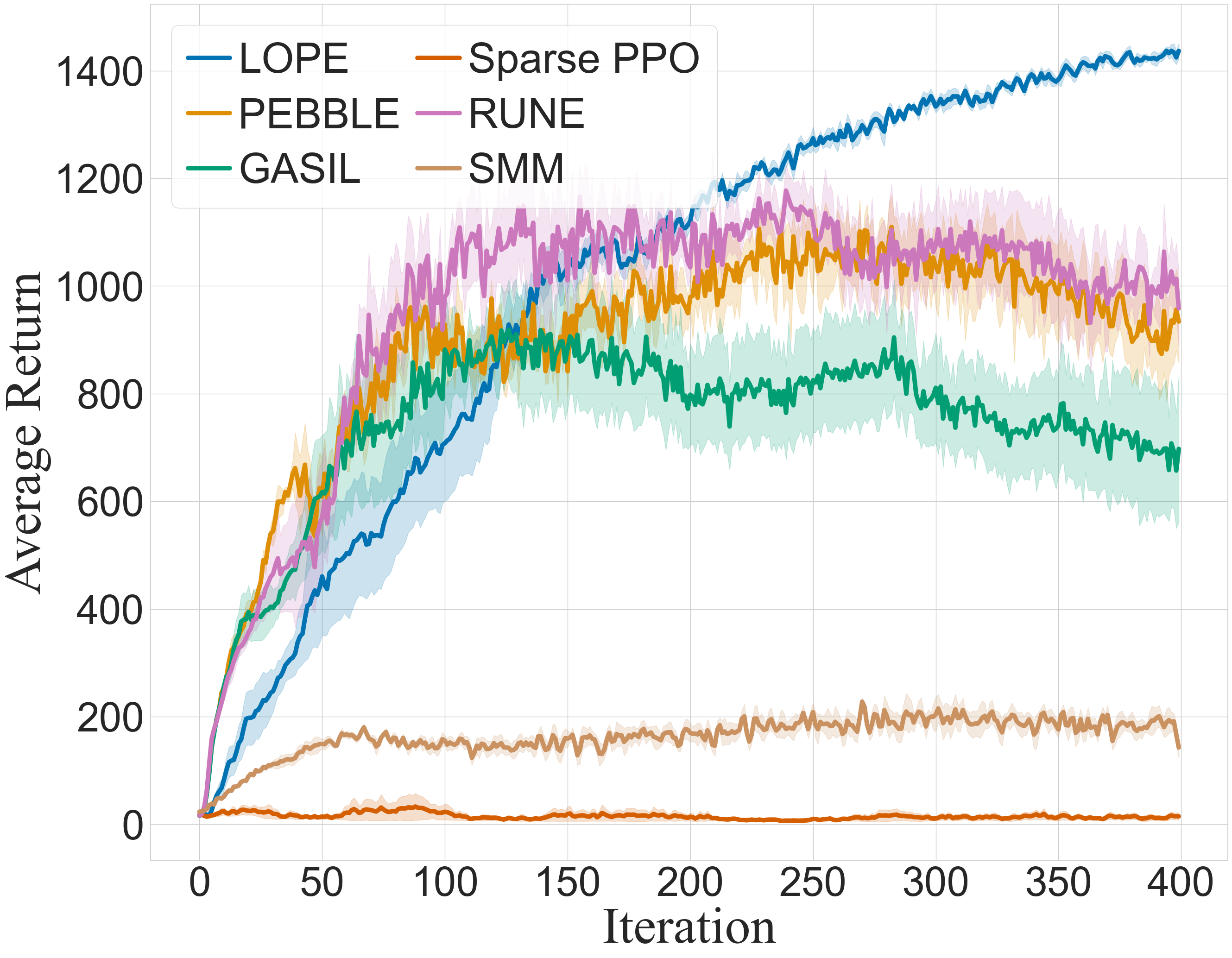}
    }
  \caption{Learning curves of average return on the SparseHalfCheetah and SparseHopper tasks.}
  \label{fig:locomotion_return}
\end{figure}

As shown in Fig.~\ref{fig:locomotion_return}, by selecting the preferred trajectories through the set nodes and considering them as guidance, the LOPE agent can reproduce similar behaviors quickly and gather the survival bonus in both sparse reward tasks. Specifically, in SparseHopper, LOPE achieves performance comparable to PPO with dense rewards, indicating that the policy can approach optimality even when guided only by preferences. Due to the complexity of high-dimensional dynamics, PPO exhibits suboptimal convergence in the HalfCheetah environment. LOPE shows robust performance but also exhibits a plateau, consistent with prior observations. We hypothesize that this is due to limited exploration coverage and propose off-policy extensions to address this in future work. 

On the other hand, although the PbRL methods, PEBBLE and RUNE, exploited human preferences similar to our method, these methods learned the optimal policy more slowly. These approaches do not leverage the information in preferences, such as the agent's coordinate position in the environment, and create significant information bottlenecks when utilizing previous successful experiences. This information bottleneck prevents the agent from directly learning from these demonstrations. The SMM agent performs unsatisfactorily on the SparseHalfCheetah and SparseHopper tasks, further demonstrating the effectiveness of the trajectory-wise state marginal matching objective. Other baseline methods depend on the action- or parameter-space noise sampled from the Gaussian distribution, which is inadequate and inefficient for hard-exploration tasks.

%
%
\begin{figure}[ht]
  \centering
  \subfloat[]{
    \includegraphics[width=4.3cm]{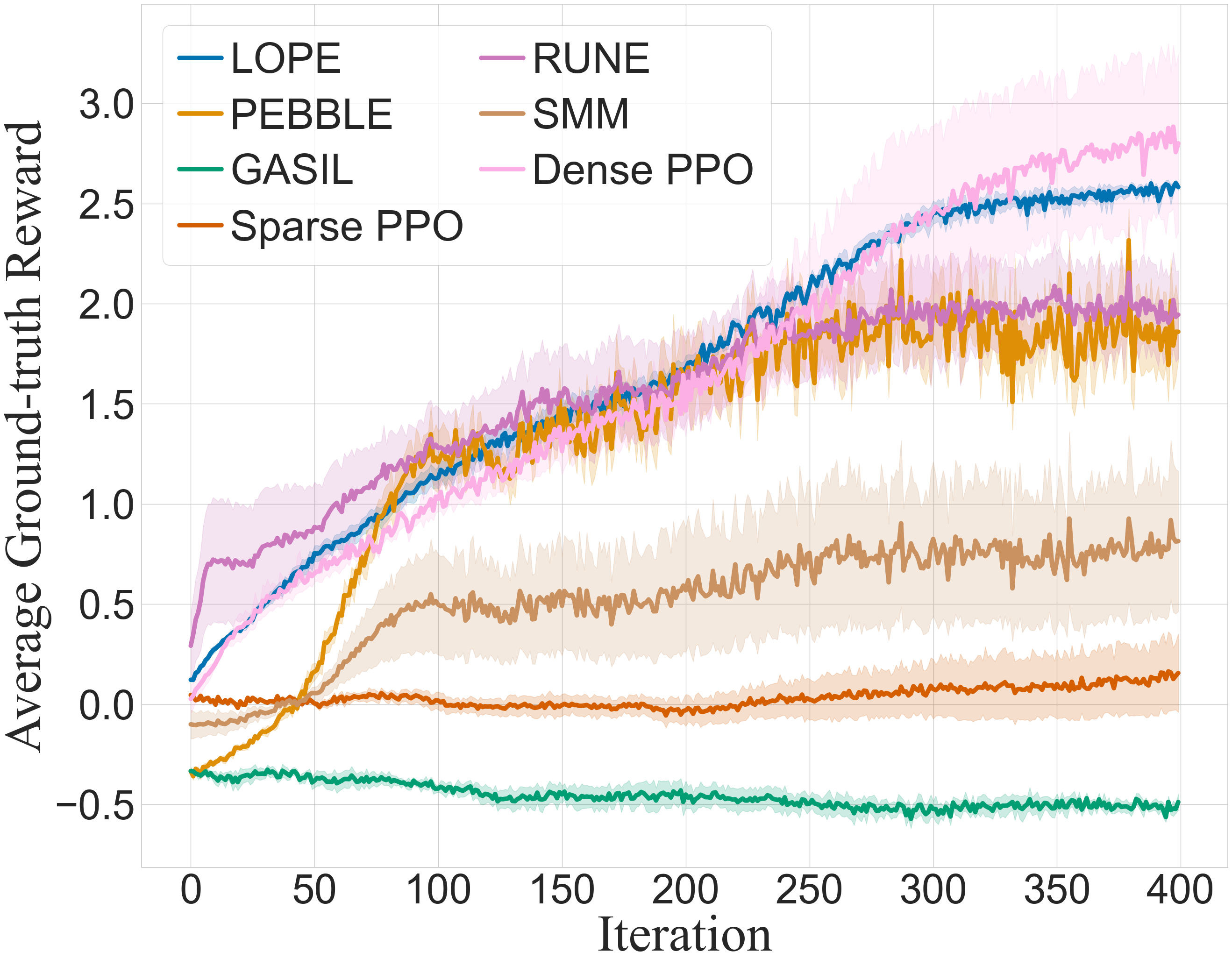}
    \label{fig:cheetah_ground_reward}
    }
    \subfloat[]{
    \includegraphics[width=4.2cm]{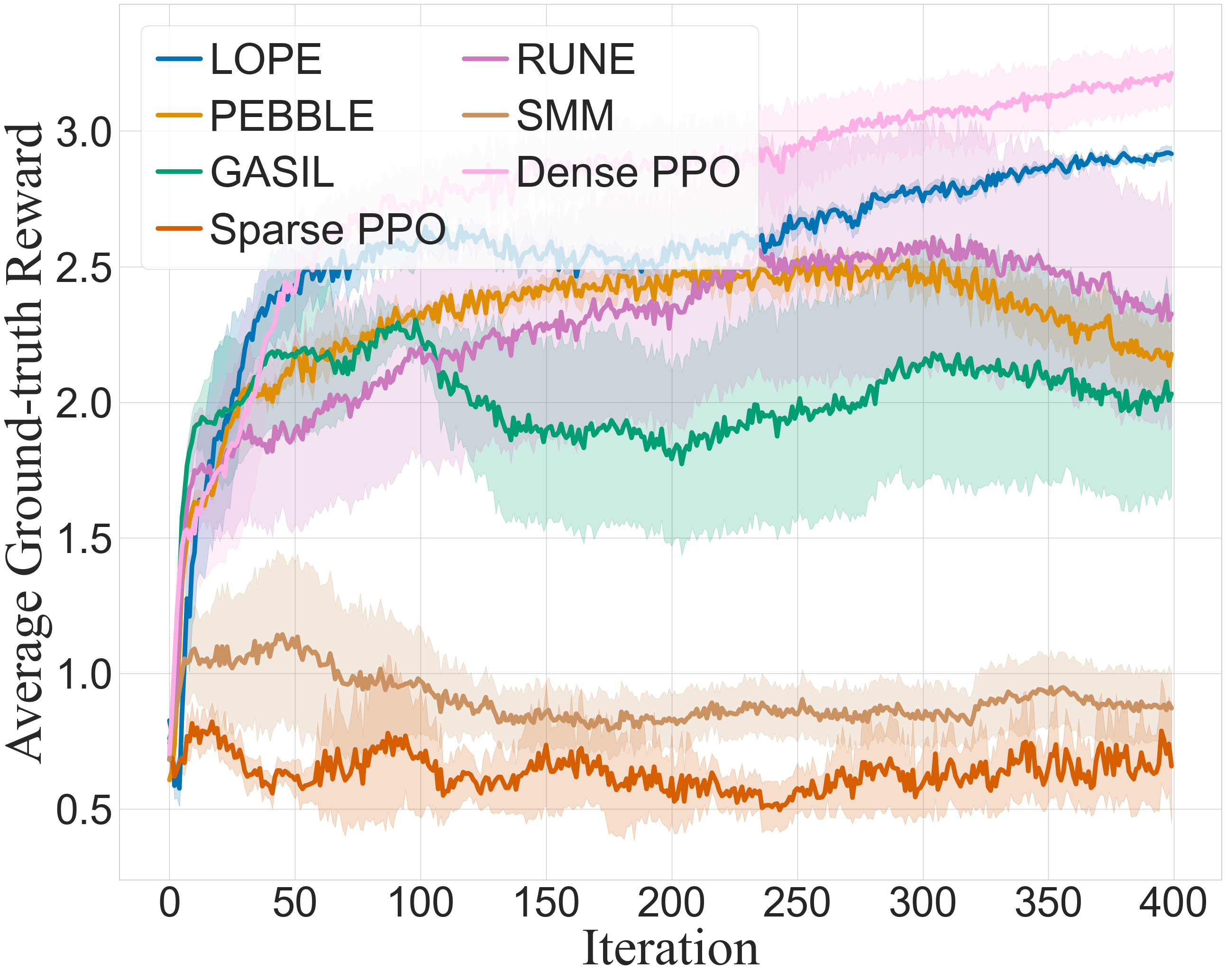}
    \label{fig:hopper_ground_reward}
    }
  \caption{Learning curves of ground-truth rewards on the SparseHalfCheetah and SparseHopper tasks.}
  \label{fig:reward_learning}
\end{figure}

\subsection{Evaluation of Ground-Truth Reward Learning}
A deeper examination of the experimental results is provided to verify the effectiveness of LOPE in learning near-optimal behaviors. In this experiment, we evaluated actions learned by different agents with the ground-truth reward function, which is the default setting of OpenAI Gym~\cite{1606.01540}. We then calculated the average ground-truth reward for each agent and compared them in Fig.~\ref{fig:reward_learning}. For comparison, we also trained PPO with this dense ground-truth reward function. 
\begin{figure}[ht]
\centering
\subfloat[]{
  \includegraphics[width=8.5cm]{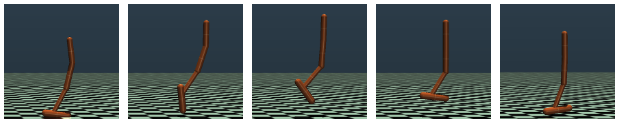}
  \label{fig:cheetah_traj}
}

\subfloat[]{
  \includegraphics[width=8.5cm]{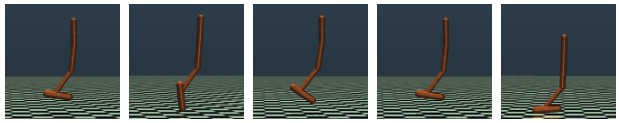}
  \label{fig:hopper_traj}
}
\caption{
  (a) Agent trained by LOPE; (b) Agent trained by PPO with the ground-truth reward function. 
}
\label{fig:trajs_presentation}
\end{figure}

\begin{figure*}[htb]
    \centering
    \subfloat[]{
      \label{fig:ablation}
      \includegraphics[width=4.5cm]{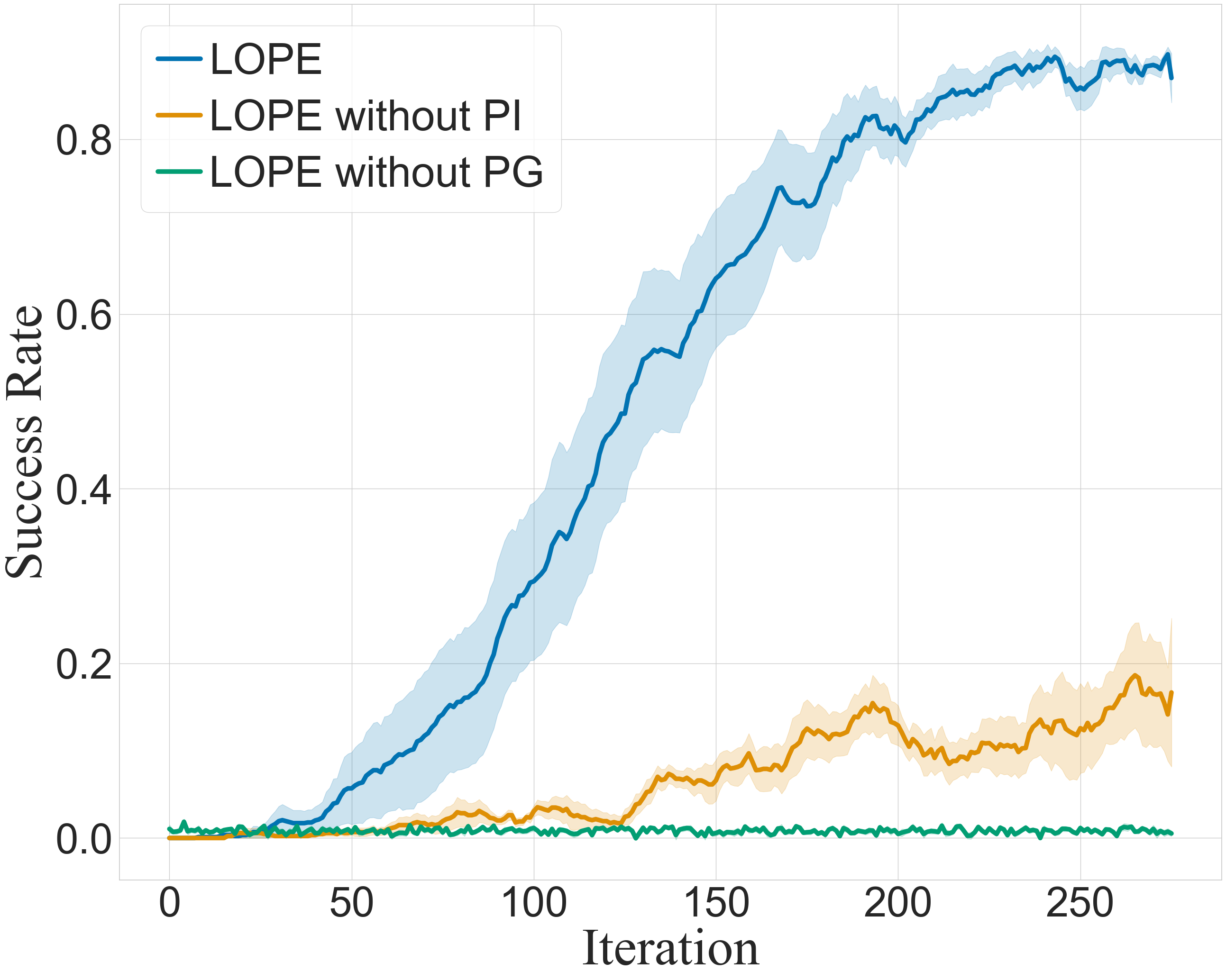}
    }
    \quad
    \subfloat[]{
      \label{fig:mislabel}
      \includegraphics[width=4.5cm]{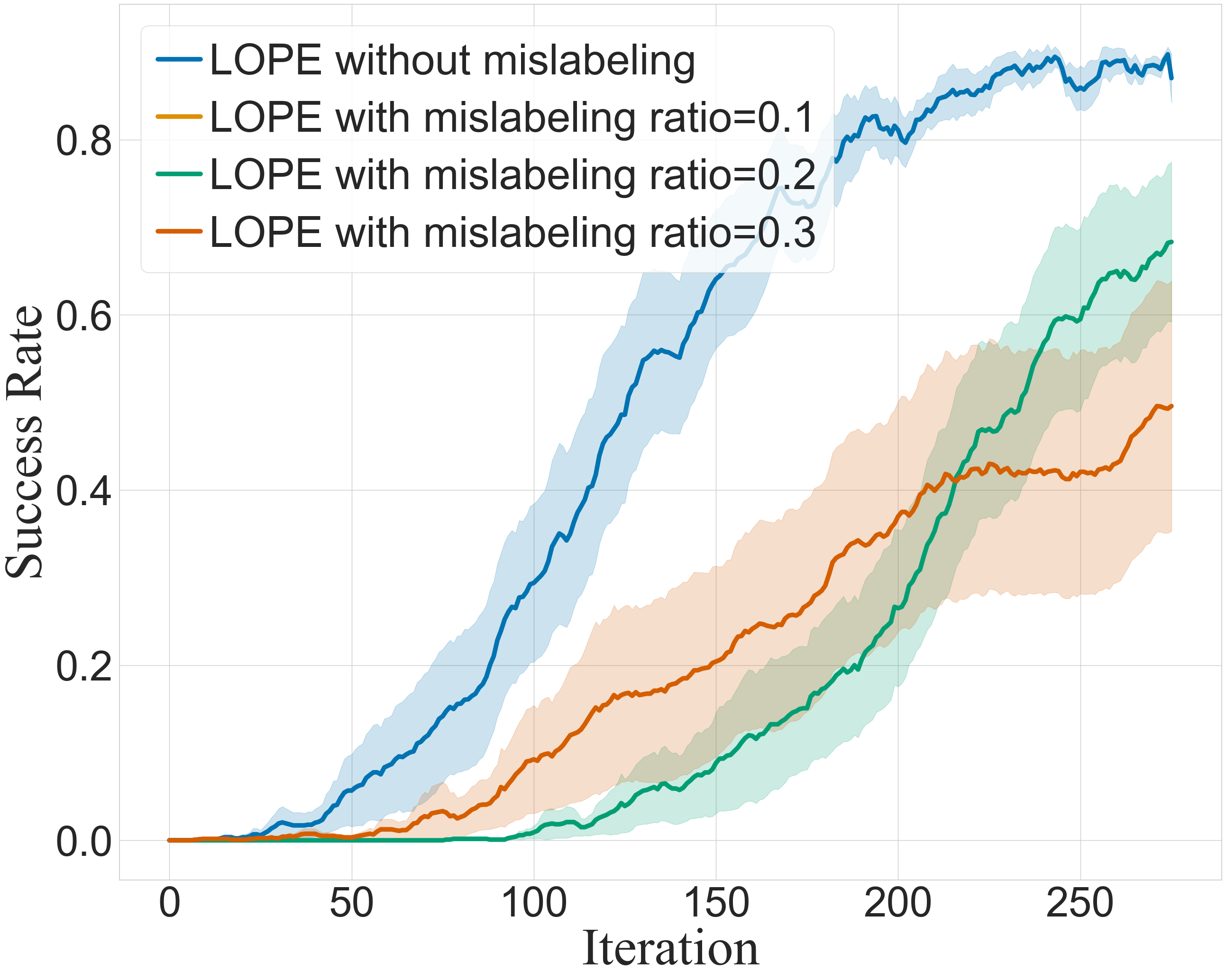}
    }
    \quad
    \subfloat[]{
      \label{fig:exchange}
      \includegraphics[width=4.5cm]{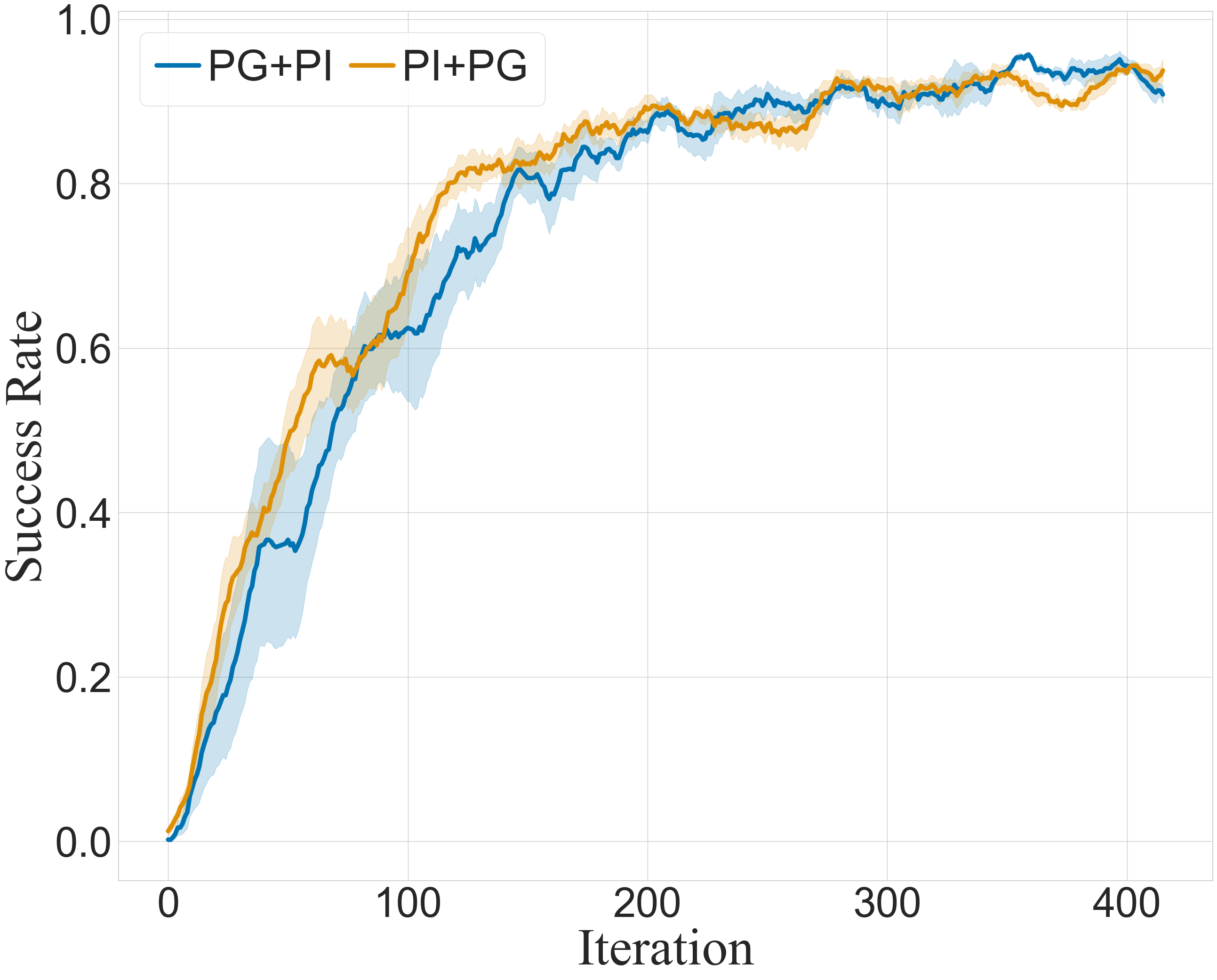}
    }
    \caption{(a) Ablation results of algorithm structure; (2) Performance comparison under mislabeling preferences; (3) Ablation results of exchanging the PI and PG steps.}
    \label{fig:ab_exps}
\end{figure*}

According to Fig.~\ref{fig:reward_learning}, the LOPE's average ground-truth reward dramatically increases at the beginning of training, similar to PPO's learning trend in the default OpenAI Gym reward setting. Meanwhile, the value of the LOPE's average ground-truth reward improves incrementally during policy optimization, and this value gradually approximates that of PPO with the default reward function by the end of training. This result suggests that human feedback can facilitate the agent's learning of near-optimal behaviors that are rewarded with a high score by the default OpenAI Gym reward function. 

To obtain a more intuitive presentation of the trajectories learned by the agent, we display the behaviors of agents trained using LOPE with human preferences and PPO with dense default rewards. Interestingly, LOPE can learn similar behavior patterns compared with those trained with PPO. This fact demonstrates that simple trajectory-wise state marginal matching can guide the agent in generating near-optimal behaviors, thereby avoiding the need for complicated reward engineering.

\subsection{Ablation Analyses}
The experimental results presented in the previous section indicate that LOPE outperforms other baseline approaches on several challenging tasks. We are now interested in whether these advantages still hold when the preference setting changes. We conducted the ablation experiments in five different aspects. We first conducted another ablation study on the algorithm structure to demonstrate the effectiveness of preference guidance. We then designed an ablation experiment and compared the LOPE's performance on preferences with different qualities. Moreover, we exchanged the PI and the PG steps and observed the experimental results. Finally, we considered using different kernel functions and combining the weakly supervised learning technique to test the performance of LOPE. These ablation studies are based on the grid world environment. 

\textbf{LOPE with different structures.} We ran LOPE without the policy improvement step (LOPE w/o PI) and LOPE without the preference guidance step (LOPE w/o PG) and obtained their results separately. The results are presented in Fig.~\ref{fig:ab_exps}\subref{fig:ablation}. LOPE w/o PG cannot find the treasure and learn the optimal policy. In contrast, LOPE w/o PI can find the treasure; however, the learning efficiency of the optimal policy is low. This result implies that preference guidance can enhance the agent's exploration efficiency, while the policy improvement step ensures the reproduction of these successful trajectories.

\textbf{Preferences with different qualities.} In this experiment, preference-labeled trajectories can be mislabeled with different ratios. Specifically, we experimented with synthetic noisy labels in the preference comparator (e.g., by randomly flipping a portion of pairwise feedback with ratios $0, 0.1, 0.2, 0.3$) and obtained corresponding experimental results. The performance comparison is shown in Fig.~\ref{fig:mislabel}. LOPE achieves the highest performance when the mislabeling ratio is 0. Moreover, we observed that LOPE remains relatively stable up to moderate noise levels ($\approx$20\%). These early results suggest that the implicit averaging effect of trajectory-level marginal matching provides a degree of robustness against noisy labels. 

\textbf{Exchanging the PI and PG steps.} We exchanged the PI and PG steps of LOPE and compared their performance in the grid-world task with random goals. "PI+PG" represents the LOPE version that performs the PI step first, while "PG+PI" represents the opposite. The results are presented in Fig.~\ref{fig:exchange}. The experimental result illustrates that these two methods can achieve approximately the same final performance. However, the PI+PG version learns faster than PG+PI in the early stage of the training, likely because updating the policy with stronger environmental signals (PI step) first establishes a better exploration foundation, upon which human preferences can more effectively guide further behavior..

\textbf{Ablation experiment of weakly supervised learning.}
We combined the weakly supervised learning technique to leverage unlabeled data to assist in the policy optimization process in the initial stage of training. A cross-entropy (CE) loss is added to the policy improvement objective function because of limited supervised signal information. The experimental result is shown in Fig.~\ref{fig:supervised}. This result demonstrates that the weakly supervised learning technique makes a significant contribution to improving the algorithm's performance during the early stages of training. A possible reason is that the weakly supervised learning loss function can increase the likelihood of the preferred action, encouraging the agent to visit promising regions with higher rewards.
\begin{figure}
    \centering
    \subfloat[]{
    \label{fig:supervised}
    \includegraphics[width=4.2cm]{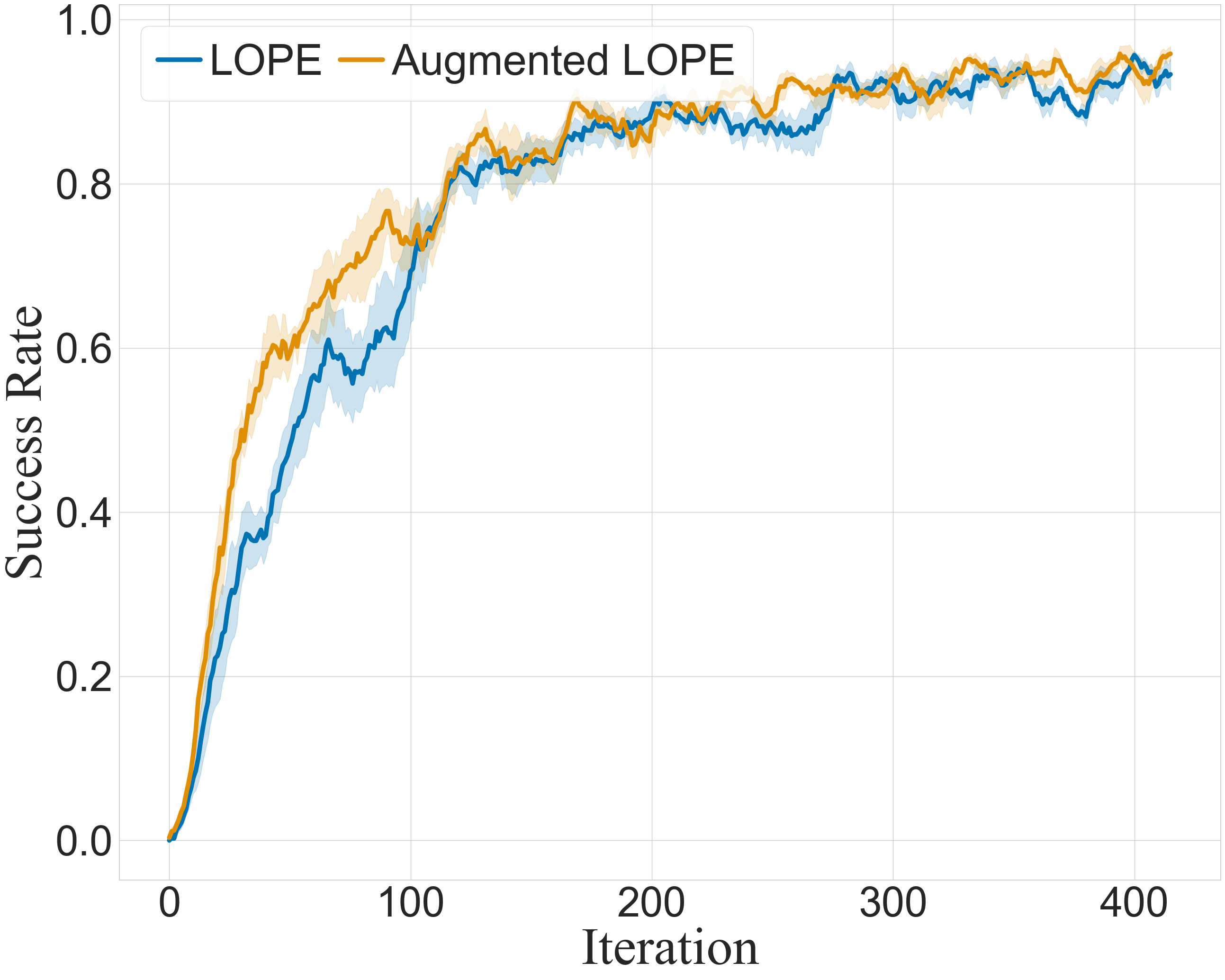}
    }
    \subfloat[]{
    \label{fig:kernel}
    \includegraphics[width=4.2cm]{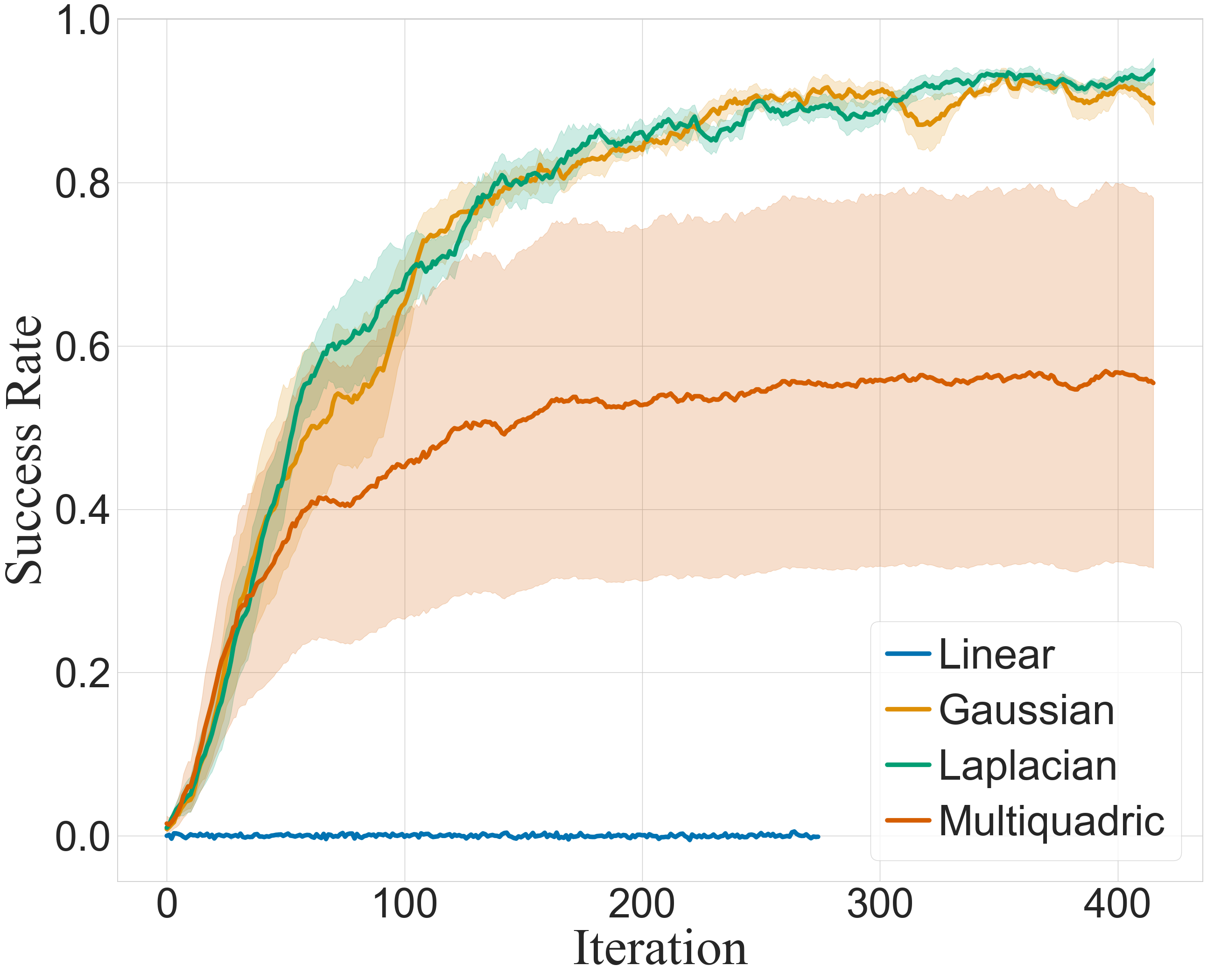}
    }
    \caption{(a) The performance comparison between the original LOPE and the LOPE augmented with weakly supervised learning; (b) Ablation results of different kernel functions.}
\end{figure}

\textbf{Ablation results of different kernel choices.}
We conducted an ablation study to assess the impact of different kernel functions used in the MMD-based preference-guided state marginal matching objective. Specifically, we evaluated four representative kernels: linear, Gaussian, Laplacian, and multiquadric. As shown in Fig.~\ref{fig:kernel}, the Laplacian kernel delivers the most favorable performance, achieving both fast convergence and high final success rate. The Gaussian kernel follows closely, exhibiting similarly strong results. The multiquadric kernel leads to slower learning and plateaus at a lower success rate, indicating poor generalization to preferred trajectories. Notably, the linear kernel completely fails to guide policy learning, resulting in zero improvement throughout training. This highlights the importance of nonlinearity and localized sensitivity in kernel selection. Overall, the experiment suggests that Laplacian and Gaussian kernels are most effective for modeling trajectory-level preferences under our marginal matching framework.

\section{Conclusion}
\label{sec:conclusion}
This study develops a practical PbRL algorithm for hard-exploration tasks with long horizons and sparse rewards. Our key idea is to adjust the agent's exploration by considering preferred trajectories as guidance, thereby avoiding the need to learn a reward function. LOPE involves a two-step sequential policy optimization process, including trust-region-based policy improvement and preference guidance steps. Preference guidance is reformulated as a novel trajectory-wise state marginal matching problem that minimizes a novel maximum mean discrepancy distance between the preferred trajectories and the learned policy. We further provide a theoretical analysis to characterize the performance improvement bound and evaluate the effectiveness of the proposed approach. This result implies convergence to a near-optimal policy under mild assumptions on the preference quality and KL constraints, thus offering both performance insight and theoretical justification of LOPE’s behavior. Extensive experimental results verify the superior performance of LOPE over different competitive baseline methods on several sparse-reward benchmark tasks. 

In future work, we aim to enhance the robustness and generality of LOPE further. One important direction is to enhance the system’s reliability in the face of noisy, conflicting, or low-quality human preferences. While LOPE exhibits preliminary resilience under moderate synthetic noise, incorporating uncertainty-aware preference modeling (e.g., confidence-weighted aggregation or disagreement filtering) could significantly improve robustness in real-world deployments. Another promising extension is to integrate multimodal human feedback, such as natural language descriptions or voice instructions, to enrich the agent’s understanding of task intent. LOPE’s modular framework naturally accommodates these enhancements, and we consider these challenges to be important next steps toward building more practical and human-aligned preference-based reinforcement learning systems.



{\appendices
\section{Proof of Lemma~\ref{lem:pref_guide}}
\label{sec:proof_lem1}
\begin{proof}
  The objective function in Eq.~\eqref{eq:perference_guidance} can be transformed as follows:
  \begin{equation}
    \begin{aligned}
      \label{eq:expanded_pg_expectation}
      \underset{\tau \sim \rho_\theta}{\mathbb{E}}\left[\underset{\upsilon \sim \mathcal{P}}{\mathbb{E}} \left[d(\tau, \upsilon)\right]\right] &= \underset{\tau \sim \rho_\theta}{\mathbb{E}}\left[\underset{\upsilon \sim \mathcal{P}}{\mathbb{E}} \left[\underset{s_\tau \sim \rho_\tau}{\mathbb{E}} \left[\operatorname{dist}(s_\tau, \upsilon)\right]\right]\right]\\
      &= \underset{\tau \sim \rho_\theta}{\mathbb{E}}\left[\underset{s_\tau \sim \rho_\tau}{\mathbb{E}}\left[\underset{\upsilon \sim \mathcal{P}}{\mathbb{E}} \left[\operatorname{dist}(s_\tau, \upsilon)\right]\right]\right],
    \end{aligned}
  \end{equation}
  where $d_\theta$ is the discounted state visitation distribution defined in Section~\ref{sec:Preliminaries}. In this manner, $\underset{\upsilon \sim \mathcal{P}}{\mathbb{E}} \left[\operatorname{dist}(s_\tau, \upsilon)\right]$ in Eq.~\eqref{eq:expanded_pg_expectation} can be viewed as an intrinsic reward and let $r_g(s, a)=\underset{\upsilon \sim \mathcal{P}}{\mathbb{E}} \left[\operatorname{dist}(s_\tau, \upsilon)\right]$. Then, we obtain 
  \begin{equation}
    \begin{aligned}
      \label{eq:rg_expectation}
      \underset{\begin{subarray}{l} s \sim d_\theta \\ a \sim \pi_\theta \end{subarray}}{\mathbb{E}} \left[r_g(s,a)\right] = \underset{\tau \sim \rho_\theta}{\mathbb{E}}\left[\underset{\upsilon \sim \mathcal{P}}{\mathbb{E}} \left[d(\tau, \upsilon)\right]\right].
    \end{aligned}
  \end{equation}
  By substituting the original objective function in Eq.~\eqref{eq:perference_guidance} with the above Eq.~\eqref{eq:rg_expectation}, we obtain Eq.~\eqref{eq:expanded_perference_guidance}. Hence, this MMD divergence minimization problem in Eq.~\eqref{eq:constraint} is reduced into a policy-gradient optimization algorithm with intrinsic rewards learned from human feedback. Then, the gradient of the objective function in Eq.~\eqref{eq:expanded_perference_guidance} for parameters $\theta$ is derived as follows:
  \begin{equation}
    \label{equ:nabla_E_D_mmd}
    \begin{aligned}
      g = \underset{\begin{subarray}{l} s \sim d_\theta \\ a \sim \pi_\theta \end{subarray}}{\mathbb{E}}
      \left[\nabla_{\theta}\log\pi_{\theta}(a \vert s)Q_g(s,a)\right],
    \end{aligned}
  \end{equation}
  where
  \begin{equation}
    \label{equ:Q}
    Q_g(s_t, a_t) = \underset{s_{t+1},a_{t+1},\dots}{\mathbb{E}}\left[\sum_{l=0}^{\infty}\gamma^{l}r_g(s_{t+l}, a_{t+l})\right].
  \end{equation}
\end{proof}

\section{Proof of Policy Performance Bound}
\label{sec:proof_bound}
\begin{definition}[$\alpha$-coupled policies~\cite{schulman2015trust,kang2018policy}]
\label{def:coupled}
    $(\pi, \tilde{\pi})$ are $\alpha$-coupled policies if they defines a joint probability distribution $(a, \tilde{a})\vert s$, such that $P(a\neq \tilde{a}\vert s)\le \alpha$ for all $s$. $\pi$ and $\tilde{\pi}$ will denote the marginal distributions of $a$ and $\tilde{a}$, respectively.
\end{definition}

Given two arbitrary policies $\pi_1$ and $\pi_2$, define the total variation discrepancy as $D_{\rm TV} (\pi_1,\pi_2)[s] = (1/2) \sum_a |\pi_1(a|s) - \pi_2(a|s)|$ and $D_{\rm TV}^{\max}(\pi_1, \pi_2)=\max_{s\in\mathcal{S}}D_{\rm TV} (\pi_1,\pi_2)[s]$. 
\begin{lemma}[Adopted from~\cite{levin2017markov}] 
\label{lem:TV_alpha}
Assume $p_X$ and $p_Y$ are probability distributions with $D_{\rm TV}(p_X, p_Y)=\alpha$, then there exists a joint probability distribution (X, Y) whose marginal distributions are $p_X$, $p_Y$, where $X=Y$ with probability $1-\alpha$.
\end{lemma}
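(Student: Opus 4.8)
The plan is to prove the lemma by the standard \emph{maximal coupling} construction, which realizes the coupling inequality with equality. The entire argument hinges on a single identity relating the total-variation distance to the overlap of the two distributions. Working in the discrete setting consistent with the definition $D_{\rm TV}(p_X, p_Y) = (1/2)\sum_a |p_X(a) - p_Y(a)|$ used in the paper, I would first write the pointwise decomposition $|p_X(a) - p_Y(a)| = p_X(a) + p_Y(a) - 2\min(p_X(a), p_Y(a))$ and sum over $a$, obtaining
\begin{equation}
\sum_a \min\bigl(p_X(a), p_Y(a)\bigr) = 1 - D_{\rm TV}(p_X, p_Y) = 1 - \alpha.
\end{equation}
This identity says that the mass shared by $p_X$ and $p_Y$ is exactly $1 - \alpha$, which is precisely the probability with which the construction will force $X = Y$.

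Next I would build the joint law of $(X, Y)$ as an explicit two-stage mixture. Define the overlap measure $m(a) = \min(p_X(a), p_Y(a))$ and the residual measures $r_X(a) = (p_X(a) - m(a))/\alpha$ and $r_Y(a) = (p_Y(a) - m(a))/\alpha$. By the overlap identity each of $m/(1-\alpha)$, $r_X$, and $r_Y$ is a bona fide probability distribution, and $r_X, r_Y$ are nonnegative because $p_X(a) - m(a) = (p_X(a) - p_Y(a))_+$. The sampling scheme is then: with probability $1 - \alpha$ draw $Z \sim m/(1-\alpha)$ and set $X = Y = Z$; with probability $\alpha$ draw $X \sim r_X$ and $Y \sim r_Y$ independently.

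I would then verify the two required properties. For the marginals, a direct computation gives $P(X = a) = (1-\alpha)\cdot m(a)/(1-\alpha) + \alpha\cdot r_X(a) = m(a) + (p_X(a) - m(a)) = p_X(a)$, and symmetrically $P(Y = a) = p_Y(a)$, so the joint law indeed has the prescribed marginals. For the matching probability, the key observation is that $r_X$ and $r_Y$ have \emph{disjoint supports}: $r_X(a) > 0$ forces $p_X(a) > p_Y(a)$, while $r_Y(a) > 0$ forces the reverse inequality, so in the second branch $X \neq Y$ almost surely. Consequently the event $\{X = Y\}$ coincides up to a null set with the first branch, yielding $P(X = Y) = 1 - \alpha$ exactly.

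Because the construction is explicit, there is no genuine analytic obstacle; the only points requiring care are confirming that $m$, $r_X$, and $r_Y$ are legitimate (sub)probability measures (which is exactly what the overlap identity guarantees) and the disjoint-support observation that upgrades the generic coupling inequality $P(X \neq Y) \ge \alpha$ to the sharp equality $P(X = Y) = 1 - \alpha$. For completeness I would remark that the same construction extends verbatim to general, non-discrete spaces by replacing the pointwise minimum with the density of $p_X \wedge p_Y$ with respect to any common dominating measure such as $p_X + p_Y$, so the lemma holds as stated for arbitrary probability distributions.
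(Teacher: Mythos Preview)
Your argument is correct: this is exactly the classical \emph{maximal coupling} construction, and every step you outline goes through (the overlap identity, the mixture sampler, the marginal check, and the disjoint-support observation that forces $P(X=Y)=1-\alpha$ exactly). The only cosmetic omissions are the degenerate cases $\alpha=0$ and $\alpha=1$, where the normalizations of $m/(1-\alpha)$ or $r_X,r_Y$ are formally undefined; both are handled trivially by taking $X=Y\sim p_X$ in the first case and always sampling from the residuals in the second.

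As for comparison with the paper: the paper does \emph{not} supply a proof of this lemma at all. It is stated as ``Adopted from~[Levin et al.]'' and used as a black box in the proof of Lemma~\ref{lem:assumption-improvement} to conclude that $\tilde{\pi}$ and $\pi_b$ are $\alpha$-coupled once $D_{\rm TV}^{\max}(\tilde{\pi},\pi_b)=\alpha$. So your write-up is strictly more than what the paper provides; it fills in the cited result with the standard textbook construction.
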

\begin{lemma}[Adopted from~\cite{schulman2015trust,achiam2017constrained}]
  \label{lem:performancediff}
  Suppose $\pi_1$, $\pi_2$ are two stochastic policies defined on $\mathcal{S}\times\mathcal{A}$, we have:
  \begin{equation}
      J(\pi_1) = J(\pi_2) + \mathbb{E}_{\tau\sim\rho_{\pi_1}}\left[\sum_{t=0}^\infty \gamma^t A_{\pi_2}(s,a)\right].
  \end{equation}
\end{lemma}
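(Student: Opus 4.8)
The plan is to prove this identity, which is the standard \emph{performance difference lemma}, by rewriting the advantage through the one-step Bellman relation and then exploiting a telescoping cancellation of the value function along trajectories generated by $\pi_1$. (Here the summand $A_{\pi_2}(s,a)$ is understood as $A_{\pi_2}(s_t,a_t)$ evaluated at the $t$-th state--action pair of $\tau$.) First I would record the one-step form of the advantage,
\begin{equation}
  A_{\pi_2}(s,a) = \mathbb{E}_{s'\sim P(\cdot\vert s,a)}\left[r(s,a) + \gamma V_{\pi_2}(s') - V_{\pi_2}(s)\right],
\end{equation}
which follows immediately from $A_{\pi_2}=Q_{\pi_2}-V_{\pi_2}$ together with the Bellman identity $Q_{\pi_2}(s,a)=\mathbb{E}_{s'}[r(s,a)+\gamma V_{\pi_2}(s')]$.

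Next I would substitute this expression into the discounted sum over a trajectory $\tau=\{s_0,a_0,s_1,a_1,\dots\}$ drawn under $\pi_1$, obtaining
\begin{equation}
  \mathbb{E}_{\tau\sim\rho_{\pi_1}}\left[\sum_{t=0}^\infty \gamma^t A_{\pi_2}(s_t,a_t)\right]
  = \mathbb{E}_{\tau\sim\rho_{\pi_1}}\left[\sum_{t=0}^\infty \gamma^t\bigl(r(s_t,a_t)+\gamma V_{\pi_2}(s_{t+1}) - V_{\pi_2}(s_t)\bigr)\right].
\end{equation}
I would then split the right-hand side into the reward term and the value-function term. The reward term is $\mathbb{E}_{\tau\sim\rho_{\pi_1}}[\sum_t \gamma^t r(s_t,a_t)] = J(\pi_1)$ by definition of the objective in Eq.~\eqref{eq:J_return}. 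The remaining term telescopes,
\begin{equation}
  \mathbb{E}_{\tau\sim\rho_{\pi_1}}\left[\sum_{t=0}^\infty\bigl(\gamma^{t+1} V_{\pi_2}(s_{t+1}) - \gamma^t V_{\pi_2}(s_t)\bigr)\right] = -\,\mathbb{E}_{s_0\sim\rho_0}\left[V_{\pi_2}(s_0)\right] = -\,J(\pi_2),
\end{equation}
since the only surviving boundary term is the $t=0$ value and $s_0\sim\rho_0$. Combining these two evaluations gives $\mathbb{E}_{\tau\sim\rho_{\pi_1}}[\sum_t \gamma^t A_{\pi_2}] = J(\pi_1)-J(\pi_2)$, and rearranging yields the claim.

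The main obstacle is purely one of rigor rather than mechanics: I must justify interchanging the infinite sum with the expectation and the telescoping of an infinite series. I would handle this by invoking $\gamma\in(0,1)$ together with boundedness of $r$ (hence of $V_{\pi_2}$ and $A_{\pi_2}$), so that the partial sums are dominated and Fubini/dominated-convergence applies, making both the swap and the telescoping cancellation valid. The second point of care is that $s_{t+1}$ is sampled under the \emph{$\pi_1$-induced} dynamics: both $V_{\pi_2}(s_{t+1})$ and $V_{\pi_2}(s_t)$ are evaluated on states visited by $\pi_1$, which is precisely what makes consecutive terms cancel; I would emphasize that it is the advantage of $\pi_2$ but the visitation of $\pi_1$ that drives the argument.
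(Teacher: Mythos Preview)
Your proposal is correct and is precisely the standard telescoping argument for the performance difference lemma; the paper itself omits the proof and simply defers to \cite{schulman2015trust,achiam2017constrained}, whose proof is the same Bellman-expansion-plus-telescoping computation you outline. Nothing to add.
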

The proof details can be found in~\cite{schulman2015trust}, and we omit them here for brevity. Given two arbitrary stochastic policies $\pi_1$ and $\pi_2$, define the expected advantage of the policy $\pi_1$ over the policy $\pi_2$ at state $s$ as $A^{\pi_1\vert\pi_2}(s)$:
\begin{equation}
    A^{\pi_1\vert\pi_2}(s) = \mathbb{E}_{a\sim\pi_1(\cdot\vert s)}\left[A_{\pi_2}(s,a)\right].
\end{equation}


According to Lemma~\ref{lem:performancediff}, we can derive the following conclusion. Given three arbitrary stochastic policies $\pi$, $\tilde{\pi}$, $\pi_b$, then the following equation holds:
\begin{equation}
    \label{eq:J_diff_1}
    \begin{aligned}
        &J(\tilde{\pi}) - J(\pi_b) \\ 
        = &J(\pi_b) + \underset{\tau\sim\rho_{\tilde{\pi}}}{\mathbb{E}}\left[\sum^\infty_{t=0}\gamma^t A^{\tilde{\pi}\vert\pi_b}(s_t)\right] - J(\pi_b)\\
        = &\underset{\tau\sim\rho_{\tilde{\pi}}}{\mathbb{E}}\left[\sum^\infty_{t=0}\gamma^t A^{\tilde{\pi}\vert\pi_b}(s_t)\right].
    \end{aligned}
\end{equation}
Here, Eq.~\eqref{eq:J_diff_1} is obtained by defining the expected return $J(\tilde{\pi})$ of $\tilde{\pi}$ over $\pi_b$ as $J(\pi_b) + \mathbb{E}_{\tilde{\pi}}\left[\sum^\infty_{t=0}\gamma^t A^{\tilde{\pi}\vert\pi_b}(s_t)\right]$. If we define the expected returns of $\tilde{\pi}$ and $\pi$ over $\pi$, then we have:
\begin{align}
    \label{eq:tilde_J_over_pi} J(\tilde{\pi}) &= J(\pi) + \mathbb{E}_{\tau\sim\rho_{\tilde{\pi}}}\left[\sum^\infty_{t=0}\gamma^t A^{\tilde{\pi}\vert\pi}(s_t)\right],\\
    \label{eq:J_b_over_pi} J(\pi_b) &= J(\pi) + \mathbb{E}_{\tau\sim\rho_{b}}\left[\sum^\infty_{t=0}\gamma^t A^{\pi_b\vert\pi}(s_t)\right].
\end{align}
Substitute Eqs.~\eqref{eq:tilde_J_over_pi} and~\eqref{eq:J_b_over_pi} into $J(\tilde{\pi}) - J(\pi_b)$, we have
\begin{equation}
\label{eq:J_diff_2}
    \begin{aligned}
        &J(\tilde{\pi}) - J(\pi_b) \\ 
        = &\underset{\tau\sim\rho_{\tilde{\pi}}}{\mathbb{E}}\left[\sum^\infty_{t=0}\gamma^t A^{\tilde{\pi}\vert\pi}(s_t)\right] - \underset{\tau\sim\rho_{b}}{\mathbb{E}}\left[\sum^\infty_{t=0}\gamma^t A^{\pi_b\vert\pi}(s_t)\right].
    \end{aligned}
\end{equation}

Combining Eqs.~\eqref{eq:J_diff_1} and~\eqref{eq:J_diff_2} gives
\begin{equation}
    \begin{aligned}
        &J(\tilde{\pi}) - J(\pi_b) \\ 
        = &\underset{\tau\sim\rho_{\tilde{\pi}}}{\mathbb{E}}\left[\sum^\infty_{t=0}\gamma^t A^{\tilde{\pi}\vert\pi_b}(s_t)\right]\\
        = &\underset{\tau\sim\rho_{\tilde{\pi}}}{\mathbb{E}}\left[\sum^\infty_{t=0}\gamma^t A^{\tilde{\pi}\vert\pi}(s_t)\right] - \underset{\tau\sim\rho_{b}}{\mathbb{E}}\left[\sum^\infty_{t=0}\gamma^t A^{\pi_b\vert\pi}(s_t)\right].
    \end{aligned}
\end{equation}
A similar result can be found in~\cite{kang2018policy}.

\begin{lemma}[Adopted from~\cite{kang2018policy}]
\label{lem:adv_bound}
    Given that $\pi_1$, $\pi_2$ are $\alpha$-coupled policies, for any state $s$, then we have:
    \begin{equation}
    \left\vert\underset{s\sim\pi_1}{\mathbb{E}}\left[A^{\pi_1\vert\pi_2}(s)\right]\right\vert \le 2\alpha(1-(1-\alpha)^t)\max_{s,a}\vert A_{\pi_2}(s,a)\vert.
    \end{equation}
\end{lemma}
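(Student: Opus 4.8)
The plan is to run the $\alpha$-coupling argument that underlies the TRPO/CPO-style performance bounds (cf.~\cite{schulman2015trust,achiam2017constrained,kang2018policy}), built on two facts that are already available: the zero-mean identity $\mathbb{E}_{a\sim\pi_2(\cdot\vert s)}[A_{\pi_2}(s,a)]=0$ for every $s$, and Lemma~\ref{lem:TV_alpha}, which realizes the pair $(a,\tilde a)\vert s$ of Definition~\ref{def:coupled} so that $a=\tilde a$ with probability at least $1-\alpha$.

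First I would establish the pointwise estimate $|A^{\pi_1\vert\pi_2}(s)|\le 2\alpha\epsilon$ for every state $s$, with $\epsilon=\max_{s,a}|A_{\pi_2}(s,a)|$. Writing $A^{\pi_1\vert\pi_2}(s)=\mathbb{E}_{a\sim\pi_1}[A_{\pi_2}(s,a)]-\mathbb{E}_{a\sim\pi_2}[A_{\pi_2}(s,a)]$ (the second term is zero by the identity above) and representing both expectations over the coupled pair $(a,\tilde a)\vert s$, the integrand $A_{\pi_2}(s,a)-A_{\pi_2}(s,\tilde a)$ vanishes on $\{a=\tilde a\}$ and is bounded by $2\epsilon$ on its complement, whose probability is at most $\alpha$; this yields the claimed pointwise bound.

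Next I would lift the coupling to entire roll-outs: generate the trajectories of $\pi_1$ and $\pi_2$ simultaneously, applying the coupling of Definition~\ref{def:coupled} at each visited state, so that the two trajectories coincide up to the first step at which the coupled actions differ. Let $N_t$ be the number of disagreements among the first $t$ steps; applying Lemma~\ref{lem:TV_alpha} step by step gives $\mathbb{P}(N_t=0)\ge(1-\alpha)^t$, hence $\mathbb{P}(N_t\ge1)\le 1-(1-\alpha)^t$. I would then split $\mathbb{E}_{s_t\sim\pi_1}[A^{\pi_1\vert\pi_2}(s_t)]$ over $\{N_t=0\}$ and $\{N_t\ge1\}$: on the ``no-disagreement'' event the state $s_t$ produced along $\pi_1$ has the same conditional law as the one produced along $\pi_2$, so this part of the expectation is matched by the corresponding quantity for $\pi_2$, namely $\mathbb{E}_{s_t\sim\pi_2}[A^{\pi_2\vert\pi_2}(s_t)]=0$, and drops out; on $\{N_t\ge1\}$ I would bound the integrand by the pointwise estimate $2\alpha\epsilon$ of the previous step and the event probability by $1-(1-\alpha)^t$. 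Combining the two cases gives $|\mathbb{E}_{s_t\sim\pi_1}[A^{\pi_1\vert\pi_2}(s_t)]|\le 2\alpha(1-(1-\alpha)^t)\epsilon$.

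The step I expect to be the main obstacle is making the trajectory-level coupling precise and, in particular, justifying the cancellation on $\{N_t=0\}$ — one must argue carefully that conditioning on ``no disagreement so far'' produces identical state laws for $s_t$ under the two roll-outs, so that the $\pi_1$-contribution is genuinely matched by the identically-zero $\pi_2$-contribution, and then keep track of the constant so that it comes out as $2\alpha$ rather than a larger multiple; this is exactly the delicate bookkeeping in the TRPO/CPO appendices. Once this per-time-step bound is in hand, its use is routine: plugged into the performance-difference identity $J(\tilde{\pi})-J(\pi_b)=\mathbb{E}_{\tau\sim\rho_{\tilde{\pi}}}[\sum_t\gamma^t A^{\tilde{\pi}\vert\pi_b}(s_t)]$ (Lemma~\ref{lem:performancediff} and the subsequent derivation), it gives $\sum_t\gamma^t\cdot 2\alpha(1-(1-\alpha)^t)\epsilon_b\le 2\alpha^2\epsilon_b\,\gamma/(1-\gamma)^2$, and Pinsker's inequality $\alpha=D_{\rm TV}^{\max}(\tilde{\pi},\pi_b)\le\sqrt{\beta/2}$ converts this into the bound feeding Lemma~\ref{lem:assumption-improvement}.
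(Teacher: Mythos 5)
The paper itself does not prove this lemma: it cites \cite{kang2018policy} and explicitly omits the details, so your attempt has to be judged against the standard coupling argument you are reconstructing rather than against anything in the text. Within that argument, your first step (the pointwise bound $\vert A^{\pi_1\vert\pi_2}(s)\vert\le 2\alpha\epsilon$ via the zero-mean identity and the coupling of Definition~\ref{def:coupled}) and the estimate $\mathbb{P}(N_t\ge 1)\le 1-(1-\alpha)^t$ are both fine.

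The gap is the claimed cancellation on $\{N_t=0\}$. What the trajectory-level coupling actually gives you there is that the conditional law of $s_t$ is the same under both roll-outs, so you may replace $\mathbb{E}\bigl[A^{\pi_1\vert\pi_2}(s_t)\mid N_t=0\bigr]$ computed along the $\pi_1$ roll-out by the same expression computed along the $\pi_2$ roll-out: the \emph{measure} can be swapped, but the \emph{integrand} is still $A^{\pi_1\vert\pi_2}$, which is not identically zero. Your argument silently swaps the integrand to $A^{\pi_2\vert\pi_2}\equiv 0$, and that is not licensed by the coupling. Consequently the $\{N_t=0\}$ piece does not drop out; it contributes up to $2\alpha\epsilon\,\mathbb{P}(N_t=0)$, and summing the two pieces only recovers the trivial bound $2\alpha\epsilon$ with no $t$-dependence. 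The defect is not repairable bookkeeping: the inequality as stated fails at $t=0$, where the right-hand side is $0$ while $\bigl\vert\mathbb{E}_{s_0\sim\rho_0}[A^{\pi_1\vert\pi_2}(s_0)]\bigr\vert$ is generically positive (a one-state, two-action bandit with $\pi_2$ uniform and $\pi_1$ deterministic gives left-hand side $1/2$ and right-hand side $0$). The quantity for which the $\{N_t=0\}$ cancellation is legitimate --- and which the TRPO-style lemma actually bounds, by $4\alpha(1-(1-\alpha)^t)\epsilon$ --- is the \emph{difference} $\bigl\vert\mathbb{E}_{s_t\sim\pi_1}[\bar A(s_t)]-\mathbb{E}_{s_t\sim\pi_2}[\bar A(s_t)]\bigr\vert$, where the same function is integrated against two conditional laws that coincide on $\{N_t=0\}$. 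So either the lemma must be restated in that difference form (with the downstream proof of Lemma~\ref{lem:assumption-improvement} adjusted accordingly, e.g.\ absorbing the extra $\mathbb{E}_{s_t\sim\pi_b}$ term via $L_k$ and Assumption~\ref{ass:1}), or an argument different from the one you outline is required; the coupling route you propose cannot deliver the stated one-sided bound.
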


The proof details of Lemma~\ref{lem:adv_bound} can be found in~\cite{kang2018policy}, and we omit them here.

\firstass*

Now, we will derive the performance improvement bound described in Lemma~\ref{lem:assumption-improvement}. The proof of Lemma~\ref{lem:assumption-improvement} is described below. 

\begin{proof}[Proof of Lemma~\ref{lem:assumption-improvement}]
    Since $\beta=D_{\rm KL}^{\max}(\tilde{\pi}, \pi_b)=\max_s D_{\rm KL}(\tilde{\pi}(\cdot\vert s), \pi_b(\cdot\vert s))$, considering $D_{\rm TV}^2(p, q)\le D_{\rm KL}(p, q)$~\cite{pollard2000asymptopia}, we assume $\alpha=D_{\rm TV}^{\max}(\tilde{\pi}, \pi_b)$ and $\alpha$ satisfies $\alpha^2\le\beta$. According to Lemma~\ref{lem:TV_alpha}, $\pi_b$, $\tilde{\pi}$ are $\alpha$-coupled. Furthermore, it is worth mentioning that the expert policy $\pi_b$ satisfies Assumption~\ref{ass:1}, which means
    \begin{equation}
        \mathbb{E}_{a\sim\pi_b}\left[A_{k}(s,a)\right]\ge\Delta>0
    \end{equation}
    Consider introducing $J(\pi_b)$ into $J(\tilde{\pi})$ and $J(\pi)$ and applying Lemma~\ref{lem:performancediff} to $\tilde{\pi}$ and $\pi_b$, then we have 
    \begin{align}
        &J(\tilde{\pi}) - J(\pi)\\
        = &J(\tilde{\pi}) - J(\pi_b) + J(\pi_b) -J(\pi) \\ 
        = &\underset{\tau\sim\rho_{\tilde{\pi}}}{\mathbb{E}}\left[\sum^\infty_{t=0}\gamma^t A^{\tilde{\pi}\vert\pi}(s_t)\right] - \underset{\tau\sim\rho_{b}}{\mathbb{E}}\left[\sum^\infty_{t=0}\gamma^t A^{\pi_b\vert\pi}(s_t)\right]\\
        &+ \underset{\tau\sim\rho_{b}}{\mathbb{E}}\left[\sum^\infty_{t=0}\gamma^t A^{\pi_b\vert\pi}(s_t)\right]\\
        = &\underset{\tau\sim\rho_{\tilde{\pi}}}{\mathbb{E}}\left[\sum^\infty_{t=0}\gamma^t A^{\tilde{\pi}\vert\pi_b}(s_t)\right] + \underset{\tau\sim\rho_{b}}{\mathbb{E}}\left[\sum^\infty_{t=0}\gamma^t A^{\pi_b\vert\pi}(s_t)\right]\\
        \ge &-\sum^\infty_{t=0}\gamma^t\left\vert\underset{\tau\sim\rho_{\tilde{\pi}}}{\mathbb{E}} A^{\tilde{\pi}\vert\pi_b}(s_t)\right\vert + \underset{\tau\sim\rho_{b}}{\mathbb{E}}\left[\sum^\infty_{t=0}\gamma^t A^{\pi_b\vert\pi}(s_t)\right]\\
        \ge &-\sum^\infty_{t=0}\gamma^t\left(2\alpha(1-(1-\alpha)^t)\max_{s,a}\vert A_{\pi_b}(s,a)\vert - \Delta\right) \\
        = &-\frac{2\alpha^2\gamma\epsilon_b}{(1-\gamma)(1-\gamma(1-\alpha))} +\frac{\Delta}{1-\gamma}\\
        \ge &-\frac{2\alpha^2\gamma\epsilon_b}{(1-\gamma)^2} +\frac{\Delta}{1-\gamma},
    \end{align}
    where $\epsilon_b = \max_{s,a}\vert A_b(s,a)\vert$ is the maximum absolute value of $\pi_b$'s advantage function $A_b(s,a)$. Noting $\beta=D_{\rm KL}^{\max}(\tilde{\pi}, \pi_b)$ and applying $D_{\rm TV}^2(p, q)\le D_{\rm KL}(p, q)$~\cite{pollard2000asymptopia} gives
    \begin{equation}
        J(\tilde{\pi}) - J(\pi) \ge -\frac{2\beta\gamma\epsilon_b}{(1-\gamma)^2} +\frac{\Delta}{1-\gamma}.
    \end{equation}
\end{proof}

\begin{table*}[htb]
  \centering
  \caption{Hyperparameter choices of LOPE for different tasks.}
    \begin{center}
      \begin{small}
        \begin{tabular}{lcccccc}
          \toprule
          \bf{Hyperparameter} & \bf{GridWorld} & \bf{AntMaze} & \bf{HalfCheetah} & \bf{Hopper} & \bf{PointMaze-Medium} & \bf{PointMaze-Large} \\
          \midrule
          Value Function Learning Rate & $0.01$ & $0.001$ & $0.001$ & $0.001$ & 1.2e-4 & 1.2e-4\\
          Policy Learning Rate & $0.000015$ & $0.00009$ & $0.00009$ & $0.0004$ & 1.8e-5 & 1.8e-5\\
          Actor Regularization & None & None & None & None & None & None  \\
          Policy Optimizer & Adam & Adam & Adam & Adam & Adam & Adam\\
          Value Function Optimizer & Adam & Adam & Adam & Adam & Adam & Adam \\
          Max Length per Episode & $240$ & $750$ & $500$ & $500$ & 250 & 600\\ 
          Episodes per iteration & $8$ & $30$ & $20$ & $20$ & 12 & 12\\
          Epochs per iteration & $80$ & $65$ & $80$ & $80$ & 65 & 65\\
          Discount Factor & $0.99$ & $0.99$ & $0.99$ & $0.99$ & 0.99 & 0.99\\
          Normalized Observations & False  & False  & False  & False & False  & False \\
          Gradient Clipping & False & False & False & False & False & False \\
          Initial Exploration Policy & None  & $N(0, 0.17)$ & $N(0, 0.15)$ & $N(0, 0.65)$ & $N(0, 0.65)$ & $N(0, 0.75)$\\ 
          Clipped Epsilon & $0.3$ & $0.21$ & $0.2$ & $0.27$ & 0.4 & 0.4\\ 
          \bottomrule
        \end{tabular}
      \end{small}
    \end{center}
    \label{table:parameter}
\end{table*}

Finally, we describe the proof of Theorem~\ref{thm:bound}.
\begin{proof}[Proof of Theorem~\ref{thm:bound}]
  Combining Proposition~\ref{prop:policy_improve} and Lemma~\ref{lem:assumption-improvement}, we come to conclude.
\end{proof}

\section{Neural Network Architectures and Hyperparameters}
\label{sec:nn_parameters}
\subsection{Neural Network Architectures}
LOPE policy architecture for discrete control tasks:
\begin{verbatim}
  (state dim, 64)
  Tanh 
  (64, 64)
  Tanh 
  (64, action num) 
  Softmax 
\end{verbatim}

LOPE policy architecture for continuous control and navigation tasks:
\begin{verbatim}
  (state dim, 64)
  Tanh 
  (64, 64)
  Tanh 
  (64, action dim) 
  Tanh
\end{verbatim}

LOPE value function architecture:
\begin{verbatim}
  (state dim, 64)
  Tanh 
  (64, 64)
  Tanh 
  (64, 1) 
\end{verbatim}

\subsection{Hyper-parameter settings}
The hyperparameter settings for different tasks are presented in Table~\ref{table:parameter}.

\section{Additional Implementation Details}
\label{sec:details}
For clarity in the presentation, we omit certain implementation details about experiment environment settings in Section~\ref{sec:setup} and describe them here. The locomotion control and maze navigation tasks were implemented using MuJoCo~\cite{todorov2012mujoco} and D4RL~\cite{fu2020d4rl}, which are widely used for developing and evaluating reinforcement learning algorithms. We ran the Hopper and HalfCheetah agents in a straight channel. Their observations consisted of the agent's joint angle information. These agents can receive sparse rewards by reaching the specified locations. We set the length of the channel to 10 units for SparseHopper and 90 units for SparseHalfCheetah. These robots were asked to move as far forward as possible. In the AntMaze-Umaze task, the ant robot was required to walk through the U-shaped maze, as shown in Fig.~\ref{fig:ant_maze}. Its observations consisted of the agent's joint angle information and task-specific attributes. This maze is a benchmarking test environment for maze navigation problems. In the PointMaze tasks, a 2-DoF ball is force-actuated in the Cartesian directions x and y to reach a target goal in a closed maze.
}

\section{Additional experimental results}
\subsection{MMD distance changing trend}
We investigate the changing trend of the MMD distance between the agent's policy $\pi$ and the preferred trajectories in $\mathcal{P}$. Fig.~\ref{fig:grid_mmd} shows the changing trend of the MMD distance between $\pi$ and $\mathcal{P}$ during the training process of LOPE in the grid-world task. In the early stage of the training process, a considerable MMD distance was maintained between $\pi$ and $\pi_b$. Then, the MMD distance dropped rapidly. Fig.~\ref{fig:ant_mmd} shows the changing trend of the MMD distance between $\pi$ and $\pi_b$ during the training process of LOPE in the AntMaze task. The MMD distance decreases gradually during training. Intuitively, the policy $\pi_b$ implied by the preference dataset $\mathcal{P}$ is superior to the current policy $\pi$. Updating the policy parameters towards $\pi_b$ guarantees a higher performance improvement bound. 
\begin{figure}[htb]
  \centering
  \subfloat[]{
    \label{fig:grid_mmd}
    \includegraphics[width=4.2cm]{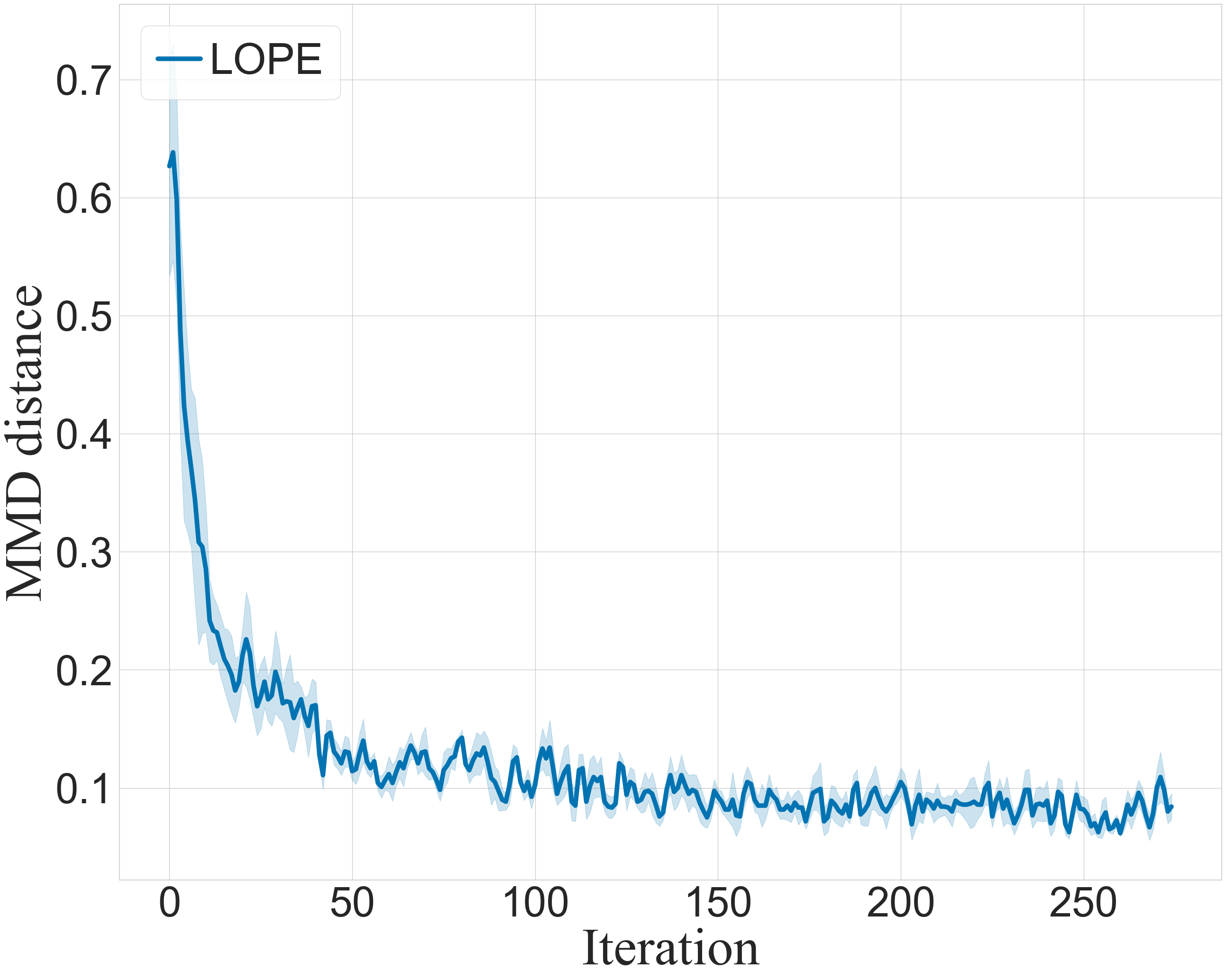}
    }
  \subfloat[]{
    \label{fig:ant_mmd}
    \includegraphics[width=4.2cm]{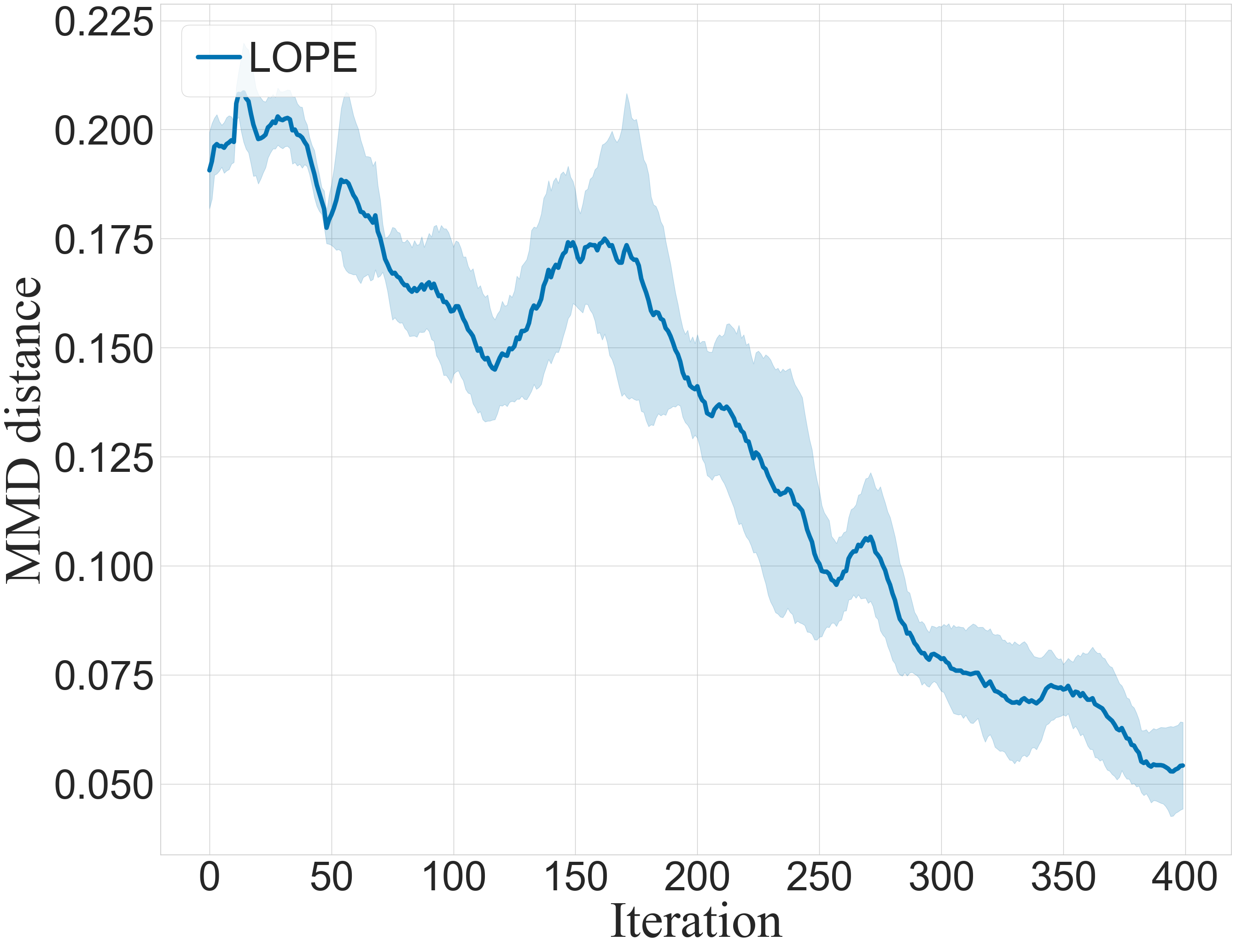}
    }
  \caption{Changing trends of MMD distance in the fixed grid-world maze and ant maze.}
\end{figure}

\bibliographystyle{IEEEtran}
\bibliography{reference}

@book{sutton2018reinforcement,
  title={Reinforcement learning: An introduction},
  author={Sutton, Richard S and Barto, Andrew G},
  year={2018},
  publisher={MIT press}
}

@article{gretton2012optimal,
  title={Optimal kernel choice for large-scale two-sample tests},
  author={Gretton, Arthur and Sejdinovic, Dino and Strathmann, Heiko and Balakrishnan, Sivaraman and Pontil, Massimiliano and Fukumizu, Kenji and Sriperumbudur, Bharath K},
  journal={Advances in neural information processing systems},
  volume={25},
  year={2012}
}

@article{masood2019diversity,
  title={Diversity-inducing policy gradient: Using maximum mean discrepancy to find a set of diverse policies},
  author={Masood, Muhammad A and Doshi-Velez, Finale},
  journal={arXiv preprint arXiv:1906.00088},
  year={2019}
}

@article{wirth2017survey,
  title={A survey of preference-based reinforcement learning methods},
  author={Wirth, Christian and Akrour, Riad and Neumann, Gerhard and F{\"u}rnkranz, Johannes and others},
  journal={Journal of Machine Learning Research},
  volume={18},
  number={136},
  pages={1--46},
  year={2017},
  publisher={Journal of Machine Learning Research/Massachusetts Institute of Technology~…}
}

@article{christiano2017deep,
  title={Deep reinforcement learning from human preferences},
  author={Christiano, Paul F and Leike, Jan and Brown, Tom and Martic, Miljan and Legg, Shane and Amodei, Dario},
  journal={Advances in neural information processing systems},
  volume={30},
  year={2017}
}

@article{Daniel2015ActiveRL,
  title={Active reward learning with a novel acquisition function},
  author={Christian Daniel and Oliver Kroemer and Malte Viering and Jan Metz and Jan Peters},
  journal={Autonomous Robots},
  year={2015},
  volume={39},
  pages={389-405}
}

@inproceedings{Asri2016ScorebasedIR,
  title={Score-based Inverse Reinforcement Learning},
  author={Layla El Asri and Bilal Piot and Matthieu Geist and Romain Laroche and Olivier Pietquin},
  booktitle={Adaptive Agents and Multi-Agent Systems},
  year={2016}
}

@inproceedings{Sugiyama2012PreferencelearningBI,
  title={Preference-learning based Inverse Reinforcement Learning for Dialog Control},
  author={Hiroaki Sugiyama and Toyomi Meguro and Yasuhiro Minami},
  booktitle={Interspeech},
  year={2012}
}

@article{Arumugam2019DeepRL,
  title={Deep Reinforcement Learning from Policy-Dependent Human Feedback},
  author={Dilip Arumugam and Jun Ki Lee and Sophie Saskin and Michael L. Littman},
  journal={ArXiv},
  year={2019},
  volume={abs/1902.04257}
}

@article{Chen2022HumanintheloopPE,
  title={Human-in-the-loop: Provably Efficient Preference-based Reinforcement Learning with General Function Approximation},
  author={Xiaoyu Chen and Han Zhong and Zhuoran Yang and Zhaoran Wang and Liwei Wang},
  journal={ArXiv},
  year={2022},
  volume={abs/2205.11140}
}

@inproceedings{Novoseller2019DuelingPS,
  title={Dueling Posterior Sampling for Preference-Based Reinforcement Learning},
  author={Ellen R. Novoseller and Yanan Sui and Yisong Yue and Joel W. Burdick},
  booktitle={Conference on Uncertainty in Artificial Intelligence},
  year={2019}
}

@article{Xu2020PreferencebasedRL,
  title={Preference-based Reinforcement Learning with Finite-Time Guarantees},
  author={Yichong Xu and Ruosong Wang and Lin F. Yang and Aarti Singh and Artur W. Dubrawski},
  journal={ArXiv},
  year={2020},
  volume={abs/2006.08910}
}

@article{Pacchiano2021DuelingRR,
  title={Dueling RL: Reinforcement Learning with Trajectory Preferences},
  author={Aldo Pacchiano and Aadirupa Saha and Jonathan Lee},
  journal={ArXiv},
  year={2021},
  volume={abs/2111.04850}
}

@article{kang2023beyond,
  title={Beyond Reward: Offline Preference-guided Policy Optimization},
  author={Kang, Yachen and Shi, Diyuan and Liu, Jinxin and He, Li and Wang, Donglin},
  journal={arXiv preprint arXiv:2305.16217},
  year={2023}
}

@inproceedings{parksurf,
  title={SURF: Semi-supervised Reward Learning with Data Augmentation for Feedback-efficient Preference-based Reinforcement Learning},
  author={Park, Jongjin and Seo, Younggyo and Shin, Jinwoo and Lee, Honglak and Abbeel, Pieter and Lee, Kimin},
  booktitle={International Conference on Learning Representations},
  year={2022}
}

@inproceedings{hester2018deep,
  title={Deep q-learning from demonstrations},
  author={Hester, Todd and Vecerik, Matej and Pietquin, Olivier and Lanctot, Marc and Schaul, Tom and Piot, Bilal and Horgan, Dan and Quan, John and Sendonaris, Andrew and Osband, Ian and others},
  booktitle={Proceedings of the AAAI Conference on Artificial Intelligence},
  year={2018}
}

@article{schaal1996learning,
  title={Learning from demonstration},
  author={Schaal, Stefan},
  journal={Advances in neural information processing systems},
  volume={9},
  year={1996}
}

@article{nair2020awac,
  title={Awac: Accelerating online reinforcement learning with offline datasets},
  author={Nair, Ashvin and Gupta, Abhishek and Dalal, Murtaza and Levine, Sergey},
  journal={arXiv preprint arXiv:2006.09359},
  year={2020}
}

@inproceedings{rengarajanreinforcement,
  title={Reinforcement Learning with Sparse Rewards using Guidance from Offline Demonstration},
  author={Rengarajan, Desik and Vaidya, Gargi and Sarvesh, Akshay and Kalathil, Dileep and Shakkottai, Srinivas},
  booktitle={International Conference on Learning Representations},
  year={2022}
}

@article{rajeswaranlearning,
  title={Learning Complex Dexterous Manipulation with Deep Reinforcement Learning and Demonstrations},
  author={Rajeswaran, Aravind and Kumar, Vikash and Gupta, Abhishek and Vezzani, Giulia and Schulman, John and Todorov, Emanuel and Levine, Sergey},
  journal={Robotics: Science and Systems},
  year={2017}
}

@inproceedings{oh2018self,
  title={Self-imitation learning},
  author={Oh, Junhyuk and Guo, Yijie and Singh, Satinder and Lee, Honglak},
  booktitle={International Conference on Machine Learning},
  pages={3878--3887},
  year={2018},
  organization={PMLR}
}

@inproceedings{gangwani2019learning,
  title={Learning self-imitating diverse policies},
  author={Gangwani, Tanmay and Liu, Qiang and Peng, Jian},
  booktitle={7th International Conference on Learning Representations, ICLR 2019},
  year={2019}
}

@inproceedings{libardi2021guided,
  title={Guided exploration with proximal policy optimization using a single demonstration},
  author={Libardi, Gabriele and De Fabritiis, Gianni and Dittert, Sebastian},
  booktitle={International Conference on Machine Learning},
  pages={6611--6620},
  year={2021},
  organization={PMLR}
}

@inproceedings{efroni2021reinforcement,
  title={Reinforcement learning with trajectory feedback},
  author={Efroni, Yonathan and Merlis, Nadav and Mannor, Shie},
  booktitle={Proceedings of the AAAI Conference on Artificial Intelligence},
  pages={7288--7295},
  year={2021}
}

@article{abbasi2011improved,
  title={Improved algorithms for linear stochastic bandits},
  author={Abbasi-Yadkori, Yasin and P{\'a}l, D{\'a}vid and Szepesv{\'a}ri, Csaba},
  journal={Advances in neural information processing systems},
  volume={24},
  year={2011}
}

@article{chatterji2021theory,
  title={On the theory of reinforcement learning with once-per-episode feedback},
  author={Chatterji, Niladri and Pacchiano, Aldo and Bartlett, Peter and Jordan, Michael},
  journal={Advances in Neural Information Processing Systems},
  pages={3401--3412},
  year={2021}
}

@inproceedings{russac2021self,
  title={Self-concordant analysis of generalized linear bandits with forgetting},
  author={Russac, Yoan and Faury, Louis and Capp{\'e}, Olivier and Garivier, Aur{\'e}lien},
  booktitle={International Conference on Artificial Intelligence and Statistics},
  pages={658--666},
  year={2021},
  organization={PMLR}
}

@inproceedings{azar2017minimax,
  title={Minimax regret bounds for reinforcement learning},
  author={Azar, Mohammad Gheshlaghi and Osband, Ian and Munos, R{\'e}mi},
  booktitle={International Conference on Machine Learning},
  pages={263--272},
  year={2017},
  organization={PMLR}
}

@article{xu2022provably,
  title={Provably efficient offline reinforcement learning with trajectory-wise reward},
  author={Xu, Tengyu and Liang, Yingbin},
  journal={arXiv preprint arXiv:2206.06426},
  year={2022}
}

@article{ibarz2018reward,
  title={Reward learning from human preferences and demonstrations in atari},
  author={Ibarz, Borja and Leike, Jan and Pohlen, Tobias and Irving, Geoffrey and Legg, Shane and Amodei, Dario},
  journal={Advances in neural information processing systems},
  volume={31},
  year={2018}
}

@article{stiennon2020learning,
  title={Learning to summarize with human feedback},
  author={Stiennon, Nisan and Ouyang, Long and Wu, Jeffrey and Ziegler, Daniel and Lowe, Ryan and Voss, Chelsea and Radford, Alec and Amodei, Dario and Christiano, Paul F},
  journal={Advances in Neural Information Processing Systems},
  volume={33},
  pages={3008--3021},
  year={2020}
}

@article{wu2021recursively,
  title={Recursively summarizing books with human feedback},
  author={Wu, Jeff and Ouyang, Long and Ziegler, Daniel M and Stiennon, Nisan and Lowe, Ryan and Leike, Jan and Christiano, Paul},
  journal={arXiv preprint arXiv:2109.10862},
  year={2021}
}

@inproceedings{wirth2016model,
  title={Model-free preference-based reinforcement learning},
  author={Wirth, Christian and F{\"u}rnkranz, Johannes and Neumann, Gerhard},
  booktitle={Proceedings of the AAAI Conference on Artificial Intelligence},
  volume={30},
  year={2016}
}

@article{mnih2015human,
  title={Human-level control through deep reinforcement learning},
  author={Mnih, Volodymyr and Kavukcuoglu, Koray and Silver, David and Rusu, Andrei A and Veness, Joel and Bellemare, Marc G and Graves, Alex and Riedmiller, Martin and Fidjeland, Andreas K and Ostrovski, Georg and others},
  journal={nature},
  volume={518},
  number={7540},
  pages={529--533},
  year={2015},
  publisher={Nature Publishing Group}
}

@inproceedings{mnih2016asynchronous,
  title={Asynchronous methods for deep reinforcement learning},
  author={Mnih, Volodymyr and Badia, Adria Puigdomenech and Mirza, Mehdi and Graves, Alex and Lillicrap, Timothy and Harley, Tim and Silver, David and Kavukcuoglu, Koray},
  booktitle={International conference on machine learning},
  pages={1928--1937},
  year={2016},
  organization={PMLR}
}

@article{silver2016mastering,
  title={Mastering the game of Go with deep neural networks and tree search},
  author={Silver, David and Huang, Aja and Maddison, Chris J and Guez, Arthur and Sifre, Laurent and Van Den Driessche, George and Schrittwieser, Julian and Antonoglou, Ioannis and Panneershelvam, Veda and Lanctot, Marc and others},
  journal={nature},
  volume={529},
  number={7587},
  pages={484--489},
  year={2016},
  publisher={Nature Publishing Group}
}

@inproceedings{schulman2015trust,
  title={Trust region policy optimization},
  author={Schulman, John and Levine, Sergey and Abbeel, Pieter and Jordan, Michael and Moritz, Philipp},
  booktitle={International conference on machine learning},
  pages={1889--1897},
  year={2015},
  organization={PMLR}
}

@article{yang2021exploration,
  title={Exploration in deep reinforcement learning: a comprehensive survey},
  author={Yang, Tianpei and Tang, Hongyao and Bai, Chenjia and Liu, Jinyi and Hao, Jianye and Meng, Zhaopeng and Liu, Peng and Wang, Zhen},
  journal={arXiv preprint arXiv:2109.06668},
  year={2021}
}

@article{Jeon2020RewardrationalC,
  title={Reward-rational (implicit) choice: A unifying formalism for reward learning},
  author={Hong Jun Jeon and Smitha Milli and Anca D. Dragan},
  journal={ArXiv},
  year={2020},
  volume={abs/2002.04833}
}

@article{HadfieldMenell2017InverseRD,
  title={Inverse Reward Design},
  author={Dylan Hadfield-Menell and Smitha Milli and P. Abbeel and Stuart J. Russell and Anca D. Dragan},
  journal={ArXiv},
  year={2017},
  volume={abs/1711.02827}
}

@article{Wang2023LearningDP,
  title={Learning Diverse Policies with Soft Self-Generated Guidance},
  author={Guojian Wang and Faguo Wu and Xiao Zhang and Jianxiang Liu},
  journal={International Journal of Intelligent Systems},
  year={2023}
}

@article{schulman2017proximal,
  title={Proximal policy optimization algorithms},
  author={Schulman, John and Wolski, Filip and Dhariwal, Prafulla and Radford, Alec and Klimov, Oleg},
  journal={arXiv preprint arXiv:1707.06347},
  year={2017}
}

@inproceedings{kang2018policy,
  title={Policy optimization with demonstrations},
  author={Kang, Bingyi and Jie, Zequn and Feng, Jiashi},
  booktitle={International conference on machine learning},
  pages={2469--2478},
  year={2018},
  organization={PMLR}
}

@inproceedings{achiam2017constrained,
  title={Constrained policy optimization},
  author={Achiam, Joshua and Held, David and Tamar, Aviv and Abbeel, Pieter},
  booktitle={International conference on machine learning},
  pages={22--31},
  year={2017},
  organization={PMLR}
}

@inproceedings{abdelkareem2022advances,
  title={Advances in preference-based reinforcement learning: A review},
  author={Abdelkareem, Youssef and Shehata, Shady and Karray, Fakhri},
  booktitle={2022 IEEE International Conference on Systems, Man, and Cybernetics (SMC)},
  pages={2527--2532},
  year={2022},
  organization={IEEE}
}

@article{lee2021pebble,
  title={Pebble: Feedback-efficient interactive reinforcement learning via relabeling experience and unsupervised pre-training},
  author={Lee, Kimin and Smith, Laura and Abbeel, Pieter},
  journal={arXiv preprint arXiv:2106.05091},
  year={2021}
}

@inproceedings{todorov2012mujoco,
  title={MuJoCo: A physics engine for model-based control},
  author={Todorov, Emanuel and Erez, Tom and Tassa, Yuval},
  booktitle={2012 IEEE/RSJ International Conference on Intelligent Robots and Systems},
  pages={5026--5033},
  year={2012},
  organization={IEEE},
  doi={10.1109/IROS.2012.6386109}
}

@article{fortunato2017noisy,
  title={Noisy networks for exploration},
  author={Fortunato, Meire and Azar, Mohammad Gheshlaghi and Piot, Bilal and Menick, Jacob and Osband, Ian and Graves, Alex and Mnih, Vlad and Munos, Remi and Hassabis, Demis and Pietquin, Olivier and others},
  journal={arXiv preprint arXiv:1706.10295},
  year={2017}
}

@article{guo2020memory,
  title={Memory based trajectory-conditioned policies for learning from sparse rewards},
  author={Guo, Yijie and Choi, Jongwook and Moczulski, Marcin and Feng, Shengyu and Bengio, Samy and Norouzi, Mohammad and Lee, Honglak},
  journal={Advances in Neural Information Processing Systems},
  volume={33},
  pages={4333--4345},
  year={2020}
}

@inproceedings{duan2016benchmarking,
  title={Benchmarking deep reinforcement learning for continuous control},
  author={Duan, Yan and Chen, Xi and Houthooft, Rein and Schulman, John and Abbeel, Pieter},
  booktitle={International conference on machine learning},
  pages={1329--1338},
  year={2016},
  organization={PMLR}
}

@article{tang2017exploration,
  title={\# exploration: A study of count-based exploration for deep reinforcement learning},
  author={Tang, Haoran and Houthooft, Rein and Foote, Davis and Stooke, Adam and Xi Chen, OpenAI and Duan, Yan and Schulman, John and DeTurck, Filip and Abbeel, Pieter},
  journal={Advances in neural information processing systems},
  volume={30},
  year={2017}
}

@article{lee2019efficient,
  title={Efficient exploration via state marginal matching},
  author={Lee, Lisa and Eysenbach, Benjamin and Parisotto, Emilio and Xing, Eric and Levine, Sergey and Salakhutdinov, Ruslan},
  journal={arXiv preprint arXiv:1906.05274},
  year={2019}
}

@inproceedings{ghasemipour2020divergence,
  title={A divergence minimization perspective on imitation learning methods},
  author={Ghasemipour, Seyed Kamyar Seyed and Zemel, Richard and Gu, Shixiang},
  booktitle={Conference on Robot Learning},
  pages={1259--1277},
  year={2020},
  organization={PMLR}
}

@article{furuta2021generalized,
  title={Generalized decision transformer for offline hindsight information matching},
  author={Furuta, Hiroki and Matsuo, Yutaka and Gu, Shixiang Shane},
  journal={arXiv preprint arXiv:2111.10364},
  year={2021}
}

@inproceedings{sharma2022state,
  title={A State-Distribution Matching Approach to Non-Episodic Reinforcement Learning},
  author={Sharma, Archit and Ahmad, Rehaan and Finn, Chelsea},
  booktitle={International Conference on Machine Learning},
  pages={19645--19657},
  year={2022},
  organization={PMLR}
}

@article{kostrikov2019imitation,
  title={Imitation learning via off-policy distribution matching},
  author={Kostrikov, Ilya and Nachum, Ofir and Tompson, Jonathan},
  journal={arXiv preprint arXiv:1912.05032},
  year={2019}
}

@article{liang2022reward,
  title={Reward uncertainty for exploration in preference-based reinforcement learning},
  author={Liang, Xinran and Shu, Katherine and Lee, Kimin and Abbeel, Pieter},
  journal={arXiv preprint arXiv:2205.12401},
  year={2022}
}

@article{guo2018generative,
  title={Generative adversarial self-imitation learning},
  author={Guo, Yijie and Oh, Junhyuk and Singh, Satinder and Lee, Honglak},
  journal={arXiv preprint arXiv:1812.00950},
  year={2018}
}

@misc{1606.01540,
  Author = {Greg Brockman and Vicki Cheung and Ludwig Pettersson and Jonas Schneider and John Schulman and Jie Tang and Wojciech Zaremba},
  Title = {OpenAI Gym},
  Year = {2016},
  Eprint = {arXiv:1606.01540},
}

@article{wang2024trajectory,
  title={Trajectory-Oriented Policy Optimization with Sparse Rewards},
  author={Wang, Guojian and Wu, Faguo and Zhang, Xiao},
  journal={arXiv preprint arXiv:2401.02225},
  year={2024}
}

@book{levin2017markov,
  title={Markov chains and mixing times},
  author={Levin, David A and Peres, Yuval},
  volume={107},
  year={2017},
  publisher={American Mathematical Soc.}
}

@article{pollard2000asymptopia,
  title={Asymptopia: an exposition of statistical asymptotic theory},
  author={Pollard, David},
  journal={URL http://www. stat. yale. edu/pollard/Books/Asymptopia},
  year={2000}
}

@article{kim2023preference,
  title={Preference transformer: Modeling human preferences using transformers for rl},
  author={Kim, Changyeon and Park, Jongjin and Shin, Jinwoo and Lee, Honglak and Abbeel, Pieter and Lee, Kimin},
  journal={arXiv preprint arXiv:2303.00957},
  year={2023}
}

@article{fu2020d4rl,
  title={D4rl: Datasets for deep data-driven reinforcement learning},
  author={Fu, Justin and Kumar, Aviral and Nachum, Ofir and Tucker, George and Levine, Sergey},
  journal={arXiv preprint arXiv:2004.07219},
  year={2020}
}

@article{driss2025pb,
  title={PB$^2$: Preference Space Exploration via Population-Based Methods in Preference-Based Reinforcement Learning},
  author={Driss, Brahim and Davey, Alex and Akrour, Riad},
  journal={arXiv preprint arXiv:2506.13741},
  year={2025}
}

@inproceedings{liu2023efficient,
  title={Efficient Preference-Based Reinforcement Learning Using Learned Dynamics Models},
  author={Liu, Yi and Datta, Gaurav and Novoseller, Ellen and Brown, Daniel S},
  booktitle={2023 IEEE International Conference on Robotics and Automation (ICRA)},
  pages={2921--2928},
  year={2023},
  organization={IEEE}
}

@article{biyik2022learning,
  title={Learning reward functions from diverse sources of human feedback: Optimally integrating demonstrations and preferences},
  author={B{\i}y{\i}k, Erdem and Losey, Dylan P and Palan, Malayandi and Landolfi, Nicholas C and Shevchuk, Gleb and Sadigh, Dorsa},
  journal={The International Journal of Robotics Research},
  volume={41},
  number={1},
  pages={45--67},
  year={2022},
  publisher={SAGE Publications Sage UK: London, England}
}

\vfill

\end{document}